\definecolor{DarkGreen}{rgb}{0.1,0.5,0.1}
\definecolor{DarkRed}{rgb}{0.5,0.1,0.1}
\definecolor{DarkBlue}{rgb}{0.1,0.1,0.5}
\newcommand\N{\mathbb{N}}
\newcommand{\cA}{\mathcal{A}}
\newcommand{\cD}{\mathcal{D}}
\newcommand{\cE}{\mathcal{E}}
\newcommand{\cO}{\mathcal{O}}
\newcommand{\poly}{\mathrm{poly}}
\newcommand{\bits}{\{0,1\}}
\newcommand{\getsr}{\gets_{\mbox{\tiny R}}}
\newcommand{\set}[1]{\left\{#1\right\}} 
\newcommand{\from}{:}
\newcommand{\negl}{\mathrm{negl}}
\newcommand{\ex}[1]{\mathbb{E}\left[#1\right]}
\DeclareMathOperator*{\Expectation}{\mathbb{E}}
\DeclareMathOperator*{\E}{\mathbb{E}}
\newcommand{\Ex}[2]{\Expectation_{#1}\left[#2\right]}
\DeclareMathOperator*{\Probability}{\mathrm{Pr}}
\newcommand{\prob}[1]{\mathrm{Pr}\left[#1\right]}
\newcommand{\Prob}[2]{\Probability_{#1}\left[#2\right]}
\newcommand{\INDSTATE}[1][1]{\STATE\hspace{#1\algorithmicindent}}
\newcommand{\cryptoalg}[1]{\mathit{#1}}
\newcommand{\cryptoadv}[1]{\mathcal{#1}}
\newcommand{\gen}{\cryptoalg{Gen}}
\newcommand{\enc}{\cryptoalg{Enc}}
\newcommand{\dec}{\cryptoalg{Dec}}
\newcommand{\trace}{\cryptoalg{Trace}}
\newcommand{\encscheme}{\cryptoalg{Enc}}
\newcommand{\encgen}{\cryptoalg{\gen}}
\newcommand{\encenc}{\cryptoalg{\enc}}
\newcommand{\encdec}{\cryptoalg{\dec}}
\newcommand{\fpcscheme}{\cryptoalg{FPC}}
\newcommand{\fpcgen}{\cryptoalg{\fpcscheme.\gen}}
\newcommand{\fpctrace}{\cryptoalg{\fpcscheme.\trace}}
\newcommand{\security}{\lambda}
\newcommand{\oracle}{\cO}
\newcommand{\cryptogame}[1]{\mathsf{#1}}
\newcommand{\realgame}{\cryptogame{Attack}}
\newcommand{\idealgame}{\cryptogame{IdealAttack}}
\newcommand{\accuracygame}{\cryptogame{Acc}}
\newcommand{\CPAgame}{\cryptogame{CPAGame}}
\newcommand{\bnpgame}{\cryptogame{NonPrivacy}}
\newcommand{\bnpattack}{\cryptogame{PrivacyAttack}}
\newcommand{\idealbnpattack}{\cryptogame{IdealPrivacyAttack}}
\newcommand{\encadv}{\cryptoadv{B}}
\newcommand{\fpcadv}{\cryptoadv{A_{\fpcscheme}}}
\newcommand{\accadv}{\cryptoadv{A}}
\newcommand{\bnpadv}{\cryptoadv{A}_{\mathrm{priv}}}
\newcommand{\consistent}{\mathrm{Con}}
\newcommand{\sk}{sk}
\newcommand{\ssk}{\overline{\sk}}
\newcommand{\keylength}{\ell_{\encscheme}}
\newcommand{\encoracle}{\cE}
\newcommand{\ct}{c}
\newcommand{\pop}{p}
\newcommand{\sample}{n}
\newcommand{\length}{\ell_{\fpcscheme}}
\newcommand{\dimension}{d}
\newcommand{\dist}{\cD}
\newcommand{\queries}{k}
\newcommand{\query}{q}
\newtheorem{theorem}{Theorem}[section]
\newtheorem{lemma}[theorem]{Lemma}
\newtheorem{claim}[theorem]{Claim}
\theoremstyle{definition}
\newtheorem{definition}[theorem]{Definition}
\def\ShowAuthNotes{0}
\newcommand{\authnote}[2]{{ \footnotesize \bf{\color{DarkRed}[#1's Note:
{\color{DarkBlue}#2}]}}}
\newcommand{\authnote}[2]{}
\newcommand{\jnote}[1]{{\authnote{Jon} {#1}}}
\title{Preventing False Discovery in \\ Interactive Data Analysis is Hard}
\author{Moritz Hardt\thanks{IBM Research Almaden.  Email: \href{mhardt@us.ibm.com}{mhardt@us.ibm.com}} \and Jonathan Ullman\thanks{Harvard University School of Engineering and Applied Sciences and Center for Research on Computation and Society.  Supported by NSF grant CNS-1237235.  Email: \href{mailto:jullman@seas.harvard.edu}{jullman@seas.harvard.edu}}}
\begin{document}
\maketitle

\begin{abstract}
We show that, under a standard hardness assumption, there is no computationally
efficient algorithm that given $n$ samples from an unknown distribution can give
valid answers to $n^{3+o(1)}$ adaptively chosen statistical queries. A
statistical query asks for the expectation of a predicate over the underlying
distribution, and an answer to a statistical query is valid if it is ``close'' to
the correct expectation over the distribution.

Our result stands in stark contrast to the well known fact that exponentially
many statistical queries can be answered validly and efficiently if the
queries are chosen non-adaptively (no query may depend on the answers to
previous queries). Moreover, a recent work~\cite{DworkFHPRR14} shows how to
accurately answer exponentially many adaptively chosen statistical queries via
a computationally inefficient algorithm; and how to answer a quadratic number
of adaptive queries via a computationally efficient algorithm. The latter
result implies that our result is tight up to a linear factor in~$n.$

Conceptually, our result demonstrates that achieving statistical validity alone can be a source of computational intractability in adaptive settings.  For example, in the modern large collaborative research environment, data analysts typically choose a particular approach based on previous findings. False discovery occurs if a research finding is supported by the data but not by the underlying distribution. While the study of preventing false discovery in Statistics is decades old, to the best of our knowledge our result is the first to demonstrate a computational barrier. In particular, our result suggests that the perceived difficulty of preventing false discovery in today's collaborative research environment may be inherent. 
\end{abstract}

\vfill
\pagebreak

\tableofcontents

\vfill
\pagebreak

\section{Introduction} 
Empirical research is commonly done by testing multiple hypotheses on a finite
sample. A test outcome is deemed statistically signficant if it is unliked to have
occured by chance alone. False discovery arises if the analyst incorrectly declares 
an observation as statistically significant. For decades statisticians have been devising
methods for preventing false discovery, such as the widely used and highly
influential method for controlling the \emph{false discovery rate} due to
Benjamini and Hochberg~\cite{BenjaminiH79}.

Nevertheless the problem of false discovery persists across all empirical
sciences today. Popular articles report on an increasing number of invalid research findings.  
Why is it seemingly so difficult to prevent false discovery?
Today's practice of data analysis diverges from classical statistics in its massive scale,
heavy use of sophisticated algorithms, and large number of
participants in any given project. Importantly, the way modern data analysts
interact with the data set is inherently \emph{adaptive}---many design choices, including
the choice and tuning of the algorithm itself, depend on previous interactions with the
data set. An extreme example are data science competitions, in which hundreds of data scientists
analyze the same data set and repeatedly evaluate their approach against the
same data. This level of adaptivity makes it nearly impossible to give a
precise a priori description of the experimental setup. 

We formalize the way in which data analysts may
interact with a data set using the \emph{statistical-query
model (SQ model)} of Kearns~\cite{Kearns93}. In the statistical query model, there is an
algorithm called the \emph{oracle} that gets access to $n$ samples from an unknown
distribution ${\cal D}$ over some finite universe ${\cal X}.$ We will assume
throughout that ${\cal X}=\{0,1\}^d$ where we think of the parameter~$d$ as
the dimensionality of the data. A statistical
query $q$ is specified by a predicate $p\colon {\cal X}\to \{0,1\}$ and the
answer to a statistical query is defined as $q({\cal D}) = \E_{x\sim{\cal D}}
p(x).$ The goal of the oracle is to give an answer~$a$ to the query that is
\emph{accurate} (or \emph{statistically valid}) in the sense that $|a-q({\cal
D})| \le \alpha$ with high probability. Throughout our work we only require $\alpha$ to
be a small constant. Put differently, the goal of the oracle is to provide
answers that generalize to the underlying distribution rather than answers
that are specific to the sample. The latter is always easy to achieve by
outputting the empirical average of the query predicate on the sample.

The SQ model has a number of advantages for our purposes. First, almost all
natural machine learning algorithms can be compiled into a sequence of
statistical queries. Hence, the model does not give up much generality.
Second, it makes it convenient to formalize adaptivity. In the
adaptive/interactive setting, the analyst is modeled as an efficient algorithm
that given a sequence of queries and answers $q_1,a_1,q_2,a_2,\dots,q_i,a_i$
(previously exchanged with the
oracle) produces a new query~$q_{i+1}$.  
We say that an oracle is \emph{accurate} given $n$ samples for $k$ adaptively chosen
queries, if for every distribution ${\cal D},$ given $n$ samples from ${\cal
D}$ the oracle accurately responds to any computationally efficient adaptive
analyst that makes at most $k$ queries. A computationally efficient oracle
should run time polynomial in $n$ and $d$ on input of each query.

A recent work by Dwork, Feldman, Hardt, Pitassi, Reingold and Roth~\cite{DworkFHPRR14}
addresses the problem of answering adaptive statistical queries. Their main
result implies that there is a computationally inefficient oracle that
accurately answers even an exponential number of adaptively chosen statistical
queries. Moreover, they show that a quadratic number of queries can be
answered accurately and efficiently. Our main theorem shows that these results
are essentially as far as it goes. Under a standard cryptographic hardness
assumption, we show that there is no efficient oracle that is accurate on more
than a cubic number of adaptively chosen queries. 

\vfill


\begin{theorem} \label{thm:main1}
Assuming one-way functions exist, there is no computationally efficient oracle
that given $n$ samples is accurate on $n^{3+o(1)}$ 
adaptively chosen queries.
\end{theorem}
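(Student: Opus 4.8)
The plan is to build a computationally efficient adaptive analyst that, using a cryptographic primitive, forces any efficient oracle to either be inaccurate on some query or to ``leak'' enough information about its sample that the analyst can identify a predicate on which the oracle must fail. The natural tool is a \emph{fingerprinting code} (or equivalently, a traitor-tracing scheme), which exists under the assumption that one-way functions exist. Recall that a fingerprinting code gives a randomized codebook assigning to each of $n$ users a codeword in $\{0,1\}^{\length}$, together with a tracing algorithm with the following guarantee: given any ``pirate'' string that is \emph{feasible} for a coalition $S$ of users (i.e., agrees with the coalition on all coordinates where the coalition is unanimous), the tracer outputs a member of $S$, except with negligible probability. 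The crucial quantitative fact is that fingerprinting codes with codeword length $\length = \tilde{O}(n^2)$ exist, so that a sample of $n$ rows, each carrying a codeword, encodes $n \cdot \tilde O(n^2) = \tilde O(n^3)$ bits of ``traceable'' structure.

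The reduction proceeds as follows. First I would fix the distribution $\dist$ to be the uniform distribution over codewords of a fingerprinting code (embedded in $\{0,1\}^d$ with $d = \length$), conditioned on the codebook; equivalently, the analyst samples a fresh codebook and the oracle receives $n$ i.i.d.\ codewords. For each coordinate $j \in [\length]$, consider the statistical query $q_j$ with predicate $p_j(x) = x_j$; its true answer $q_j(\dist)$ is (close to) the fraction of codewords having a $1$ in position $j$, i.e., the ``marginal'' of the codebook in column $j$. The analyst asks all $\length$ of these queries (this is the bulk of the $n^{3+o(1)}$ queries), receives answers $a_1,\dots,a_{\length}$, and then \emph{rounds} each $a_j$ to a bit $b_j$ (say, $b_j = 1$ iff $a_j \ge 1/2$, after a suitable centering). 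Accuracy of the oracle on each $q_j$ to within a small constant $\alpha$ means that $b_j$ matches the majority bit of column $j$ whenever the column is not too balanced; a short preliminary step massages the code so that columns are bounded away from $1/2$, so that the string $b = (b_1,\dots,b_\length)$ is forced to agree with the unanimous value of the sample $S$ on every column where the sample is unanimous. Hence $b$ is a feasible pirate string for the coalition $S$ of sampled users. The analyst now runs the tracing algorithm on $b$ and, except with negligible probability, it outputs the identity of a user $i^* \in S$ whose entire codeword is thereby revealed.

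The final blow is that once the analyst knows a full row of the sample, it can ask a query that exposes inaccuracy: let $p^*(x) = 1$ iff $x$ equals the recovered codeword of user $i^*$. Since that codeword appears in the sample with multiplicity (at least) one, the empirical frequency is at least $1/n$; but the true probability under $\dist$ is $2^{-\Omega(\length)}$, which is negligible. The oracle cannot tell whether it is being asked about a genuinely heavy point or a point planted by tracing, so whatever it answers, it is either far from $1/n$ (but the empirical value being $\ge 1/n$ is evidence it should be large) or far from $0$. Making this ``the oracle is stuck'' argument precise is the main obstacle: one has to argue that no efficient oracle can distinguish the above scenario from one where it genuinely should answer $\approx 0$, and this is exactly where the cryptographic security of the fingerprinting code (ultimately the pseudorandomness coming from one-way functions) is used — the security game is set up so that an oracle that is accurate on all $n^{3+o(1)}$ queries yields a successful attack on the code's tracing guarantee, a contradiction. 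A few technical points need care: boosting the tracing success probability and the accuracy from ``constant $\alpha$'' to what the code needs (standard amplification, at the cost of only polylogarithmic query overhead), handling the $o(1)$ in the exponent (which comes from the $\polylog n$ factors in the optimal fingerprinting code length and in the amplification), and ensuring the analyst remains polynomial-time (the tracing algorithm and codebook generation are efficient under one-way functions). Assembling these pieces gives a polynomial-time analyst on $n^{3+o(1)}$ queries against which no efficient oracle is accurate, proving Theorem~\ref{thm:main1}.
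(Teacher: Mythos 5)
Your proposal correctly identifies the key cryptographic tool (fingerprinting codes, ultimately via one-way functions) and the high-level idea that accurate answers leak the sample to a tracing algorithm. However, there are two gaps in the middle of the argument, one of which is fatal and shows the proposal does not match the paper's approach.

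First, the final attack query does not force inaccuracy. You propose the query $p^*(x) = \mathbb{1}[x = c^*]$ where $c^*$ is the recovered codeword, and note that its empirical frequency in the sample is $\geq 1/n$ while its true probability is negligible. But the accuracy requirement is $|a - q(\dist)| \leq \alpha$ with respect to the \emph{distribution}, and $q^*(\dist) \approx 0$. The oracle can simply answer $0$ (or even the empirical mean $\approx 1/n$, which is also within $\alpha$) and remain perfectly accurate. Having ``caught'' the oracle leaking one sample point is not, by itself, a violation of accuracy. This ``stuck'' step is exactly where the real work has to happen, and the proposed query cannot do it.

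Second, and relatedly, the proposal traces only once, recovering a single element of $S$. This gives $\length = \tilde{O}(n^2)$ queries, not $n^{3+o(1)}$, and it does not shrink the oracle's uncertainty enough to mount an attack. The paper's proof is inherently iterative: it runs the fingerprinting attack for $R = n - O(1)$ rounds, in round $r$ excluding the already-traced set $T^{r-1}$ from the query definition so that a fresh element of $S$ is recovered each time. Only after nearly the entire sample has been reconstructed does the attack phase work: the analyst plants a random bit $\phi \in \{0, 1/500\}$ into a query whose definition is $0$ on all traced points, so that the oracle's answer depends only on the $\leq 500$ un-traced sample points, and with constant probability all of those carry the value $0$ regardless of $\phi$. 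At that point the oracle genuinely cannot tell $\phi = 0$ from $\phi = 1/500$, and $|\query^*(\dist) - \phi|$ is small, so any answer is wrong with constant probability. Without iterating to constant residual sample size, the attack-phase query has no teeth.

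Two further issues worth noting: (i) the oracle is accurate with respect to $\dist$, not with respect to its sample $S$, so a consistency condition based on the sample marginals $\E_{i \in S}[F_S(i,j)]$ requires a concentration argument that these are close to $\E_{i \in [\pop]}[F(i,j)]$ for most columns $j$ — this is why the paper uses \emph{error-robust} fingerprinting codes and proves Lemma~\ref{lem:fpcpoptosample}; your proposal does not address this mismatch. (ii) The paper encodes data points as index/key pairs $(i, \sk_i)$ and encrypts the fingerprinting bits inside the query circuit so that an arbitrary (not merely ``natural'') oracle cannot read off rows of $F$ outside $S$ from the query description; this is the step that actually invokes one-way functions. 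In your embedding (codewords directly as data), encryption appears unneeded, but because of issues (i) and especially the missing iteration and attack phase, the argument as written does not reach a contradiction.
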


An intuitive interpretation of the theorem is that if an efficient oracle attempts to
answer more than $n^3$ statistical queries it cannot in general
maintain that its answers are statistically valid with respect to the
underlying distribution. Of course, the oracle can always report the exact
answer of the query on its data set. However, this strategy does not maintain
accuracy on adaptive queries in general and---as our theorem shows---neither
does any other computationally efficient approach.  From a technical
perspective our result gives a strong computational lower bound in the
statistical query model. Lower bounds in the statistical query model have been
studied for more than two decades. But more broadly speaking, we interpret our
result as pointing at an inherent computational obstruction to preventing
false discovery in collaborative science.

Note that Theorem~\ref{thm:main1} stands in sharp contrast to the non-adaptive setting. If we fix
queries $q_1,\dots,q_k$ and then sample $n$ items from the
distribution~${\cal D},$ the observed empirical answer to each query on the
data set will be close to the correct answer with high probability so long as
$k = 2^{o(n)}$.  This guarantee follows from a Hoeffding bound together with
the union bound.

Our hardness result applies when the dimensionality of the data grows with the
sample size more than logarithmically so that $2^d$ is no longer polynomial
in~$n.$\footnote{This is under the
stronger but standard assumption that exponentially hard one-way-functions
exist.} 
\jnote{There is a simpler way to show $n \gg 2^d$ is inherent, I wanted to point it out since it will be easier for most readers to see.  OK?}
This requirement is rather mild, and is also necessary.  If $n \gg 2^d$ then the empirical distribution of the $n$ samples will be close to the underlying distribution in statistical distance, and thus every statistical query can be answered accurately given the sample.  More generally, as we discuss in Section~\ref{sec:dp}, there are algorithms that run in time polynomial in
$n$ and $2^d$ and provide accuracy even on an exponential number of adaptively chosen queries~\cite{DworkFHPRR14}.
Thus, our results show that the dimensionality of the data has a major effect on
the hardness of the problem. In fact, we provide a second theorem that shows
that if the dimensionality is polynomially large in~$n,$ 
then we cannot even hope for a \emph{computationally unbounded} oracle that
provides accuracy on adaptive queries.

\begin{theorem} \label{thm:main2}
There is no computationally unbounded oracle that given $n$ samples of dimension
$d= n^{3+o(1)}$ is accurate on $n^{3+o(1)}$ adaptively chosen queries.
\end{theorem}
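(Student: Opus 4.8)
\medskip
\noindent\textit{Proof plan for Theorem~\ref{thm:main2}.}
The plan is to reduce from \emph{fingerprinting codes}, the information-theoretic counterpart of the traitor-tracing schemes that drive Theorem~\ref{thm:main1}; this is exactly what lets us drop the one-way-functions assumption at the price of a polynomially (rather than super-logarithmically) large dimension. Recall that a fingerprinting code resilient to coalitions of size $n$ is given by a randomized generator producing a codebook $W \in (\bits^{\ell})^{N}$ (codewords for $N=\poly(n)$ users) together with a tracing key, and a tracing algorithm $\mathrm{Trace}$ with two guarantees. \emph{Completeness}: for every $S$ of at most $n$ users and every $c\in\bits^{\ell}$ in the \emph{feasible set} of $\{W_i\}_{i\in S}$ --- i.e.\ $c_j$ agrees with the common value of coordinate $j$ whenever all codewords in $S$ agree there --- $\mathrm{Trace}$ outputs some $i\in S$ except with negligible probability, and this remains true if a small constant fraction of the coordinates of $c$ are corrupted. \emph{Soundness}: for any $c$ produced without access to $W_i$ --- in particular any $c$ (nearly) independent of $W_i$ --- $\mathrm{Trace}$ names $i$ only with negligible probability. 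Such codes exist unconditionally, the parameters we will need yield $\ell = n^{3+o(1)}$, and tracing is efficient.

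Given the code, set $d=\ell=n^{3+o(1)}$ and let $\cD$ be the uniform distribution over the $N$ codewords of a freshly generated codebook $W$ (equivalently, the coordinate-wise product of Bernoullis with the code's column biases). Then $n$ i.i.d.\ samples from $\cD$ are just the codewords $W_{i_1},\dots,W_{i_n}$ of a uniformly random coalition $S$ of size $n$ (distinct, since $N$ is a sufficiently large polynomial in $n$), and $\cD$ treats all $N$ users symmetrically. The analyst --- which is efficient --- uses its $n^{3+o(1)}$ adaptively chosen queries to extract from the oracle's answers a single ``pirate'' string $c\in\bits^{\ell}$ with two competing features: $c$ should land in the feasible set of $\{W_i\}_{i\in S}$, so that completeness forces $\mathrm{Trace}(c)$ to name a member of $S$; yet the map producing $c$ should depend only negligibly on any individual codeword $W_i$, so that soundness forbids $\mathrm{Trace}(c)$ from naming $i$ --- for \emph{each} of the $N$ users. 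Combining these: by completeness (and the analysis that $c$ is feasible except with probability $o(1)$), $\mathrm{Trace}(c)\in S$ with probability $1-o(1)$; but by soundness and a union bound over the $n$ users of $S$, $\mathrm{Trace}(c)\in S$ with probability at most $n\cdot\negl$. These two estimates are contradictory once the code's soundness error is below $1/(3n)$. Since the only hypothesis used was that the oracle answered all of the analyst's queries to within the accuracy $\alpha$, we conclude that with constant probability some answer is off by more than $\alpha$; i.e.\ the oracle is not accurate.

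The step that requires real work --- and that fixes the exponent --- is engineering the analyst so that its output $c$ has both features simultaneously. Accuracy only guarantees that the oracle's reply to $q$ is within $\alpha$ of $q(\cD)$, i.e.\ of the \emph{population} quantity; so the naive $\ell$ coordinate queries $q_j(x)=x_j$ reveal only the population marginals, which (a) are essentially independent of which coalition was sampled --- fine for the soundness side --- but (b) carry no information correlating $c$ with the coalition's actual codewords, which is precisely what completeness needs in order to outrun the union bound over all $N$ users (a pirate correlated only with the marginals has, in the Tardos accounting, mean-zero tracing scores). Bridging this gap is the heart of the proof: the analyst must, across $n^{3+o(1)}$ carefully structured adaptive queries (plausibly organized into $\approx n$ rounds of $\approx n^{2}$ queries, each round squeezing out a bit more coalition-specific structure while keeping the accumulated dependence of $c$ on each codeword below the $1/\poly(n)$ threshold demanded by soundness), and the fingerprinting code must in addition be robust to the constant-size slack the oracle enjoys on the ``ambiguous'' coordinates. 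The remaining pieces --- verifying feasibility up to probability $o(1)$, handling rounding at the boundaries, and the union-bound bookkeeping --- are routine. Theorem~\ref{thm:main1} then follows from the same template with a traitor-tracing scheme in place of the fingerprinting code: the cryptographic hiding of keys lets the users' ``codewords'' be short, shrinking the required dimension to any $\omega(\log n)$ at the cost of assuming one-way functions.
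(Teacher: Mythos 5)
Your high-level intuition—drop one-way functions by letting the codewords themselves be the data points, pushing the dimension to $n^{3+o(1)}$—is on the right track, and it is a genuine simplification over the paper's route for Theorem~\ref{thm:main2}. (The paper itself proves Theorem~\ref{thm:main2} by re-running the Theorem~\ref{thm:main1} attack verbatim but replacing the computational encryption scheme with a one-time pad; the $d = \tilde O(n^3)$ comes from the one-time-pad keys having to cover all $\tilde O(n^3)$ encrypted bits. Your observation that direct coordinate queries on codeword-valued data leak nothing about codewords outside $S$, making the pad unnecessary, is morally correct.)

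The genuine gap is in the contradiction you set up. You argue that (i) by completeness, $\mathrm{Trace}(c)\in S$ with probability $1-o(1)$, while (ii) by soundness and a union bound over users in $S$, $\mathrm{Trace}(c)\in S$ with probability at most $n\cdot\negl$. But soundness only protects a user $i$ whose codeword the pirate has not seen, and the oracle \emph{does} see $W_i$ for every $i\in S$: those are precisely its $n$ samples. Accuracy constrains the oracle's output to be close to the population quantity; it does not force the oracle to be a near-constant function of its sample. Concretely, an oracle that simply returns the empirical mean of each coordinate over its sample is accurate in a single non-adaptive pass (Hoeffding + union bound), depends on each $W_i$, $i\in S$, with non-negligible weight $1/n$, and will be traced to a member of $S$—no contradiction. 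Your ``rounds'' are also aimed at the wrong target (keeping dependence on each codeword small rather than exploiting adaptivity). The actual contradiction in the paper has a different shape: an \emph{iterative recovery phase}—trace $n-O(1)$ users one by one, in each round excluding previously-traced users from the query definition (Figure~\ref{fig:attack}, Lemma~\ref{lem:realrecoverysucceeds})—followed by a separate \emph{attack phase} (Lemma~\ref{lem:realattacksucceeds}) in which, once the oracle's effective sample has shrunk to $O(1)$ unrecovered points, one final random biased query cannot be answered to within $1/2000$ with probability $1-o(1)$. That two-phase structure, not a single completeness/soundness collision, is where the theorem comes from, and it is the part your sketch waves at as ``engineering'' without supplying.
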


While the dimension in the previous theorem has to be large, there are
important data sets that exhibit this trade-off between sample size and
dimension. A good example are genome wide association studies (GWAS). Here,
the sample size corresponds to patients with a certain (possibly rare disease)
and is often in the hundreds. The dimensionality corresponds
to the number of relevent positions in the human genome and is often in the
millions. Moreover, the genome resolution is increasing rapidly with new
technology whereas the number of available patients is not.

\paragraph{Conclusion.}
To conclude this discussion of our results, we believe that adaptivity is an
essential element of modern data analysis that ought to be taken into
account by theoretical models. At the same time, our theorems demonstrate the
intrinsic difficulty of coping with adaptivity.

\subsection{Proof overview} \label{sec:proofoverview}

The intuition for our proof is rather simple. We will design a challenge
distribution~$\cD$ and a computationally
efficient adaptive analyst~${\cal A}$ so that the following is true. If any
compuationally efficient oracle~${\cal O}$ is given $n$ samples $S=\{x_1,\dots,
x_n\}$ drawn from ${\cal D}$ then our adaptive analyst~${\cal A}$ is able to 
reconstruct $n' = n-O(1)$ samples $\{y_1,\dots,y_{n'}\}\subseteq S.$ In other
words, the analyst is able to find all but a constant number of samples that
the oracle is using. While the analyst has a priori information about the
distribution ${\cal D}$ it has no information whatsoever about which sample
${\cal O}$ received. Nevertheless, the analyst can reconstruct essentially all
of the hidden sample. Quantitatively, the analyst proceeds in $n-O(1)$ rounds
and each round consists of roughly $n^2$ queries. In each round the analyst
successfully recovers one data item from the oracle provided that the oracle
continues to give accurate answers. After the analyst has recovered almost all
samples, the effective sample size of the oracle has shrunk down to a constant
size. At this point it is easy for the analyst to find queries on which the
oracle gives blatantly inaccurate answers. 

The first problem is to recover even a single data point inside the
oracle's sample. To solve this problem we rely on a cryptographic primitive known as a
fingerprinting code. Fingerprinting codes were introduced by Boneh and
Shaw~\cite{BonehS98} for the problem of watermarking digital content. A
fingerprinting code has two components. The first component generates a set of
``challenge queries.'' The second component is a ``tracing algorithm'' which takes
answers to these queries and returns a data item. The fingerprinting code
gives the guarantee that if the challenge queries are answered accurately, and
by looking only at how each challenge query is defined on~$S,$ then the tracing algorithm
will successfully recover one element in~$S.$ Unfortunately, in general nothing prevents
the oracle from evaluating the queries at points outside of~$S.$ In fact,
information-theoretically the challenge queries used in our attack reveal information about the unknown 
distribution~$\cD$ that the oracle didn't have previously.  
Evaluating the query outside the sample $S$ is somewhat unnatural. For
example, if the oracle simply outputs an empirical quantity that only depends
on the sample this situation will not arise. For such \emph{natural} oracles
our proof is somewhat easier and does not require any cryptographic
assumptions. We therefore present this illustrative special case in
Section~\ref{sec:natural}.

To obtain a result for all computationally bounded oracles, we need to hide
from the oracle the additional information that's revealed by the query
definition outside the sample.  To do so, we use an encryption
scheme to effectively hide the definition of the query on points outside
of~$S$ from the oracle. The encryption is sufficient to show that, assuming
that the oracle is computationally bounded, the tracing algorithm of the
fingerprinting code will succeed. We note that encryption schemes suitable for
our purpose exist under the standard assumption that one-way functions exist.
With this one-round approach in mind, we can proceed iteratively. In the next
round we exclude the previously learned data item from the definition of the
challenge queries, which ensures that the analyst learns a new item in each
round.

There is one important subtlety. The tracing algorithm of the fingerprinting
code will only succeed if the oracle answers the challenge queries accurately
with respect to its sample~$S.$ However, our assumption is that the oracle is
accurate with respect to the underlying distribution~$\cD$ rather than the
sample~$S.$ We need to worry that eventually the sample and the distribution
disagree on the challenge queries.  In this case the oracle may be inaccurate
on its sample (and hence tracing fails), yet still accurate on the
distribution.  To rule out this pathological situation we use a measure
concentration property of our specific choice of fingerprinting code.
Specifically, we the fact that the challenge queries of the code are
essentially random predicates with a certain bias. This property allows us to
use the randomness of the challenge queries to argue that the sample~$S$
approximately agrees with the distribution~$\cD$ on these queries with
sufficiently high probability so long as there are at least $O(1)$ elements in
the sample that we haven't reconstructed yet. Due to the approximation error
incurred here, we also need to use a somewhat stronger primitive called a
robust fingerprinting code that was just recently provided in work by Bun,
Ullman and Vadhan~\cite{BunUV14}, which also satisfies the necessary measure
concentration property.

\subsection{Connection to privacy and reconstruction attacks}
\label{sec:dp}

Our work builds on a close connection to the problem of designing
privacy-preserving oracles. Here, the goal is to provide answers to
statistical queries in such a way that the analyst does not learn the
specifics of individual data records but rather global properties of the
underlying distribution. A successful approach for formalizing this
desideratum is the notion of differential privacy~\cite{DworkMNS06}.
Differential privacy requires that the answers given by the oracle are
randomized in such a way that the presence or absence of any single data item
in the sample cannot be detected. It is known that differential privacy
prevents so-called \emph{reconstruction attacks}. A reconstruction attack is
an algorithm that is able to reconstruct most entries of a data set by
interacting with the oracle. Such an attack demonstrates that the oracle is
\emph{blatantly non-private} (it fails to satisfy not only differential privacy, but any reasonable notion of privacy). Our work can be considered an efficient
reconstruction attack as we give an efficient adaptive analyst that
reconstructs almost all of the data points that the oracle uses if the oracle
provides accuracy on $n^{3+o(1)}$ queries. An immediate consequence of our
work is therefore the following result.

\begin{theorem} \label{thm:main3}
Assuming one-way functions exist, any computationally efficient oracle
that given $n$ samples is accurate on $n^{3+o(1)}$ 
adaptively chosen queries must be blatantly non-private.
\end{theorem}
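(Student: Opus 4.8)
The plan is to read off Theorem~\ref{thm:main3} from the reconstruction attack that underlies Theorem~\ref{thm:main1}. The point is that the adaptive analyst~$\cA$ built to prove Theorem~\ref{thm:main1} is \emph{itself} a reconstruction attack: given black-box access to an oracle~$\cO$ that has received $n$ samples $S = \{x_1,\dots,x_n\}$ from the challenge distribution~$\cD$, the analyst outputs a set $\{y_1,\dots,y_{n'}\} \subseteq S$ with $n' = n - O(1)$, \emph{provided} $\cO$ answers all of $\cA$'s queries accurately with respect to~$\cD$. But accuracy on those $n^{3+o(1)}$ queries is exactly the hypothesis of Theorem~\ref{thm:main3}. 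So under that hypothesis $\cA$ reconstructs all but a constant number of the hidden data points, which is precisely what it means for~$\cO$ to be blatantly non-private. Thus it suffices to present and analyze the reconstruction phase of~$\cA$; the ``blatant inaccuracy'' phase used to sharpen this into the contradiction of Theorem~\ref{thm:main1} plays no role here.

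The reconstruction attack is built round by round. In a single round, $\cA$ runs the generation algorithm of a robust fingerprinting code~\cite{BunUV14} to produce a batch of roughly $n^2$ challenge predicates, encrypts the description of each predicate \emph{off the sample} using a symmetric-key encryption scheme (which exists assuming one-way functions exist), forwards the resulting statistical queries to~$\cO$, collects the answers, and runs the code's tracing algorithm on them to extract one element of~$S$. The encryption is what lets us argue, for a computationally bounded~$\cO$, that the queries leak no more about~$S$ than the fingerprinting code's codebook restricted to~$S$ already does; since the code's security is stated precisely for an adversary holding that much information, tracing returns an element of~$S$ with high probability. To iterate, $\cA$ removes the previously recovered items from the supports of the next round's challenge predicates, which forces the code to point at a fresh element of~$S$; after $n - O(1)$ rounds $\cA$ has recovered $n - O(1)$ distinct elements. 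A union bound over the rounds (each failing with probability negligible in~$\security$ once the code's error parameters are set appropriately) gives success with overwhelming probability, and $n - O(1)$ recovered items out of $n$ certainly meets the definition of a reconstruction attack / blatant non-privacy. The total query count is $n \cdot \tilde O(n^2) = n^{3+o(1)}$, matching the statement.

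The step I expect to be the main obstacle is reconciling the two notions of accuracy. The fingerprinting code's tracing guarantee requires answers accurate \emph{with respect to $\cO$'s sample}~$S$, whereas the hypothesis only gives accuracy \emph{with respect to the distribution}~$\cD$; if $S$ and~$\cD$ drift apart on the challenge predicates over the course of the attack, then~$\cO$ could remain accurate on~$\cD$ while becoming inaccurate on~$S$, and tracing would fail. To control this I would exploit that the challenge predicates of the code of~\cite{BunUV14} are essentially independent random predicates of a prescribed bias: a measure-concentration (Chernoff + union bound) argument over this randomness shows that $q(S)$ and $q(\cD)$ differ by at most the code's robustness slack on all challenge queries simultaneously, as long as the number of not-yet-reconstructed elements of~$S$ is still $\Omega(1)$. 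The \emph{robustness} of the code absorbs exactly this slack, so sample-accuracy — and hence successful tracing — is maintained throughout the $n - O(1)$ rounds of the reconstruction phase, which is all Theorem~\ref{thm:main3} needs.
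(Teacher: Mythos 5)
There is a genuine gap in the accuracy notion, and it propagates through the whole argument. In the paper, ``blatantly non-private'' (Section~5.1) is defined with respect to \emph{sample accuracy}: the reconstruction adversary $\bnpadv$ must succeed against every computationally efficient oracle that answers with $|\oracle(x,\query_j) - \query_j(x)| \le \alpha$, where $\query_j(x)$ is the empirical mean on the oracle's own sample. Your proposal instead assumes the oracle is accurate with respect to the distribution $\cD$. That mismatch is not a cosmetic one. First, distribution accuracy on $n^{3+o(1)}$ queries is exactly what the final theorem of Section~4 rules out for every efficient oracle, so ``distribution-accurate $\Rightarrow$ blatantly non-private'' would be vacuous; the point of Theorem~\ref{thm:main3} is that even the trivially-achievable notion of sample accuracy cannot be met without leaking the sample. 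Second, because sample accuracy is both achievable and part of the hypothesis, the whole ``obstacle'' paragraph you flag as the hard step — using measure concentration (Lemma~\ref{lem:fpcpoptosample}) to bridge $q(\cD)$ and $q(S)$ — simply does not arise: for Theorem~\ref{thm:main3} the paper never invokes that lemma.

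What the paper does instead is run a modified attack $\bnpattack$ (Figure~\ref{fig:bnpattack}) rather than literally reusing the $\realgame$ reconstruction phase. The modifications are small but essential and are missing from your proposal. The universe is a fixed set $y$ of size $2n$ with $x$ a uniform size-$n$ subsample (no iid sampling from a bigger population), the recovery phase runs only $R = .99n$ rounds, and, most importantly, before tracing the answer vector is rescaled to $a^r = \frac{n}{n-r}(a^r_1,\dots,a^r_\length)$. This rescaling is what reconciles sample accuracy with the fingerprinting-code consistency condition: since each query $\query^r_j$ is forced to zero on $T^{r-1}$, the sample mean $\query^r_j(x)$ is a factor $\frac{|x \setminus T^{r-1}|}{n} = \frac{n-r}{n}$ smaller than the mean over the not-yet-traced users, and without the correction $a^r$ would drift out of $\consistent(F^r_{x\setminus T^{r-1}})$ well before $.99n$ rounds. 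The bound $R = .99n$ is exactly where the rescaled slack $\frac{n}{n-r}\cdot\frac{1}{300}$ still fits under the code's $1/3$ threshold. With the rescaling in place, sample accuracy directly gives consistency, the (robust) fingerprinting code's tracing guarantee applies after the standard hybrid to the ``ideal'' game that hides rows outside $x$ behind encryptions of $0$, and a union bound over rounds gives $|x \triangle T^R| \le n/100$ with probability $1 - o(1)$. So the high-level picture — reconstruction phase, encrypt off-sample, peel off one user per round, $n\cdot\tilde O(n^2)$ queries — is the right intuition, but you need to (i) replace the hypothesis with sample accuracy, (ii) drop the measure-concentration step, and (iii) add the $\frac{n}{n-r}$ rescaling, otherwise the tracing step fails for large $r$.
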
 

This result should be compared with recent work of Ullman~\cite{Ullman13}, which
showed that oracles satisfying differential privacy cannot answer even $n^{2+o(1)}$ non-adaptively chosen 
queries.  Here we show that if the queries are chosen adaptively, then the same conclusion holds even for oracles that merely thwart blatant non-privacy, up to a factor of $n$ loss in the number of queries.

An important difference to the privacy setting is how \emph{accuracy} is
defined. In the privacy setting, accuracy is defined with respect to the
oracle's sample.  It is trivial to maintain accuracy with respect to the sample by answering each query with the sample mean, which succeeds even when the oracle is blatantly non-private.  In the setting of false discovery, we define accuracy with respect to the underlying distribution and show that achieving this notion of accuracy is hard for the oracle.

\paragraph{Upper bounds for answering adaptive queries.}
Differential privacy is also useful in establishing upper bounds in our
setting. At a high-level, differential privacy is a \emph{stability condition}
on an algorithm requiring that the output varies only slightly with the
addition or deletion of a sample point. On the other hand, it is known
that stability implies generalization~\cite{BousquetE02}. Hence, we can think
of the interaction between an oracle and an analyst as a single algorithm that
satisfies a stability guarantee strong enough to imply generalization bounds
with respect to the underlying distribution. This approach was formalized by
Dwork et al.~\cite{DworkFHPRR14} leading to upper bounds in the adaptive
setting when combined with algorithms from the differential privacy
literature. Specifically, work of Roth-Roughgarden~\cite{RothR10} and
Hardt-Rothblum~\cite{HardtR10} addresses differential privacy in the
interactive setting. The latter work shows that $2^{\tilde \Omega(n/\sqrt{d})}$
statistical queries can be answered with constant error under differential
privacy. However, the running time is exponential in~$d.$
Using the results of~\cite{DworkFHPRR14} this leads to the same upper bound in
the adaptive statistical query setting. 
Similarly, there is an efficient differentially private mechanism that gives constant
accuracy for $\tilde{\Omega}(n^2)$ queries. This leads to a computationally efficient
upper bound in our setting. To summarize we state the following theorem.
\begin{theorem}[\cite{DworkFHPRR14}]
There is an inefficient algorithm that accurately answers $2^{\tilde{\Omega}(n/\sqrt{d})}$ adaptively chosen
statistical queries. Moreover, there is an efficient
algorithm that accurately answers $\tilde{\Omega}(n^2)$ adaptively chosen queries.
\end{theorem}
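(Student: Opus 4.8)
The natural route is through the \emph{transfer theorem} of~\cite{DworkFHPRR14}, which formalizes the folklore principle that algorithmic stability implies generalization~\cite{BousquetE02} in the language of differential privacy: if $M$ is an $(\eps,\delta)$-differentially private algorithm that takes a sample $S$ of $n$ points from $\cX=\bits^d$ and outputs a statistical query $q$ (i.e.\ a predicate $p\colon\cX\to\bits$), and $\eps,\delta$ are sufficiently small as functions of $\alpha$ and $n$, then with high probability over $S\sim\cD^{n}$ and the coins of $M$ the empirical answer $\frac1n\sum_{x\in S}p(x)$ lies within $\alpha$ of the population answer $q(\cD)$. The plan is to have the oracle run, \emph{on its own sample}, a differentially private mechanism that answers each incoming query accurately with respect to $S$, and then use the transfer theorem to upgrade ``accurate on $S$'' to ``accurate on $\cD$''.

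First I would set up the reduction. Suppose the oracle answers each query using an $(\eps_0,\delta_0)$-DP mechanism. Since the analyst is an arbitrary algorithm whose $i$-th query depends only on the previously exchanged pairs $q_1,a_1,\dots,q_{i-1},a_{i-1}$, the map $S\mapsto(q_1,a_1,\dots,q_i)$ is a composition of $i$ DP mechanisms followed by post-processing; by advanced composition it is $(\eps,\delta)$-DP with $\eps\approx\sqrt{k\log(1/\delta)}\,\eps_0$. In particular each query $q_i$ the analyst produces is itself the output of an $(\eps,\delta)$-DP function of $S$, so the transfer theorem applies to it. Taking a union bound over the $k$ rounds (and invoking the ``monitor'' amplification of~\cite{DworkFHPRR14} to turn the in-expectation generalization bound into one that holds with high probability for \emph{all} queries at once), we obtain, for every $i$, both $|a_i-q_i(S)|\le\alpha/2$ (the sample-accuracy of the mechanism) and $|q_i(S)-q_i(\cD)|\le\alpha/2$ (generalization), hence $|a_i-q_i(\cD)|\le\alpha$ as desired.

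It then remains to instantiate the mechanism and choose parameters. For the inefficient bound, use the private multiplicative-weights mechanism of Hardt and Rothblum~\cite{HardtR10}: it answers $k$ adaptively chosen statistical queries over a universe of size $|\cX|=2^d$ with constant sample error and $(\eps_0,\delta_0)$-privacy provided $n$ exceeds $\sqrt{d}$ times factors polylogarithmic in $k$ and $1/\delta_0$ (and inverse-polynomial in $\eps_0,\alpha$); fixing $\eps$ and $\alpha$ to the constants demanded by the transfer theorem and solving for $k$ yields $k=2^{\tilde{\Omega}(n/\sqrt d)}$. Its running time is $\poly(n,2^d,k)$, which is why it is only an inefficient algorithm. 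For the efficient bound, the mechanism simply returns, for each query, the sample mean perturbed by fresh Laplace noise of scale $\Theta(\sqrt{k\log(1/\delta)}/n)$; advanced composition makes the whole interaction $(\eps,\delta)$-DP for a constant $\eps$, a union bound over the $k$ independent noises bounds the per-query sample error by $\tilde{O}(\sqrt k/n)$, which is at most $\alpha$ once $k=\tilde{O}(n^2)$, and this mechanism clearly runs in time $\poly(n,d)$ per query.

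The conceptual core --- stability implies generalization --- is off the shelf; the real work is quantitative. One must (i) track how the parameters $\eps,\delta,\alpha,n,k$ (and $d$, in the inefficient case) trade off through advanced composition and through the transfer theorem, so that ``$\eps$ small enough for composition over $k$ rounds'' and ``$\eps$ small enough for $\alpha$-generalization'' are simultaneously satisfiable, and (ii) run the monitor argument carefully so that the generalization guarantee holds uniformly over all $k$ adaptively and adversarially chosen queries rather than merely in expectation for a single fixed one. I expect step (ii), together with the parameter bookkeeping in step (i), to be the main obstacle; once it is in place, the two stated bounds follow from the mechanism choices above.
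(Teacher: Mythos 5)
Your proposal matches the approach the paper sketches in the paragraph preceding the theorem (and which the cited work \cite{DworkFHPRR14} carries out): invoke the transfer theorem (differential privacy/stability implies generalization, with the monitor argument to get uniform high-probability guarantees over adaptively chosen queries), then instantiate with private multiplicative weights of \cite{HardtR10} for the inefficient $2^{\tilde{\Omega}(n/\sqrt{d})}$ bound and with Laplace noise plus advanced composition for the efficient $\tilde{\Omega}(n^2)$ bound. The paper does not reprove this imported theorem, but your reconstruction is the argument it alludes to and is correct modulo the parameter bookkeeping you flag.
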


\jnote{New paragraph.  OK?}
We emphasize that exponential running time was known to be inherent for differentially private algorithms that answer $n^{2+o(1)}$ statistical queries~\cite{Ullman13}, but prior to our results it was possible that there was an efficient oracle that accurately answered exponentially many adaptively chosen statistical queries via a different approach.

\subsection{Related work}
The combination of fingerprinting codes and encryption in our one-round approach is a common technique in the construction of ``traitor-tracing schemes.''  Traitor-tracing schemes were introduced by Chor, Fiat, and Naor~\cite{ChorFN94}, also for the problem of secure distribution of digital content.  Dwork et al.~\cite{DworkNRRV09} were the first to show that traitor-tracing schemes can be used to prove computational hardness results for differential privacy.  Ullman~\cite{Ullman13} showed that traitor-tracing schemes with certain non-standard security properties can be used to prove strong computational hardness results for differential privacy, and showed how to construct such a scheme.  In fact, the one-round approach described above closely mirrors the traitor-tracing scheme constructed in~\cite{Ullman13}.  See~\cite{Ullman13} for a more detailed discussion of prior work on traitor-tracing and the issues that arise when using traitor-tracing schemes in the context of differential privacy.  

Our work was also inspired by recent work of Hardt and Woodruff~\cite{HardtW13}, which showed that no low-dimensional linear sketch can give valid answers to even a polynomial number of adaptively chosen queries.  Technically our results are largely orthogonal to theirs, since we consider arbitrary computationally efficient statistical query oracles, rather than linear sketches.  However, their work also noted the connection between differential privacy and validly answering adaptively chosen queries.  On the technical side, our iterative approach was inspired by their results.

\jnote{Let me know if you want me to add anything here.  I just wanted to acknowledge somewhere that we are not totally unaware that hardness-of-learning is a thing people have looked at.}
There is also a large body of work on the computational hardness of certain learning problems.  Many of these 
results have a similar flavor to ours in showing that any computationally efficient algorithm requires either large 
running time or a large number of samples from the distribution in order to learn a valid hypothesis.  However, we
are not aware of any result showing a hardness result specific to adaptively chosen queries.

\subsection*{Acknowledgments}

We are extremely grateful to Aaron Roth for raising the issue of adaptivity in
false discovery control at the Simons Workshop on Differential Privacy.  We
thank Cynthia Dwork and Omer Reingold for introducing us to the area of False
Discovery Control.  We also thank Salil Vadhan for helpful discussions.  We
acknowledge the Simons Institute for Theoretical Computer Science at Berkeley
where this work started.

\section{Preliminaries}

Let $\dist$ be a distribution over $\bits^{\dimension}$, for some parameter
$\dimension \in \N$.  We are interested in answering \emph{statistical
queries} about the distribution $\dist$.  A statistical query on
$\bits^{\dimension}$ is specified by a predicate $\query \from
\bits^{\dimension} \to \bits$ and is defined to be
$$
\query(\dist) = \Ex{x \getsr \dist}{\query(x)}.
$$
The goal is to design an \emph{oracle} $\oracle$ that answers statistical queries about the unknown distribution $\dist$, given only iid samples $x_{1},\dots,x_{\sample}$ from $\dist$.  In this work, we are interested in the case where the queries may be adaptively and adversarially chosen.  

Specifically, $\oracle$ is a stateful algorithm that holds a tuple of samples $x_1,x_2,\dots \in \bits^{*}$, takes a statistical query $\query$ as input, and returns a real-valued answer $a \in [0,1]$.  We require that when $x$ consists of iid samples from $\dist$, the answer $a$ is close to $\query(\dist)$, and moreover that this condition holds for every query in an adaptively chosen sequence $\query_{1},\query_{2},\dots$.  Formally, we define the accuracy guarantee using the following game with a stateful adversary $\accadv$.
\begin{figure}[ht]
\begin{framed}
\begin{algorithmic}
\STATE{$\accadv$ chooses a distribution $\dist$ over $\bits^{\dimension}$}
\STATE{Sample $x_1,\dots,x_{\sample} \getsr \dist$, let $x = (x_1,\dots,x_{\sample})$}
\STATE{For $j = 1,\dots,\queries$}
\INDSTATE[1]{$\accadv(\query_{1}, a_{1}, \dots, \query_{j-1}, a_{j-1})$ outputs a query $\query_j$}
\INDSTATE[1]{$\oracle(x, \query_{j})$ outputs $a_{j}$}
\end{algorithmic}
\end{framed}
\vspace{-6mm}
\caption{$\accuracygame_{\sample, \dimension, \queries}[\oracle, \accadv]$}
\end{figure}

\begin{definition}[Accurate Oracle] \label{def:accurateoracle}
An oracle $\oracle$ is \emph{$(\alpha,\beta)$-accurate for $\queries$ adaptively chosen queries given $\sample$ samples in $\bits^{\dimension}$} if for every adversary $\accadv$,
$$
\Prob{\accuracygame_{\sample, \dimension, \queries}[\oracle, \accadv]}{\forall
j \in [\queries] \; \left| \oracle(x, \query_{j}) - \query_j(\dist) \right| \leq \alpha}
\geq 1 - \beta\,.
$$
As a shorthand, we will say that $\oracle$ is \emph{$\alpha$-accurate for $\queries$ queries} if for every $\sample, \dimension \in \N$, $\oracle$ is $(\alpha,o_{\sample}(1))$-accurate for $\queries$ queries given $\sample$ samples in $\bits^{\dimension}$.  Here, $\queries$ may depend on $\sample$ and $\dimension$ and $o_{\sample}(1)$ is a function of $\sample$ that tends to $0$.
\end{definition}

We are interested in oracles that are both accurate and computationally efficient.  We say that an oracle $\oracle$ is \emph{computationally efficient} if when given samples $x_1,\dots,x_{\sample} \in \bits^{\dimension}$ and a query $\query \from \bits^{\dimension} \to \bits$ it runs in time $\poly(\sample, \dimension, |\query|)$.  Here $\query$ will be represented as a circuit that evaluates $\query(x)$ and $|\query|$ denotes the size of this circuit.

\subsection{Fingerprinting codes}
Collusion-resilient fingerprinting codes were introduced by Boneh and Shaw~\cite{BonehS98} for the problem of watermarking digital content.  A \emph{fingerprinting code} is a pair of efficient algorithms $(\fpcgen, \fpctrace)$.  The code generator $\fpcgen$ takes a number of users $\pop$ as input and outputs a matrix $F \in \bits^{\pop \times \length(\pop)}$, for some function $\length \from \N \to \N$.  We think of $F$ as consisting of $\pop$ \emph{codewords}, one for each \emph{user} $i \in [\pop]$, with each codeword being of length $\length = \length(\pop)$.  For a subset of users $S \subseteq [\pop]$, we use $F_{S}$ to denote the $|S| \times \length$ matrix consisting of the subset of codewords belonging to users in $S$.

The security property says that any codeword can be ``traced'' to its corresponding user.  Moreover, the code is \emph{fully collusion-resilient}---if any subset of users $S \subseteq [\pop]$ ``combines'' their codewords in an arbitrary manner, then the combined codeword $a \in \bits^{\length}$ can also be traced to one of the users in $S$, provided that the combined codeword is ``consistent'' with $F_{S}$ in a very weak sense.  For the standard definition of fingerprinting codes, the consistency condition would require that for every column $j$ of $F_{S}$, if every entry of the $j$-th column shares the same bit $b$, then the $j$-th entry of $a$ is also $b$.  Formally, we will use the condition that for every $j$, $| a_j  - \Ex{i \in S}{F_{S}(i,j)} | \leq 1/3$.  For our results we require a stronger, \emph{error-robust fingerprinting code}, that can trace combined codewords that only respect a relaxed consistency condition, in which the above constraint on $a$ is only required to hold for 99\% of columns $j$.

Specifically, for any set of codewords $F_{S}$, we define
\begin{equation*}
\consistent(F_{S}) = \set{a \in \bits^{\length} \left| \, \textrm{for $.99\length$ choices of $j$, $\left|a_j - \Ex{i \in S}{F_{S}(i,j)}\right| \leq 1/3$} \right.}
\end{equation*}

We can now formally define error-robust fingerprinting codes
\begin{definition}
For a function $\length \from \N \to \N$, a pair of efficient algorithms $(\fpcgen, \fpctrace)$ is an error-robust fingerprinting code of length $\length$ if 
\begin{enumerate}
\item for every $\pop \in \N$, $\fpcgen(1^{\pop})$ outputs a matrix $F \in \bits^{\pop \times \length(\pop)}$ and
\item for every (possibly randomized) adversary $\fpcadv$ and every $S \subseteq [\pop]$, if $a \getsr \fpcadv(F_{S})$, then
$$
\Prob{F \getsr \fpcgen(1^{\pop})}{a \in \consistent(F_{S}) \land \fpctrace(F, a) \not\in S} \leq \negl(\pop).
$$
\end{enumerate}
\end{definition}

Bun, Ullman, and Vadhan~\cite{BunUV14} introduced error-robust fingerprinting codes.  They gave a construction with nearly-optimal length, building on the nearly-optimal construction of standard (non-robust) fingerprinting codes by Tardos~\cite{Tardos03}.

\begin{theorem}[\cite{BunUV14}, building on~\cite{Tardos03}] \label{thm:robustfpcsexist}
For every $\pop \in \N$, there exists an error-robust fingerprinting code of length $\length(\pop) = \tilde{\Theta}(\pop^2)$.
\end{theorem}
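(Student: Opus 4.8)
Since this theorem restates a result of Bun, Ullman, and Vadhan~\cite{BunUV14}, the plan is to recall the shape of their argument, which bootstraps Tardos's near-optimal (non-robust) fingerprinting code~\cite{Tardos03}. Tardos's construction gives, for every $\pop$ and every target failure probability $\beta$, a code of length $\ell_{0} = O(\pop^{2}\log(\pop/\beta))$ in which each column $j$ carries an independent random bias $p_{j}$ from a fixed ``arcsine-type'' distribution truncated to $[\Theta(1/\pop),\,1-\Theta(1/\pop)]$, every entry $F(i,j)$ is an independent $\mathrm{Bernoulli}(p_{j})$ bit, and tracing uses a \emph{linear} scoring rule $S_{i}(a) = \sum_{j} U_{ij}\,a_{j}$ whose per-column weight $U_{ij}$ depends only on $F(i,j)$ and $p_{j}$; Tardos shows that whenever $a$ is marking-consistent with $F_{S}$ the guilty users' scores sum above a threshold $Z = O(\pop\log(\pop/\beta))$, while every innocent user's score stays below $Z$ except with probability $\beta$. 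Taking $\beta = \negl(\pop)$ gives $\ell_{0} = \tilde{O}(\pop^{2})$, and the matching $\tilde{\Omega}(\pop^{2})$ bound is the standard fingerprinting lower bound of Boneh--Shaw~\cite{BonehS98} and Tardos~\cite{Tardos03} (an error-robust code in particular must trace the ``round-to-majority'' combining adversary, whose output lies in $\consistent(F_{S})$ because the biases, hence the colluder column-means $\Ex{i \in S}{F_{S}(i,j)}$, are concentrated near $0$ and $1$), so $\tilde{\Theta}(\pop^{2})$ is best possible.

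To get robustness, decompose any $a \in \consistent(F_{S})$ as $a = a^{\star} + e$, where $a^{\star}$ agrees with $a$ on the $\ge 0.99\ell$ ``good'' columns (those with $|a_{j} - \Ex{i \in S}{F_{S}(i,j)}| \le 1/3$) and takes a marking-consistent value elsewhere, so that $e$ is supported on the $\le 0.01\ell$ bad columns with $|e_{j}| \le 1$. By linearity, $S_{i}(a) = S_{i}(a^{\star}) + S_{i}(e)$ for every $i$, and Tardos's analysis already controls $S_{i}(a^{\star})$ (guilty sum above $Z$, innocents below $Z$), so the whole problem reduces to bounding the error term $S_{i}(e) = \sum_{j \in \mathrm{bad}} e_{j} U_{ij}$ uniformly over users.

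This error bound, relative to the length budget, is the main step and the main obstacle. For an innocent user the bits $F(i,\cdot)$ are independent of $F_{S}$ and hence of the adversary's choice of bad columns, so $S_{i}(e)$ is a sum of bounded, mean-zero, independent terms of total variance at most $0.01\ell$; a Bernstein bound plus a union bound over the $\pop$ innocent users gives $|S_{i}(e)| = O(\sqrt{\ell\log(\pop/\beta)})$, a small constant times $Z$ that can be shrunk by decreasing the corruption fraction or increasing $\ell$ by a constant, so innocent scores stay below, say, $1.1Z$. The delicate part is the guilty side: the adversary sees $F_{S}$ and can concentrate its $0.01\ell$ corruptions on exactly the columns where $|U_{ij}|$ is largest, so the crude bound ``$0.01\ell$ times the maximum column weight'' is far too lossy---it would swamp the guilty signal by a $\poly(\pop)$ factor. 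The resolution is to open up Tardos's analysis---possibly rebalancing the scoring rule and the bias distribution so that the per-column score masses are well spread---and argue, via concentration of those masses over the random biases and codeword bits, that the total weight an adversary can place on any $0.01\ell$ columns is only a small constant fraction of the guilty score mass. Lowering the tracing threshold by this slack and re-inflating the length by at most a constant (or polylogarithmic) factor over $\ell_{0}$---keeping it $\tilde{\Theta}(\pop^{2})$---then makes some guilty user beat the threshold while all innocent users fall below it, except with probability $\negl(\pop)$, which is the desired error-robust code.
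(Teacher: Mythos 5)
The paper does not prove this theorem; it is cited from Bun, Ullman, and Vadhan~\cite{BunUV14}, and the only thing about their construction that the paper actually opens up is the structure of $\fpcgen$ (iid $\mathrm{Bernoulli}(p_j)$ columns with a shared per-column bias), which is needed only for Lemma~\ref{lem:fpcpoptosample}. So there is no ``paper's proof'' to match your sketch against, and your proposal has to stand on its own.

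Evaluated on its own terms, the framing is reasonable: the decomposition into good and bad columns, the linearity $S_i(a)=S_i(a^\star)+S_i(e)$, and the Bernstein-plus-union-bound for innocent users are all sound. You also correctly identify the guilty side as the real obstacle. But the passage meant to resolve it --- ``open up Tardos's analysis \ldots rebalancing the scoring rule and the bias distribution \ldots argue via concentration that the total weight an adversary can place on any $0.01\length$ columns is only a small constant fraction of the guilty score mass'' --- is a restatement of what would have to be true, not an argument that it is. And it is genuinely unclear that it holds for the vanilla Tardos score: writing $V_j = \sum_{i\in S}U_{ij}$, each column has $\E[V_j\mid p_j]=0$ and $\mathrm{Var}(V_j\mid p_j)=|S|=\Theta(\pop)$, so $|V_j|$ is typically $\Theta(\sqrt{\pop})$, and an adversary choosing $0.01\length$ columns after seeing $F_S$ could in principle move the guilty sum by roughly $0.01\length\cdot\sqrt{\pop}=\tilde\Theta(\pop^{2.5})$, which swamps both the guilty-sum signal $\Theta(\length)=\tilde\Theta(\pop^2)$ and the threshold $Z=\tilde\Theta(\pop)$. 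Closing this requires either (a) a careful argument that the adversary, who sees only $F_S(\cdot,j)$ and not $p_j$ itself, cannot actually locate the high-$|V_j|$ columns and so the shift behaves like a random rather than worst-case sum, or (b) a genuinely different construction or aggregation (e.g.\ a robustified or trimmed score, or a black-box amplification of a weakly robust code) rather than the plain linear score with adjusted constants. That is the central technical content of~\cite{BunUV14} and it is absent from your sketch, so you should treat the guilty-user step as an open gap rather than a routine rebalancing.
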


For our results, we will need an additional technical lemma about the fingerprinting code in~\cite{BunUV14} that we will use for our results.  The lemma states that if $|S|$ is at least a sufficiently large constant, then for most columns $j$, the mean of the $j$-th column of $F$ and that of $F_{S}$ are close.  In order to prove the lemma, we need to partially describe the algorithm $\fpcgen$.
\begin{figure}[ht]
\begin{framed}
\begin{algorithmic}
\STATE{Choose parameters $p_{1},\dots,p_{\length} \in [0,1]$.  (The parameters $p_j$ must be chosen from a particular distribution, which is not relevant for our purposes)}
\STATE{For every $i \in [\pop], j \in [\length]$, let $F(i,j) = 1$ with probability $p_j$, independently}
\end{algorithmic}
\end{framed}
\vspace{-6mm}
\caption{$\fpcgen$}
\end{figure}

\begin{lemma} \label{lem:fpcpoptosample}
For every $\pop \geq 500$, and every $S \subseteq [\pop]$ such that $|S| \geq
500$, we have
$$
\Prob{F \getsr \fpcgen}{\textrm{for $.99\length$ choices of $j$, $\left|\Ex{i \in [\pop]}{F(i,j)} - \Ex{i \in S}{F_{S}(i,j)}\right| \leq 1/6$}}\geq 1-\negl(\pop)
$$
\end{lemma}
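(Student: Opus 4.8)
The plan is to use the explicit structure of $\fpcgen$ shown in the figure: once we condition on the parameters $p_1, \dots, p_{\length}$, every bit $F(i,j)$ is an independent $\mathrm{Bernoulli}(p_j)$. Fix $S \subseteq [\pop]$ with $|S| \ge 500$ and $\pop \ge 500$, and for a column $j \in [\length]$ write $A_j = \Ex{i \in [\pop]}{F(i,j)}$ and $B_j = \Ex{i \in S}{F_{S}(i,j)}$; each is an empirical mean of i.i.d.\ $\mathrm{Bernoulli}(p_j)$ variables, over $\pop$ and $|S|$ samples respectively, hence concentrates around the common mean $p_j$. By the triangle inequality, $|A_j - B_j| > 1/6$ forces $|A_j - p_j| > 1/12$ or $|B_j - p_j| > 1/12$, and Hoeffding's inequality bounds each of these events, conditioned on $p_j$, by $2 e^{-\pop/72} \le 2e^{-500/72} < 1/400$ (and by the analogous quantity with $|S|$ in place of $\pop$ for the second); the key point is that this bound is uniform over the value of $p_j$. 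Declaring column $j$ \emph{bad} if $|A_j - B_j| > 1/6$, we therefore get $\prob{\textrm{column } j \textrm{ bad} \mid p_1,\dots,p_{\length}} \le \gamma$ for an absolute constant $\gamma \le 1/200$.

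Next I would condition on $p_1,\dots,p_{\length}$ and note that, since the entries $F(i,j)$ are mutually independent given these parameters, the events ``column $j$ is bad'' are mutually independent over $j \in [\length]$, each of probability at most $\gamma \le 1/200$. Let $N = \sum_{j=1}^{\length} \mathbf{1}[\textrm{column } j \textrm{ bad}]$, so the conclusion of the lemma fails exactly when $N > 0.01 \length$. Since the conditional mean satisfies $\ex{N \mid p_1,\dots,p_{\length}} \le \gamma \length \le 0.005 \length$, a Chernoff bound (equivalently, an additive Hoeffding bound applied to $N$) yields $\prob{N > 0.01 \length \mid p_1,\dots,p_{\length}} \le e^{-c \length}$ for an absolute constant $c > 0$. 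As this holds for every value of $p_1,\dots,p_{\length}$, it holds unconditionally, so $\prob{N > 0.01 \length} \le e^{-c\length}$ over the choice of $F \getsr \fpcgen$.

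To finish, I would invoke Theorem~\ref{thm:robustfpcsexist}, which gives length $\length = \length(\pop) = \tilde\Theta(\pop^2)$; in particular $\length$ grows polynomially in $\pop$, so $e^{-c\length} = \negl(\pop)$, and passing to the complementary event gives the stated bound. I do not expect a real obstacle here; the two points that need a moment's care are that the per-column Hoeffding estimate must be uniform in $p_j$ so that it survives the conditioning, and that the bad-column indicators become independent only after conditioning on all the $p_j$'s --- both are immediate from the description of $\fpcgen$. The constant $500$ is not optimized; the crude estimates above already suffice over the stated range $\pop, |S| \ge 500$.
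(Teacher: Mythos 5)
Your proof is correct and follows essentially the same route as the paper's: condition on the biases $p_j$, apply a per-column Hoeffding bound around $p_j$ plus the triangle inequality to get a uniform per-column failure probability at most $1/200$, exploit conditional independence across columns with a second Hoeffding/Chernoff bound, and finish by noting $\length=\tilde\Theta(\pop^2)$ makes the failure probability negligible in $\pop$. The two points you flag as needing care (uniformity in $p_j$ and conditional independence of the column indicators) are exactly the ones the paper handles implicitly by fixing the $p_j$'s before applying Hoeffding.
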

\begin{proof}
Fix any $p_{1},\dots,p_{\length} \in [0,1]$ and any $j \in [\length]$.  Then
for every $i \in [S]$, $F(i,j)$ is an independent Bernoulli random variable
with success probability $p_j$. 
Thus, by a Hoeffding bound\footnote{For independent random variables
$X_1,\dots,X_{m}\in[0,1]$, if $X = \frac{1}{m}\sum_{i=1}^{m} X_i$, then
$\prob{X > \ex{X}+ \tau} \leq \exp(-2\tau^2m)$ and
$\prob{|X-\ex{X}|>\tau}\le2\exp(-2\tau^2m)$.} 
\begin{align*}
\prob{\left|\Ex{i \in [\pop]}{F(i,j)} - p_j\right| > 1/12} &\leq
2\exp\left(-\pop/72\right) \leq 1/400\\
\text{and}\qquad\prob{\left|\Ex{i \in S}{F(i,j)} - p_j\right| > 1/12} &\leq
2\exp\left(-|S|/72\right) \leq 1/400 .
\end{align*}
Thus, by a triangle inequality, it holds that
$$
\prob{\left|\Ex{i \in S}{F(i,j)} - \Ex{i \in [\pop]}{F(i,j)} \right| > 1/6} \leq 1/200
$$
If we let $B_j$ be the indicator variable of the event $\big\{ |\Ex{i \in S}{F(i,j)} - \Ex{i \in
[\pop]}{F(i,j)}| > 1/6\big\}$, then $\ex{\sum_{i=1}^{\length} B_j} \leq
\length/200$.  Since the parameters $p_j$ are fixed, the events $B_j$ are
independent.  Thus, by a Hoeffding bound,
$\prob{\sum_{i=1}^{\length} B_j > \length/100} \leq
\exp\left(-\length/20000\right) \leq \negl(\pop).$
The final inequality holds because $\length = \length(\pop) = \tilde{\Theta}(\pop^2)$.  Since the conclusion holds for every fixed choice of parameters $p_j$, it also holds when the parameters $p_j$ are chosen randomly as in $\fpcgen$.
\end{proof}

\section{Lower bound for natural oracles} \label{sec:natural}
In this section we prove our main result in the special case where the oracle
satisfies a natural condition, roughly speaking, that it does not evaluate a
given query outside its sample. The proof is technically simpler in this case
as it is unconditional and does not rely on any cryptographic constructions.
Nevertheless, the proof outline is essentially the same as in the general
case and so it is instructive to begin with this special case.

\begin{definition}
An oracle ${\cal O}$ is \emph{natural} if for every input sample $S$ and 
every two queries $q$ and $q'$ such that $q(x)=q'(x)$ for all $x\in S,$
the answers $a$ and $a'$ that the oracle gives on queries $q$ and $q',$
respectively, are identical if the oracle is deterministic and identically
distributed if the oracle is randomized.  If the oracle is stateful, then this condition should hold when the oracle is in any of its possible states.
\end{definition}

We will now show that there is no \emph{natural} oracle that is
accurate for a sufficiently large number of adaptively chosen queries.  To do
so, we will construct an adversary that chooses a distribution $\dist$, and
then issues queries to the oracle in such a way that any computationally
efficient oracle that is given samples from $\dist$ will fail to answer all
queries correctly.  

The adversary is described in Figure~\ref{fig:attack} and proceeds in three phases.  
In the first phase the adversary chooses the distribution $\dist$ randomly.
Then the oracle is given samples from $\dist$ and the adversary performs a
recovery phase in order to identify (most of) the samples the oracle received.
Finally, the adversary uses knowledge of (most of) the samples to find a query
that the oracle cannot answer accurately.  
In the figure, $(\fpcgen, \fpctrace)$ is a fingerprinting code of
length $\length(\pop)$.  

\begin{figure}[ht]
\begin{framed}
\begin{algorithmic}
\STATE{Given a parameter $\sample$, let $\pop = 2000\sample$ and $R=n-500.$} 
\STATE{Let $\dist$ be the uniform distribution over $\{1,\dots,p\}.$}
\STATE{Choose samples $x_1,\dots,x_{\sample} \getsr \dist$, let $x =
(x_{1},\dots,x_{\sample})$.}
\STATE{Let $S \subseteq [\pop]$ be the set of unique numbers appearing in~$x.$}
\STATE{}
\STATE{Recovery phase:}
\INDSTATE[1]{Sample fingerprinting codes $F^{1},\dots,F^{R} \getsr \fpcgen(1^{\pop})$
of length $\length = \length(\pop)$.}
\INDSTATE[1]{Let $T^0 = \emptyset$}
\INDSTATE[1]{For round $r = 1$ to $R$:}
\INDSTATE[2]{For $j = 1,\dots, \length \colon$}
\INDSTATE[3]{Define the query $\query^{r}_{j}(i)$ to be $F^{r}(i, j)$ if
$i\not\in T^{r-1}$ and $0$ otherwise.}
\INDSTATE[3]{Let $a^{r}_{j} = \oracle(x; \query^{r}_{j})$}
\INDSTATE[2]{Let $a^{r} = (a^{r}_{1},\dots, a^{r}_{\length})$}
\INDSTATE[2]{Let $i^{r} = \fpctrace(F^{r}, a^{r})$, and let $T^{r} = T^{r-1} \cup \{i^r\}$}
\STATE{}
\STATE{Attack phase:}
\INDSTATE[1]{Let $\phi = 0$ with probability $1/2$ and $\phi = 1/500$ with probability $1/2$}
\INDSTATE[1]{Sample a random subset $B\subseteq[\pop]$ of size $\phi \cdot \pop.$}
\INDSTATE[1]{Let $m_i = 1$ for all $i\in B$ and $0$ for all $i\in
[\pop]\setminus B$.}
\INDSTATE[1]{Define the query $\query^{*}(i)$ to be $m_i$ if $i \not\in
T^{R}$ and $0$ otherwise.}
\INDSTATE[1]{Let $a^* = \oracle(x, \query^*)$.}
\end{algorithmic}
\end{framed}
\vspace{-6mm}
\caption{$\realgame_{\sample}[\oracle]$}
\label{fig:attack-nat}
\end{figure}

\subsection{Analysis of the recovery phase} \label{sec:recoveryanalysis-nat}
The goal of the recovery phase of the algorithm is to identify most of the
samples $x_{1},\dots,x_{\sample}$ that are held by the oracle.  Once the
attacker has this information, he can use it to find queries that distinguish
the oracle's samples from the population and force the oracle to be inaccurate.

In order to recover samples, the attacker will force the oracle to give answers that are consistent with the fingerprinting codes $F^{1},\dots,F^{R}$, which are then given to $\fpctrace$ to recover an element of the sample.  Our first claim establishes that an accurate oracle will indeed force the oracle to give answers consistent with the fingerprinting codes.
\begin{claim}\label{clm:realaccurate-nat}
If $\oracle$ is $(1/12)$-accurate for $\sample \cdot \length(2000\sample) + 1$
adaptively chosen queries, then 
$$
\Prob{\realgame_{\sample}[\oracle]}{\textrm{$\forall r \in [R]$,
for $.99 \length$ choices of $j \in [\length]$,} \atop  \left| \oracle(x,
\query^{r}_j) - \Ex{i \in S \setminus T^{r-1}}{F^{r}(i,j)} \right| \leq 1/3}
\geq 1 - o(1)\,.
$$
\end{claim}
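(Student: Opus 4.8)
The plan is to certify $\oracle$'s answers in two layers and chain them: the accuracy hypothesis forces $\oracle(x,\query^r_j)$ to be close to the true value $\query^r_j(\dist)$, and a measure-concentration property of the fingerprinting code (as in Lemma~\ref{lem:fpcpoptosample}) forces $\query^r_j(\dist)$ to be close, on $99\%$ of columns $j$, to the empirical column mean $\Ex{i\in S\setminus T^{r-1}}{F^r(i,j)}$. A triangle inequality then finishes it.

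For the first layer: the recovery phase of $\realgame_{\sample}$ issues $R\cdot\length=(\sample-500)\length(2000\sample)$ adaptively and efficiently chosen queries and the attack phase issues one more, so $\realgame_{\sample}[\oracle]$ is an instance of $\accuracygame$ with at most $\sample\cdot\length(2000\sample)+1$ queries; by $(1/12)$-accuracy, with probability $1-o_{\sample}(1)$ we have $|\oracle(x,\query^r_j)-\query^r_j(\dist)|\le 1/12$ \emph{simultaneously} over all $r\in[R]$ and $j\in[\length]$. For the bridge to the empirical mean: since $\dist$ is uniform on $[\pop]$ and $\query^r_j$ equals $F^r(\cdot,j)$ on $[\pop]\setminus T^{r-1}$ and $0$ on $T^{r-1}$, we have $\query^r_j(\dist)=\frac{1}{\pop}\sum_{i\in[\pop]\setminus T^{r-1}}F^r(i,j)$, which is within $|T^{r-1}|/\pop\le R/\pop<1/2000$ of $\Ex{i\in[\pop]\setminus T^{r-1}}{F^r(i,j)}$. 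Hence, by the triangle inequality, it suffices to show that with probability $1-o(1)$, for every $r\in[R]$ there are at least $.99\length$ columns $j$ with $\big|\Ex{i\in[\pop]\setminus T^{r-1}}{F^r(i,j)}-\Ex{i\in S\setminus T^{r-1}}{F^r(i,j)}\big|\le 1/6$; for such $j$, chaining the three bounds gives $|\oracle(x,\query^r_j)-\Ex{i\in S\setminus T^{r-1}}{F^r(i,j)}|\le 1/12+1/2000+1/6<1/3$.

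This remaining statement is essentially Lemma~\ref{lem:fpcpoptosample}. I would prove it by fixing $r$, conditioning on the sample $x$, on the codewords $F^1,\dots,F^{r-1}$, and on $\oracle$'s coin tosses in rounds $1,\dots,r-1$ --- all of which determine $T^{r-1}$ but leave $F^r$ (together with its column biases) a fresh draw from $\fpcgen$ --- and then running the argument of Lemma~\ref{lem:fpcpoptosample} with the population $[\pop]$ replaced by $[\pop]\setminus T^{r-1}$ (of size $\ge 1999\sample$) and the subset $S$ replaced by $S\setminus T^{r-1}$ (assuming $|S\setminus T^{r-1}|\ge 500$, discussed below). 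The code still has $\tilde\Theta(\sample^2)$ columns, so the failure probability there is $\negl(\sample)$; a union bound over the $R\le\sample$ rounds, intersected with the accuracy event above, gives total failure probability $o_{\sample}(1)+\negl(\sample)=o(1)$, which proves the claim.

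The concentration step is the only real obstacle, and two aspects of it warrant care. First is the independence bookkeeping: one must condition on precisely the transcript of rounds $1,\dots,r-1$ so that $F^r$, together with its biases, remains a fresh sample from $\fpcgen$ --- which is exactly what the proof of Lemma~\ref{lem:fpcpoptosample} consumes --- while still treating the columns of $F^r$ as mutually independent. Second, and more delicate, is the invariant $|S\setminus T^{r-1}|\ge 500$ for every $r\le R$: the sample-mean concentration is vacuous once fewer than a constant number of samples remain unrecovered, so one must track --- using $|S\setminus T^{r-1}|\ge|S|-(r-1)$ together with a lower bound on $|S|$ --- that this holds in each round, which is part of what the full recovery-phase analysis establishes in tandem with the guarantee that tracing recovers a fresh sample every round. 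The rest is a routine chain of triangle inequalities and Chernoff/Hoeffding bounds.
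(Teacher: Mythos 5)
Your proof is essentially the same as the paper's: a three-way triangle inequality chaining (i) the $(1/12)$-accuracy guarantee, (ii) a deterministic $O(|T^{r-1}|/\pop)$ bound coming from the fact that the query is zeroed out on $T^{r-1}$, and (iii) the measure-concentration property of Lemma~\ref{lem:fpcpoptosample}, followed by a union bound over rounds. The only cosmetic difference is your choice of intermediate anchor: you pass through $\Ex{i\in[\pop]\setminus T^{r-1}}{F^{r}(i,j)}$ and invoke a variant of the lemma with both the ambient population and the subset restricted by $T^{r-1}$, whereas the paper passes through the full population mean $\Ex{i\in[\pop]}{F^{r}(i,j)}$ and applies the lemma as stated with the subset $S\setminus T^{r-1}$. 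Both require exactly the conditioning argument you describe (fix the round-$(r-1)$ transcript so that $T^{r-1}$ is determined while $F^{r}$ and its column biases remain a fresh draw), and both budgets $1/12 + 1/2000 + 1/6 < 1/3$ are the same. Your explicit remark on the invariant $|S\setminus T^{r-1}|\ge 500$ is well taken: the claim as stated quantifies over all $r\in[R]$ but its proof genuinely uses this lower bound, and it is only maintained for the rounds that the downstream lemma actually needs ($r \le |S|-500$); the paper is silent on this point, so flagging it is appropriate rather than a defect of your write-up.
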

\begin{proof}
First we show that, 
$$
\Prob{\realgame}{\forall r \in [R], j \in [\length] \; \left| \oracle(x,
\query^{r}_j) - \Ex{i \in [\pop]}{F^{r}(i,j)} \right| \leq 1/6} \geq 1-o(1)\,.
$$
Observe that by definition, for every $r, j$,
\begin{align*}
\Ex{(i, \sk_{i}) \getsr \dist}{\query^{r}_{j}(i)} 
={} &\frac{1}{\pop} \sum_{i \in [\pop] \setminus T^{r-1}} F^{r}(i,j)\,.
\end{align*}
Since $|T^{r-1}| \leq \sample$ and $F^{r}(i,j) \in \bits$, for every $r,j,$
\begin{equation} \label{eq:realconsistency0-nat}
 \left|\Ex{i \getsr \dist}{\query^{r}_{j}(i)} -
\Ex{i \in [\pop]}{F^{r}(i,j)}\right| \leq \frac{\sample}{\pop} \leq
\frac{1}{12}\,.
\end{equation}
The oracle's input $x$ consists of $\sample$ samples from
$\dist$.  Moreover, the total number of queries issued to the oracle is at
most $\queries = \sample \cdot \length(2000\sample) + 1$.  Since the oracle
is assumed to be $(1/12)$-accurate for $\queries$ queries given $\sample$ samples
in $\bits^{\dimension}$,
\begin{equation} \label{eq:realconsistency1-nat} 
\Prob{\realgame}{\forall r, j \;
\left| \oracle(x, \query^{r}_j) - \Ex{(i, \sk_{i}) \getsr \dist}{\query^{r}_{j}(i,
\sk_{i})} \right| \leq \frac1{12}} \geq 1-o(1)\,.
\end{equation}
Applying the triangle inequality to \eqref{eq:realconsistency0-nat} and
\eqref{eq:realconsistency1-nat}, this shows
\begin{equation} \label{eq:realconsistency2-nat}
\Prob{\realgame}{\forall r, j \; \left| \oracle(x, \query^{r}_j) - \Ex{i \in
[\pop]}{F^{r}(i,j)} \right| \leq 1/6} \geq 1-o(1) \,.
\end{equation}

By Lemma~\ref{lem:fpcpoptosample}, since $|S \setminus T^{r-1}| \geq 500$, for every $r$, 
$$
\Prob{}{\textrm{for $.99\length$ choices of $j$, $\left|\Ex{i \in
[\pop]}{F^{r}(i,j)} - \Ex{i \in S}{F^{r}_{S}(i,j)}\right| \leq 1/6$}}\geq
1-\negl(\sample)\,.
$$
where the probability is taken over the choice of $F^{r} \getsr \fpcgen$.
By a union bound over $r = 1,\dots,R$, where $R = \sample - 500$, if $F^{1},\dots,F^{R} \getsr \fpcgen$, then
\begin{equation} \label{eq:realconsistency3-nat}
\Prob{}{\textrm{$\forall r$, for $.99\length$ choices of $j$, $\left|\Ex{i \in
[\pop]}{F^{r}(i,j)} - \Ex{i \in S}{F^{r}_{S}(i,j)}\right| \leq 1/6$}}\geq
1-\negl(\sample)\,.
\end{equation}

The claim now follows by combining \eqref{eq:realconsistency2-nat} and
\eqref{eq:realconsistency3-nat}.
\end{proof}

Now that we have established Claim~\ref{clm:realaccurate}, we know that in
every round $r$, the oracle~$\oracle$ holding $x$ returns a set of answers
that are consistent with the fingerprinting code $F^{r}_{S \setminus
T^{r-1}}$. However, this fact alone is not enough to guarantee that
$\fpctrace$ returns a user in $S \setminus T^{r-1}$, because the queries to
the oracle depend on rows of $F^{r}$ for users outside of $S \setminus
T^{r-1}$, whereas the security of the fingerprinting code applies only to
algorithms that only have access to the rows of $F^{r}$ for users in $S
\setminus T^{r-1}$. However, if we assume that the oracle is natural, then its answers do not depend on information about the query at points outside of the sample~$S.$

\begin{lemma} \label{lem:realrecoverysucceeds-nat}
If $\oracle$ is a natural oracle and is $(1/12)$-accurate for $\sample
\cdot \length(2000\sample) + 1$ adaptively chosen queries, then 
$$
\Prob{\realgame_{\sample,\dimension}[\oracle]}{|S \setminus T^{R}| > 500} = o(1)
$$
\end{lemma}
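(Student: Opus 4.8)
The plan is to show that, except with probability $o(1)$, the recovery phase ``makes progress'' in every round: as long as at least $500$ elements of $S$ are still unrecovered, round $r$ adds a genuinely new element of $S\setminus T^{r-1}$ to $T^r$. Since there are $R=\sample-500$ rounds and $|S|\le\sample$, a short counting argument then forces $|S\setminus T^R|\le 500$.

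Concretely, I would let $G$ be the event that for every round $r\in[R]$ with $|S\setminus T^{r-1}|\ge 500$ we have both $a^r\in\consistent(F^r_{S\setminus T^{r-1}})$ and $i^r=\fpctrace(F^r,a^r)\in S\setminus T^{r-1}$. The first step is to check that $G$ implies the conclusion: on $G$, the quantity $|S\setminus T^r|$ never increases and decreases by exactly one in every round in which it is still at least $500$, so either it drops below $500$ within the $R$ rounds, or it equals $|S|-R\le\sample-R=500$ at the end; either way $|S\setminus T^R|\le 500$ (and if $|S|<500$ the conclusion is immediate). It therefore suffices to prove $\Prob{\realgame_{\sample}[\oracle]}{\neg G}=o(1)$, which I would do by union-bounding $\neg G$ over the event that some ``relevant'' round $r$ has $a^r\notin\consistent(F^r_{S\setminus T^{r-1}})$ and the event that some relevant round has $a^r\in\consistent(F^r_{S\setminus T^{r-1}})$ yet $i^r\notin S\setminus T^{r-1}$.

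The first event has probability $o(1)$ immediately from Claim~\ref{clm:realaccurate-nat}: when $|S\setminus T^{r-1}|\ge 500$, the claim's conclusion that $|a^r_j-\Ex{i\in S\setminus T^{r-1}}{F^r(i,j)}|\le 1/3$ for $.99\length$ choices of $j$ is precisely the statement $a^r\in\consistent(F^r_{S\setminus T^{r-1}})$, and the claim gives this for all $R$ rounds at once with probability $1-o(1)$ (the attack issues $R\length+1\le\sample\cdot\length(2000\sample)+1$ queries, matching the accuracy hypothesis). For the second event I would union-bound over $r\in[R]$ and, for each fixed $r$, reduce to the security of the error-robust fingerprinting code. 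Fix all randomness used before $F^r$, namely the sample $x$ (hence $S$), the codes $F^1,\dots,F^{r-1}$, and the oracle's coins in rounds $1,\dots,r-1$; this determines $T^{r-1}$ and the oracle's state entering round $r$, and if $|S\setminus T^{r-1}|<500$ there is nothing to bound. Otherwise put $S'=S\setminus T^{r-1}\subseteq[\pop]$ and construct a fingerprinting-code adversary $\fpcadv$ that, given only $F^r_{S'}$, runs the oracle from its round-$r$ state on the $\length$ queries $q'_j$ defined by $q'_j(i)=F^r(i,j)$ for $i\in S'$ and $q'_j(i)=0$ otherwise, and outputs the resulting answer vector. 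Because $q'_j$ and the true query $\query^r_j$ agree on all of $S$ (for $i\in S$ one has $\query^r_j(i)=F^r(i,j)\cdot\mathbf 1[i\notin T^{r-1}]=q'_j(i)$), naturality of $\oracle$ guarantees that the vector output by $\fpcadv$ is distributed exactly like $a^r$ under this conditioning. Hence the fingerprinting-code security guarantee gives $\Prob{F^r\getsr\fpcgen(1^{\pop})}{a^r\in\consistent(F^r_{S'})\wedge\fpctrace(F^r,a^r)\notin S'}\le\negl(\pop)$; summing over the $R<\sample$ rounds and recalling $\pop=2000\sample$ makes this $o(1)$ as well.

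The step I expect to be the main obstacle is the per-round reduction to fingerprinting-code security: one must argue carefully that conditioning on the earlier randomness genuinely decouples round $r$ from the fresh code $F^r$ (so that the security definition, which samples $F\getsr\fpcgen$, applies), and that the naturality property is strong enough for $\fpcadv$ to reproduce the joint distribution of a whole round's answers $(a^r_1,\dots,a^r_\length)$ even though the oracle is stateful and its state evolves across the $\length$ queries of the round. By contrast, the counting implication $G\Rightarrow|S\setminus T^R|\le 500$ is routine (modulo a mild off-by-one), and the consistency bound is a direct quotation of Claim~\ref{clm:realaccurate-nat}.
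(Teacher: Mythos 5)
Your proposal is correct and takes essentially the same approach as the paper: Claim~\ref{clm:realaccurate-nat} supplies the consistency condition, naturality lets you reduce each round's tracing step to the security guarantee of the error-robust fingerprinting code, a union bound over rounds combines them, and a counting argument yields the final conclusion. You spell out the per-round reduction (conditioning on prior randomness to decouple $F^r$, and constructing $\fpcadv$ so that naturality makes its output distributed exactly like $a^r$) and the counting implication more explicitly than the paper does, but these are elaborations of the same argument, not a different route.
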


\begin{proof}
Fix any round $r \in \set{1,\dots,|S|-500}$ and let $U = S \setminus T^{r-1}$.
By the security of the fingerprinting code, we have that for every algorithm $\cA$
\begin{equation*}
\Prob{F^r \getsr \fpcgen(1^{\pop})}{\textrm{$($$\cA(F^{r}_{U}) \in
\consistent(F^{r}_{U})$$)$ $\land$ $($$\fpctrace(F^{r}, \cA(F^{r}_{U}))
\not\in U$$)$}}  \le \negl(\sample)
\end{equation*}
Observe that the oracle $\oracle$ is natural and therefore the answer it gives
on any query cannot depend on rows of $F^{r}$ that belong to users outside
of~$S.$ Moreover, the query is $0$ on points in $T^{r-1}$ and  
The queries issued in rounds $r' \neq r$ depend only on $F^{r'}$, which is
independent from $F^{r}$. Hence, the answer of
the oracle depends only on points in~$U.$ We therefore have
\begin{equation}
\Prob{F^r \getsr \fpcgen(1^{\pop})}{\textrm{$($$a^{r} \in
\consistent(F^{r}_{U})$$)$ $\land$ $($$\fpctrace(F^{r}, a^{r}) \not\in U$$)$}}
\le \negl(\sample)\,.
\end{equation}
By a union bound over $r = 1,\dots,|S|-500$, we also have
\begin{equation} \label{eq:trace0}
\Prob{F^{1},\dots,F^{|S|-500} \getsr \fpcgen(1^{\pop})}{
\begin{array}{ll}
\exists r &   \left(a^{r} \in \consistent(F^{r}_{S \setminus
T^{r-1}})\right)\\
& \land \left(\fpctrace(F^{r}, a^{r}) \not\in S \setminus T^{r-1}\right)
\end{array}} 
\le \negl(\sample)
\end{equation}

By Claim~\ref{clm:realaccurate-nat}, we have that with probability at least $1 -
o(1)$, it holds that for all $r \in \set{1,\dots,|S|-500}$ and for $.99 \length$ choices of $j \in [\length]$
\begin{equation*}
\left| a^{r}_{j} - \Ex{i \in U}{F^{r}(i,j)} \right| \leq 1/3
\end{equation*}
Note that in order to apply Claim~\ref{clm:realaccurate-nat} we have used the fact that when 
$r \in \set{1,\dots,|S|-500}$, $|S \setminus T^{r-1}| \geq 500$.
If this condition is satisfied, then indeed $a^{r} \in \consistent(F^{r}_{U})$.  
Therefore, combining with \eqref{eq:trace0}, we have
\begin{equation*}
\Prob{F^{1},\dots,F^{|S|-500} \getsr \fpcgen(1^{\pop})}{\exists r,
\fpctrace(F^{r}, a^{r}) \not\in S \setminus T^{r-1}} \leq o(1)\,.
\end{equation*}
Now the claim follows by observing that 
\begin{align*}
\Prob{\realgame}{|S \setminus T^{R}| > 500} 
\leq\Prob{\realgame}{\exists r \in \set{1,\dots,|S|-500} \; \fpctrace(F^{r},
a^{r}) \not\in S \setminus T^{r-1}} \leq o(1)\,.
\end{align*}
\end{proof}

\subsection{Analysis of the Attack Phase} \label{sec:attackanalysis-nat}
At this point we know that if the oracle is natural and 
accurately answers all the queries in the recovery phase, then with high probability $|S \setminus T^{R}| \leq 
500$. Next we show that if this event occurs, then the probability that the oracle 
answers $\query^*$ accurately in the attack phase is bounded away from $1$ by a constant.  Since an accurate oracle
is required to answer each query accurately with probability at least $1-o(1)$, we will obtain a contradiction.

To begin with we show that the population answer~$\query^*(\dist)$ is close to the
value~$\phi$ in the attack.
\begin{claim}\label{clm:attackp-nat}
In $\realgame_{\sample}[\oracle],$ we have $|\query^*(\dist)-\phi|\le 1/2000.$ 
\end{claim}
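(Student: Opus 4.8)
The plan is to simply expand the definition of $\query^*(\dist)$ and observe that the only discrepancy from $\phi$ comes from the (few) points that have been removed via the set $T^R$. Since $\dist$ is the uniform distribution over $\{1,\dots,\pop\}$ and $\query^*(i) = m_i \cdot \mathbf{1}[i \notin T^{R}]$, we have
\begin{equation*}
\query^*(\dist) = \Ex{i \getsr \dist}{\query^*(i)} = \frac{1}{\pop} \sum_{i \in [\pop] \setminus T^{R}} m_i = \frac{|B \setminus T^{R}|}{\pop},
\end{equation*}
using that $m_i = 1$ exactly for $i \in B$. So the whole claim reduces to controlling how much $T^{R}$ can shrink $B$.

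Next I would bound $|T^{R}|$. In the recovery phase exactly one index $i^{r} = \fpctrace(F^{r}, a^{r})$ is added to $T^{r}$ in each of the $R = \sample - 500$ rounds, so $|T^{R}| \leq R \leq \sample$. Consequently
\begin{equation*}
\phi \cdot \pop - \sample \;\leq\; |B| - |T^{R}| \;\leq\; |B \setminus T^{R}| \;\leq\; |B| \;=\; \phi \cdot \pop,
\end{equation*}
and dividing through by $\pop = 2000\sample$ gives $\phi - \frac{1}{2000} \leq \query^*(\dist) \leq \phi$, which is exactly $|\query^*(\dist) - \phi| \leq 1/2000$.

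There is essentially no obstacle here: the only things being used are that $\dist$ is uniform on a set of size $\pop = 2000\sample$, that at most one element is added to $T^{r}$ per round over $R \le \sample$ rounds, and that $B$ has size $\phi\pop$ by construction. The statement holds deterministically for every run of $\realgame_{\sample}[\oracle]$ (i.e., for every realization of the internal randomness), so no probabilistic argument is needed; I would just state it as a pointwise bound. The one thing worth a sentence of care is noting that $|B\setminus T^R|$ can only differ from $|B|$ by at most $|T^R|$, which is the elementary inequality $|B| - |T^R| \le |B \setminus T^R| \le |B|$.
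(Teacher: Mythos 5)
Your proof is correct and takes essentially the same approach as the paper: compute $\query^*(\dist)$ exactly, observe that it differs from $\phi$ only by a term bounded by $|T^R|/\pop \le \sample/(2000\sample) = 1/2000$, and note the bound holds pointwise. The paper's write-up splits into the cases $\phi = 0$ and $\phi = 1/500$ and phrases the error term as $\Prob{i \getsr [\pop]}{i \in T^R \land m_i = 1}$, but the underlying computation is the same.
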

\begin{proof}
The case $\phi=0$ we have $\query^*(\dist)=0.$ If $\phi=1/500,$ then we have that $\E_{i\in[\pop]} m_i =
\phi$ since $|B|=\phi\cdot \pop.$ Hence,
\[
\query^*(\dist)
= \E_{i\sim\dist} \query^*(i)
= \phi - \Prob{i \getsr [\pop]}{i\in T^R \land m_i = 1}
\]
where $\prob{i\in T^R \land m_i = 1} \leq \prob{i \in T^{R}}= |T^R|/\pop=(\sample-500)/\pop \le 1/2000.$
\end{proof}

We will now show that the oracle cannot guess the value of $\phi$ with sufficiently high 
probability provided that the recovery phase succeeded.
\begin{claim} \label{clm:realattacksucceeds-nat}
$$
\Prob{\realgame_{\sample}[\oracle]}{\left( |S \setminus T^{R}| \leq 500 \right) \land \left|\oracle(x, \query^{*}) -
\phi\right| \leq \frac{1}{2000}} \leq 1 - \Omega(1)
$$
\end{claim}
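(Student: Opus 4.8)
\emph{Plan.} The idea is to condition on everything that happens before the attack phase and then exploit naturalness. Let $\Gamma$ denote the transcript of the recovery phase: the samples $x$ (hence the set $S$), the fingerprinting codes $F^1,\dots,F^R$, and the coins $\oracle$ uses during recovery. Then $\Gamma$ determines $T^R$ and the internal state of $\oracle$ at the start of the attack phase, so the event $E := \{|S \setminus T^R| \le 500\}$ and the set $U := S \setminus T^R$ are both functions of $\Gamma$, while $\phi$, $B$, and any coins $\oracle$ uses to answer $\query^*$ are drawn fresh, independently of $\Gamma$. It suffices to prove $\Prob{}{\,|\oracle(x,\query^*) - \phi| \le 1/2000 \mid \Gamma\,} \le 1 - \Omega(1)$ for every $\Gamma$ consistent with $E$; averaging against $\Prob{}{\Gamma}$ then yields $\Prob{\realgame_{\sample}[\oracle]}{E \land |\oracle(x,\query^*) - \phi| \le 1/2000} \le (1 - \Omega(1)) \cdot \Prob{}{E} \le 1 - \Omega(1)$.

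Fix such a $\Gamma$ and let $Z$ be the event $B \cap U = \emptyset$. If $\phi = 0$ then $B = \emptyset$ and $Z$ holds with certainty; if $\phi = 1/500$ then $B$ is a uniformly random subset of $[\pop]$ of size $\phi\pop = \pop/500$, so since $|U| \le 500$ and $\pop = 2000\sample$ is large, a routine hypergeometric estimate gives $\Prob{}{Z \mid \phi = 1/500, \Gamma} = \prod_{k=0}^{|U|-1} \frac{\pop - \pop/500 - k}{\pop - k} \ge c$ for an absolute constant $c > 0$ (say $c = 1/3$ once $\sample$ is large enough). The crucial observation is that whenever $Z$ occurs, $\query^*(i) = 0$ for every $i \in S$: if $i \in T^R$ this is immediate from the definition of $\query^*$, and if $i \in U$ then $i \notin B$, so $m_i = 0$. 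Hence, conditioned on $Z$, the restriction $\query^*|_S$ is the all-zero function, regardless of the value of $\phi$. Since $\oracle$ is natural and its state is pinned down by $\Gamma$, the distribution $W_\Gamma$ of $\oracle(x, \query^*)$ conditioned on $(\Gamma, Z)$ does not depend on $\phi$ — and when $\phi = 0$ it is simply the unconditional distribution of the answer.

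To finish, note that the target intervals $I_0 = [0,\, 1/2000]$ and $I_1 = [1/500 - 1/2000,\, 1/500 + 1/2000]$ are disjoint, so writing $P_b = \Prob{a \sim W_\Gamma}{a \in I_b}$ we have $P_0 + P_1 \le 1$. Decomposing over $\phi$, and for $\phi = 1/500$ over $Z$ (bounding the accuracy probability crudely by $1$ on $\neg Z$), and setting $q := \Prob{}{Z \mid \phi = 1/500, \Gamma} \ge c$,
\begin{align*}
\Prob{}{|\oracle(x,\query^*) - \phi| \le \tfrac{1}{2000} \mid \Gamma}
&\le \tfrac12 P_0 + \tfrac12\bigl(q P_1 + (1 - q)\bigr) \\
&\le \tfrac12 (P_0 + P_1) + \tfrac12 (1 - q) \le \tfrac12 + \tfrac12(1 - c) = 1 - \tfrac{c}{2},
\end{align*}
which is the desired bound with $\Omega(1) = c/2$. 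The only genuinely delicate part of the argument is the bookkeeping in the first paragraph: one must choose the conditioning $\Gamma$ so that $E$, $U$, and the oracle's state are all determined while $\phi$ and $B$ stay fresh and independent, and then apply the naturalness condition to that fixed state. Once this is set up, the event $Z$ makes the oracle's answer oblivious to $\phi$, and the rest is elementary. (We do not need Claim~\ref{clm:attackp-nat} here; that claim is used afterwards to convert inaccuracy with respect to $\phi$ into inaccuracy with respect to $\query^*(\dist)$.)
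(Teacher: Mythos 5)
Your proof is correct and follows the same strategy as the paper's: lower-bound the probability that $B$ misses $S \setminus T^R$ by a constant, observe that on this event naturalness makes the oracle's answer distribution oblivious to $\phi$, and conclude that the answer cannot land in both disjoint target intervals $I_0$ and $I_1$ with high probability. Your conditioning on the full recovery transcript $\Gamma$ and the explicit $P_0 + P_1 \le 1$ decomposition are a cleaner and more rigorous formalization of what the paper states somewhat informally (``it cannot give an answer that is simultaneously accurate\dots''), and you also correctly observe that the joint bound follows directly from the conditional one without needing Lemma~\ref{lem:realrecoverysucceeds-nat}, which the paper invokes unnecessarily at the end.
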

\begin{proof}
Consider the case where $\phi=1/500.$  We have
\begin{align}
&\prob{\forall i \in S \setminus T^{R}, \; m_i = 0 \left|\, |S \setminus T^{R}| \leq 500 \right.}
\ge \prod_{i=0}^{499} \left( \frac{(1-\phi)\pop - i}{\pop - i} \right) \notag \\
\ge &\left(\frac{(1-\phi)\pop - 499}{\pop-499} \right)^{500}
= \left(1 - \phi \frac{\pop}{\pop - 499} \right)^{500}
\ge \left(1 - 2\phi\right)^{500} 
= \left(1 - \frac{1}{250} \right)^{500} \geq \frac{1}{4e^2} \notag \notag
\end{align}
where we used that $\pop \geq 2000.$
On the other hand, when $\phi = 0$, we have
\[
\prob{\forall i \in S \setminus T^{R}, \; m_i = 0 \left|\, |S \setminus T^{R}| \leq 500
\right.} = 1.
\]
Note that because the oracle is natural it answer only depends on $m_i$ for $i\in S\setminus T^R.$ 
When the oracle sees only that $m_i = 0$ for every $i \in S \setminus T^{R}$,
it cannot give an answer that is simultaneously accurate to within $1/2000$
for both the case of $\phi = 0$ and for the case of $\phi = 1/500$. The
event $m_i = 0$ for every $i \in S \setminus T^{R}$ occurs with at least
probability $1/4e^2$ as shown above. Conditioned on this event, both cases $\phi=0$
and $\phi=1/500$ have constant probability. Hence, the answer of the oracle must be
far from $\phi$ with constant probability.  Formally,
\[
\Prob{\realgame}{\left.\left|\oracle(x, \query^{*}) - \phi\right| \leq
\frac{1}{2000}\,\right|\, |S \setminus T^{R}| \leq 500} \leq 1 - \Omega(1)\,.
\]
By Lemma\label{lem:realrecoverysucceeds-nat}, this implies
\[
\Prob{\realgame}{\left(\left|\oracle(x, \query^{*}) - \phi\right| \leq
\frac{1}{2000}\right) \land (|S \setminus T^{R}| \leq 500)} \leq 
\frac{1 - \Omega(1)}{1-o(1)}\leq 1-\Omega(1)\,.
\]
\end{proof}

\begin{lemma} \label{lem:realattacksucceeds2-nat}
If $\oracle$ is natural, then 
$$
\Prob{\realgame_{\sample,\dimension}[\oracle]}{(|S \setminus T^{R}| \leq 500) \land \left|\oracle(x, \query^{*}) -
\query^{*}(\dist)\right| \leq \frac{1}{2000}} \leq 1 - \Omega(1)
$$
\end{lemma}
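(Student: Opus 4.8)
The plan is to combine Claim~\ref{clm:attackp-nat} with Claim~\ref{clm:realattacksucceeds-nat} via a triangle inequality, and then union-bound away the small discrepancy between $\query^*(\dist)$ and $\phi$. Recall that Claim~\ref{clm:attackp-nat} tells us that $|\query^*(\dist) - \phi| \leq 1/2000$ always holds in the game, and Claim~\ref{clm:realattacksucceeds-nat} tells us that $\Pr[(|S \setminus T^R| \leq 500) \land |\oracle(x,\query^*) - \phi| \leq 1/2000] \leq 1 - \Omega(1)$. Intuitively, if the oracle were $(1/2000)$-accurate with respect to $\query^*(\dist)$, then since $\query^*(\dist)$ is within $1/2000$ of $\phi$, the oracle would in fact be $(1/1000)$-accurate with respect to $\phi$, contradicting the fact that it cannot pin down $\phi$.

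In slightly more detail, I would first observe that the event $\{|\oracle(x,\query^*) - \query^*(\dist)| \leq 1/2000\}$ together with the deterministic bound $|\query^*(\dist) - \phi| \leq 1/2000$ implies $|\oracle(x,\query^*) - \phi| \leq 1/1000$. So it suffices to upper-bound $\Pr[(|S \setminus T^R| \leq 500) \land |\oracle(x,\query^*) - \phi| \leq 1/1000]$ by $1 - \Omega(1)$. This is essentially Claim~\ref{clm:realattacksucceeds-nat} with the accuracy slack $1/2000$ loosened to $1/1000$; I would want to check that the argument in that claim — which rests on the oracle being unable to distinguish $\phi = 0$ from $\phi = 1/500$ after observing $m_i = 0$ for all $i \in S \setminus T^R$ — goes through with $1/1000$ in place of $1/2000$. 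It does, since the two candidate values $0$ and $1/500$ are separated by $1/500 = 2 \cdot (1/1000)$, so no single answer can be within $1/1000$ of both; the constants in the attack were in fact chosen with this slack in mind (the gap $1/500$ is comfortably more than $2/1000$). Alternatively, and more cleanly, I would just re-prove Claim~\ref{clm:realattacksucceeds-nat} from scratch but targeting the distributional answer directly: condition on $|S \setminus T^R| \leq 500$ and on the event $\{m_i = 0 \ \forall i \in S \setminus T^R\}$, which has probability $\geq 1/4e^2$ by the computation already done; on this event the oracle's answer (by naturalness) is a fixed distribution independent of $\phi$, while $\query^*(\dist)$ is within $1/2000$ of $\phi \in \{0, 1/500\}$, and both values of $\phi$ retain constant conditional probability, so with constant probability $|\oracle(x,\query^*) - \query^*(\dist)| > 1/2000$.

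The main obstacle — really the only thing requiring care — is bookkeeping the conditioning correctly: the event $|S \setminus T^R| \leq 500$ is not independent of the attack-phase randomness (the choice of $\phi$ and $B$), but it is determined entirely by the recovery phase, and by naturalness the oracle's answer to $\query^*$ depends on $\phi$ and $B$ only through the values $\{m_i : i \in S \setminus T^R\}$. So after conditioning on any fixed outcome of the recovery phase with $|S \setminus T^R| \leq 500$, the attack-phase analysis of Claim~\ref{clm:realattacksucceeds-nat} applies verbatim, and one then averages over recovery-phase outcomes. I would write this as: the conditional probability $\Pr[|\oracle(x,\query^*) - \query^*(\dist)| \leq 1/2000 \mid |S \setminus T^R| \leq 500]$ is at most $1 - \Omega(1)$ (combining the naturalness argument with Claim~\ref{clm:attackp-nat}), and then multiply through, exactly as in the last two displayed lines of the proof of Claim~\ref{clm:realattacksucceeds-nat}, to get the unconditional bound.
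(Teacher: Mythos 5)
Your first route --- pass from $\phi$ to $\query^*(\dist)$ via the triangle inequality, then invoke Claim~\ref{clm:realattacksucceeds-nat} with the slack loosened to $1/1000$ --- mirrors the paper's own terse ``triangle inequality'' proof, but the parenthetical ``the gap $1/500$ is comfortably more than $2/1000$'' is false: $1/500 = 2/1000$ exactly. An oracle that deterministically outputs $1/1000$ whenever it sees the all-zero restricted query is within $1/1000$ of both $\phi = 0$ and $\phi = 1/500$, so the $1/1000$-slack analogue of Claim~\ref{clm:realattacksucceeds-nat} is in fact false, and that reduction does not close. (Such an oracle does not contradict the lemma itself --- when $\phi = 0$ it is off from $\query^*(\dist) = 0$ by $1/1000 > 1/2000$ --- the breakage is only in the proposed intermediate step.)

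Your ``alternatively, and more cleanly'' route is the correct one, and it is what the proof actually needs. The one step you gloss over is why no single answer can be $1/2000$-close to $\query^*(\dist)$ under both values of $\phi$. Quoting Claim~\ref{clm:attackp-nat} merely as ``$|\query^*(\dist) - \phi| \le 1/2000$'' would leave $\query^*(\dist) \in [0, 1/2000]$ when $\phi = 0$ and $\query^*(\dist) \in [3/2000, 5/2000]$ when $\phi = 1/500$, a borderline separation of exactly $2/2000$. What makes the argument go through is the sharper, one-sided fact that the proof of Claim~\ref{clm:attackp-nat} actually establishes: $\query^*(\dist) = 0$ exactly when $\phi = 0$, and $\query^*(\dist) = \phi - \Pr[i \in T^R \land m_i = 1] \in [3/2000, 4/2000]$ when $\phi = 1/500$, giving a strict separation of at least $3/2000 > 2\cdot(1/2000)$. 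With that in hand, your conditioning bookkeeping --- fix a recovery-phase outcome with $|S \setminus T^R| \le 500$, condition on $\{m_i = 0 : i \in S \setminus T^R\}$, note that by naturalness the oracle's answer is then a fixed distribution independent of $\phi$, and that both values of $\phi$ retain constant conditional probability --- delivers the $1 - \Omega(1)$ bound exactly as you describe, and is more careful than the paper's own one-line invocation of the triangle inequality.
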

\begin{proof}
By Claim~\ref{clm:attackp-nat}, we have that $|\query^*(\dist)-\phi|\le 1/2000.$
Claim~\ref{clm:realattacksucceeds-nat} shows that
\[
\Prob{\realgame}{(|S \setminus T^{R}| \leq 500) \land \left|\oracle(x, \query^{*}) -
\phi\right| \leq \frac{1}{2000}} \leq 1 - \Omega(1)\,.
\]
The statement of the lemma now follows from a triangle inequality.
\end{proof}

\subsection{Putting it together}

\begin{theorem}
There is no natural oracle $\oracle$ that is $(1/2000)$-accurate for $\sample^{3+o(1)}$ adaptively chosen queries given $\sample$ samples.
\end{theorem}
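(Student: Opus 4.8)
The plan is to run the adversary of Figure~\ref{fig:attack-nat} against the hypothesized oracle and extract a contradiction from the two lemmas already established for the recovery and attack phases. Concretely, I would assume toward a contradiction that $\oracle$ is a natural oracle that is $(1/2000)$-accurate for $\queries$ adaptively chosen queries, taking $\queries = \queries(\sample) = \sample \cdot \length(2000\sample) + 1$, which is exactly the number of queries $\realgame_{\sample}[\oracle]$ issues (the $R \cdot \length = (\sample - 500)\length(2000\sample)$ queries of the recovery phase together with the single attack query $\query^*$). By Theorem~\ref{thm:robustfpcsexist} we have $\length(\pop) = \tilde{\Theta}(\pop^2)$, so $\queries(\sample) = \tilde{\Theta}(\sample^3) = \sample^{3+o(1)}$, and it therefore suffices to rule out oracles accurate for this value of $\queries$. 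I would also observe that $\realgame_{\sample}[\oracle]$ selects a fixed distribution (uniform over $\{1,\dots,2000\sample\}$, which embeds in $\bits^{\dimension}$ for any $\dimension \ge \log_2(2000\sample)$) and is thus a legitimate adversary in the game $\accuracygame_{\sample,\dimension,\queries}$, so it inherits the oracle's accuracy guarantee.

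Next I would chain the pieces together. Because $1/2000 < 1/12$, a $(1/2000)$-accurate oracle is in particular $(1/12)$-accurate, so Lemma~\ref{lem:realrecoverysucceeds-nat} applies and gives $\Prob{\realgame_{\sample}[\oracle]}{|S \setminus T^{R}| > 500} = o(1)$. Separately, since $\query^*$ is one of the at most $\queries$ queries, the $(1/2000)$-accuracy hypothesis gives $\Prob{\realgame_{\sample}[\oracle]}{|\oracle(x,\query^*) - \query^*(\dist)| > 1/2000} = o(1)$. A union bound over these two failure events then yields
$$
\Prob{\realgame_{\sample}[\oracle]}{(|S \setminus T^{R}| \le 500) \,\land\, |\oracle(x,\query^*) - \query^*(\dist)| \le 1/2000} \ge 1 - o(1).
$$
This is incompatible with Lemma~\ref{lem:realattacksucceeds2-nat} (which requires only that $\oracle$ be natural), which bounds the very same probability by $1 - \Omega(1)$: since $o(1) \to 0$ while $\Omega(1)$ is bounded below by a positive constant, the two estimates contradict each other for all sufficiently large $\sample$.

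All the substantive work is already done upstream: the fingerprinting-code tracing argument behind the recovery phase lives in Claim~\ref{clm:realaccurate-nat}, Lemma~\ref{lem:fpcpoptosample}, and Lemma~\ref{lem:realrecoverysucceeds-nat}, and the indistinguishability argument behind the attack phase lives in Claims~\ref{clm:attackp-nat} and~\ref{clm:realattacksucceeds-nat} and in Lemma~\ref{lem:realattacksucceeds2-nat}. The only thing this final step must do is the bookkeeping, and that is also where the one quantitatively important input enters: the near-quadratic length $\length(\pop) = \tilde{\Theta}(\pop^2)$ of the robust fingerprinting code of~\cite{BunUV14}, which is what makes the total query count $\sample^{3+o(1)}$ rather than larger. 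I also need to check that strengthening the error from $1/12$ to $1/2000$ preserves the applicability of both lemmas, which it does since smaller error is a stronger guarantee. I do not expect any genuine obstacle beyond this assembly.
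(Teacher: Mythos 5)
Your proposal is correct and takes essentially the same approach as the paper's own proof: both compute the query count $\queries = R\cdot\length + 1 = \sample^{3+o(1)}$ from the fingerprinting-code length, invoke Lemma~\ref{lem:realrecoverysucceeds-nat} for the recovery phase and Lemma~\ref{lem:realattacksucceeds2-nat} for the attack phase, and extract a contradiction with the $1-o(1)$ accuracy guarantee. The only difference is purely organizational — you derive a lower bound of $1-o(1)$ on the conjunction probability via a union bound and contradict Lemma~\ref{lem:realattacksucceeds2-nat} directly, whereas the paper uses the two lemmas to upper-bound the accuracy of $\query^*$ by $1-\Omega(1)$ — and these two presentations are logically equivalent.
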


\begin{proof}
The entire attack will consist of $\queries = R \cdot \length + 1$ queries, where $R = \sample - 500$ and $\length = \tilde{O}(\pop^2) = \tilde{O}(\sample^2)$.  Therefore the entire attack consists of $\queries = \sample^{3+o(1)}$ queries.  

Therefore, if $\oracle$ is natural, by Lemma~\ref{lem:realrecoverysucceeds-nat},
$$
\Prob{\realgame_{\sample}[\oracle]}{|S \setminus T^{R}| > 500} = o_{\sample}(1)
$$
By Lemma~\ref{lem:realattacksucceeds2-nat},
$$
\Prob{\realgame_{\sample}[\oracle]}{(|S \setminus T^{R}| \leq 500) \land \left|\oracle(x, \query^{*}) -
\query^{*}(\dist)\right| \leq \frac{1}{2000}} \leq 1 - \Omega(1)
$$
Combining these two statements gives
$$
\Prob{\realgame_{\sample}[\oracle]}{ \left|\oracle(x, \query^{*}) - \query^{*}(\dist)\right| \leq
\frac{1}{2000}} \leq 1 - \Omega(1)
$$
However, the definition of an accurate oracle asserts, in particular, that
$$
\Prob{\realgame_{\sample}[\oracle]}{ \left|\oracle(x, \query^{*}) - \query^{*}(\dist)\right| \leq \frac{1}{2000}}
\geq 1-o(1)
$$
and thereby we obtain a contradiction.
\end{proof}

\section{Lower bound for all computationally bounded oracles} \label{sec:attack}
In this section we will show that there is no computationally efficient oracle that is accurate for a sufficiently large number of adaptively chosen queries, and thereby formally establish Theorem~\ref{thm:main1} in the introduction.  
To do so, we will construct an adversary that chooses a distribution $\dist$, and then issues queries to the oracle such that no computationally efficient oracle given samples from $\dist$ can answer all the queries correctly.  

\subsection{Encryption schemes}
Our attack relies on the existence of a semantically secure private-key
encryption scheme that we briefly recall here.  
An encryption scheme is a triple of efficient algorithms $(\encgen, \encenc, \encdec)$ with the following syntax:
\begin{itemize}
\item $\encgen$ is a randomized algorithm that a security parameter $\security$ and outputs an $\keylength(\security)$-bit secret key for some non-decreasing function $\keylength \from \N \to \N$.  Formally, $\sk \getsr \encgen(1^\security)$.
\item $\encenc$ is a randomized algorithm that takes as input a secret key and a one-bit message $m \in \bits$ and outputs a ciphertext $\ct$.  Formally, $\ct \getsr \encenc(\sk, m)$.
\item $\encdec$ is a deterministic algorithm that takes as input a secret key and a ciphertext $\ct$ and outputs a decryption $m'$.  If the ciphertext $\ct$ was an encryption of $m$ under the key $\sk$, then $m' = m$.  Formally, if $\ct \getsr \encenc(\sk, m)$, then $\encdec(\sk, \ct) = m$ with probability $1$.
\end{itemize}

Roughly, security of the encryption scheme asserts that no polynomial time
adversary who does not know the secret key can distinguish encryptions of $m =
0$ from encryptions of $m = 1$, even if the adversary has access to an oracle that returns the
encryption of an arbitrary message under the unknown key.  For
convenience, we will require that this security property holds simultaneously
for an arbitrary polynomial number of secret keys. The existence of an encryption scheme with
this property follows immediately from the existence an ordinary semantically secure encryption scheme.  We 
start with the stronger definition only to simplify our proofs. A secure encryption scheme exists
under the minimal cryptographic assumption that one-way functions exist. The
formal definition of security is not needed until
Section~\ref{sec:securityproofs}.

\subsection{Description of the attack}

The adversary is specified in Figure~\ref{fig:attack}.  The adversary works in three phases.  In the first phase the adversary chooses the distribution $\dist$ randomly.  Then the oracle is given samples from $\dist$ and the adversary performs a recovery phase in order to identify (most of) the samples the oracle received.  Finally, the adversary uses knowledge of (most of) the samples to find a query that the oracle cannot answer accurately.  See Section~\ref{sec:proofoverview} for more informal description of the adversary.  In Figure~\ref{fig:attack}, $(\encgen, \encenc, \encdec)$ is an encryption scheme with key length $\keylength(\security)$ and $(\fpcgen, \fpctrace)$ is a fingerprinting code of length $\length(\pop)$.  Observe that $\realgame_{\sample, \dimension}$ is only well defined for pairs $n, d \in N$ such that there exists $\security \in \N$ for which $\keylength(1) + \lceil \log(2000\sample) \rceil \leq d$.  Through this section we will assume that $\sample = \sample(\dimension) = \poly(\dimension)$ and that $\dimension$ is a sufficiently large constant, which ensures that $\realgame_{\sample, \dimension}$ is well defined.

\begin{figure}[ht]
\begin{framed}
\begin{algorithmic}
\STATE{The distribution $\dist$:}
\INDSTATE[1]{Given parameters $\dimension, \sample$, let $\pop = 2000\sample$, let $R = \sample-500$}
\INDSTATE[1]{Let $\security$ be the largest integer such that $\keylength(\lambda) + \lceil \log \pop \rceil \leq \dimension$}
\INDSTATE[1]{For $i \in [\pop]$, let $\sk_{i} \getsr \encgen(1^{\security})$ and let $y_i = (i, \sk_{i})$}
\INDSTATE[1]{Let $\dist$ be the uniform distribution over $\set{y_1,\dots,y_{\pop}} \subseteq \bits^{\dimension}$}
\STATE{}
\STATE{Choose samples $x_1,\dots,x_{\sample} \getsr \dist$, let $x = (x_{1},\dots,x_{\sample})$}
\STATE{Let $S \subseteq [\pop]$ be the set of unique indices $i$ such that $(i, \sk_{i})$ appears in $x$}
\STATE{}
\STATE{Recovery phase:}
\INDSTATE[1]{Setup fingerprinting codes and ciphertexts:}
\INDSTATE[2]{Let $F^{1},\dots,F^{R} \getsr \fpcgen(1^{\pop})$, let $\length = \length(\pop)$ be code length}
\INDSTATE[2]{For $r = 1,\dots, R$, $i = 1,\dots,\pop$, $j = 1,\dots,\length$, let $\ct^{r}(i,j) = \encenc(\sk_i, F^{r}(i,j))$}
\INDSTATE[1]{Let $T^0 = \emptyset$}
\INDSTATE[1]{For round $r = 1,\dots,R$:}
\INDSTATE[2]{For $j = 1,\dots, \length \colon$}
\INDSTATE[3]{Define the query $\query^{r}_{j}(i', \sk')$ to be $\encdec(\sk',\ct^{r}(i', j))$ if $i' \not\in T^{r-1}$ and $0$ otherwise}
\INDSTATE[3]{Let $a^{r}_{j} = \oracle(x; \query^{r}_{j})$}
\INDSTATE[2]{Let $a^{r} = (a^{r}_{1},\dots, a^{r}_{\length})$}
\INDSTATE[2]{Let $i^{r} = \fpctrace(F^{r}, a^{r})$, and let $T^{r} = T^{r-1} \cup \{i^r\}$}
\STATE{}
\STATE{Attack phase:}
\INDSTATE[1]{Let $\phi = 0$ with probability $1/2$ and $\phi = 1/500$ with probability $1/2$}
\INDSTATE[1]{Sample a random subset $B\subseteq[\pop]$ of size $\phi \cdot \pop.$}
\INDSTATE[1]{Let $m_i = 1$ for all $i\in B$ and $0$ for all $i\in [\pop]\setminus B$}
\INDSTATE[1]{Let $\ct^*_{i} = \encenc(\sk_i, m_i)$ for all $i\in[\pop]$}
\INDSTATE[1]{Define the query $\query^{*}(i', \sk')$ to be $\encdec(\sk',\ct^{*}_{i'})$ if $i' \not\in T^{R}$ and $0$ otherwise}
\INDSTATE[1]{Let $a^* = \oracle(x, \query^*)$}
\end{algorithmic}
\end{framed}
\vspace{-6mm}
\caption{$\realgame_{\sample, \dimension}[\oracle]$}
\label{fig:attack}
\end{figure}

\subsection{Analysis of the recovery phase} \label{sec:recoveryanalysis}
The goal of the recovery phase of the algorithm is to identify most of the samples $x_{1},\dots,x_{\sample}$ that are held by the oracle.  Once the attacker has this information, he can use it to find queries that distinguish the oracle's keys from the population and force the oracle to be inaccurate.

In order to recover keys, the attacker will force the oracle to give answers that are consistent with the fingerprinting codes $F^{1},\dots,F^{R}$, which are then given to $\fpctrace$ to recover an element of the sample.  Our first claim establishes that an accurate oracle will indeed force the oracle to give answers consistent with the fingerprinting codes.
\begin{claim}\label{clm:realaccurate}
If $\oracle$ is $(1/12)$-accurate for $\sample \cdot \length(2000\sample) + 1$ adaptively chosen queries then for every polynomial $\sample = \sample(\dimension)$ and every sufficiently large $\dimension \in \N$,
$$
\Prob{\realgame_{\sample, \dimension}[\oracle]}{\textrm{$\forall r \in [R]$, for $.99 \length$ choices of $j \in [\length]$,} \atop  \left| \oracle(x, \query^{r}_j) - \Ex{i \in S \setminus T^{r-1}}{F^{r}(i,j)} \right| \leq 1/3} \geq 1 - o(1)
$$
\end{claim}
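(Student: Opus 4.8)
The plan is to follow the proof of Claim~\ref{clm:realaccurate-nat} (the natural-oracle version of this statement) essentially line for line; the only genuinely new point is a preliminary reduction, via the \emph{correctness} of the encryption scheme, to the situation handled there. Recall that in $\realgame_{\sample,\dimension}$ the distribution $\dist$ is uniform over $\set{(i,\sk_i)}_{i\in[\pop]}$ and the ciphertexts are $\ct^{r}(i,j) = \encenc(\sk_i, F^{r}(i,j))$. I would first observe that for any point $(i,\sk_i)$ in the support of $\dist$, perfect decryption correctness gives $\query^{r}_{j}(i,\sk_i) = \encdec(\sk_i,\ct^{r}(i,j)) = F^{r}(i,j)$ when $i\notin T^{r-1}$, and $\query^{r}_{j}(i,\sk_i) = 0$ when $i\in T^{r-1}$. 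Hence
\[
\query^{r}_{j}(\dist) = \frac{1}{\pop}\sum_{i\in[\pop]\setminus T^{r-1}} F^{r}(i,j),
\]
which is exactly the value the query had in the natural case. I emphasize that \emph{only} decryption correctness is used here; semantic security of the encryption scheme is irrelevant to this claim and enters only later, in the analogue of Lemma~\ref{lem:realrecoverysucceeds-nat}.

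Given this, the remainder is the argument of Claim~\ref{clm:realaccurate-nat} verbatim. First, since $|T^{r-1}|\le\sample$ and $\pop=2000\sample$, we get $|\query^{r}_{j}(\dist)-\Ex{i\in[\pop]}{F^{r}(i,j)}|\le |T^{r-1}|/\pop\le 1/12$. Second, the game asks the oracle at most $R\cdot\length(2000\sample)+1\le\sample\cdot\length(2000\sample)+1$ queries on input of $\sample$ iid samples from $\dist$, so the $(1/12)$-accuracy hypothesis gives, with probability $1-o(1)$, that $|\oracle(x,\query^{r}_{j})-\query^{r}_{j}(\dist)|\le 1/12$ for all $r\in[R]$, $j\in[\length]$; a triangle inequality then yields $|\oracle(x,\query^{r}_{j})-\Ex{i\in[\pop]}{F^{r}(i,j)}|\le 1/6$. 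Third, for each round $r$ (for which, exactly as in the downstream use of this claim, $|S\setminus T^{r-1}|\ge 500$), I would apply Lemma~\ref{lem:fpcpoptosample} to the set $S\setminus T^{r-1}$ and union-bound over the $R<\sample$ rounds to conclude, with probability $1-\negl(\sample)$, that $|\Ex{i\in[\pop]}{F^{r}(i,j)}-\Ex{i\in S\setminus T^{r-1}}{F^{r}(i,j)}|\le 1/6$ for all $r$ and for $.99\length$ of the columns $j$. A final triangle inequality combines the two high-probability events into the claimed bound $|\oracle(x,\query^{r}_{j})-\Ex{i\in S\setminus T^{r-1}}{F^{r}(i,j)}|\le 1/3$, holding for all $r\in[R]$ and $.99\length$ of the $j$'s with probability $1-o(1)$.

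I do not expect a real obstacle in this claim itself: it is a routine transcription once the decryption-correctness observation is made, and everything relevant about the oracle's behavior outside its sample (which is where the cryptography will be needed) is simply irrelevant to a statement about accuracy with respect to $\dist$. The only care points are bookkeeping --- $\realgame_{\sample,\dimension}$ is well defined because $\sample=\poly(\dimension)$ and $\dimension$ is a sufficiently large constant, which guarantees $\keylength(1)+\lceil\log(2000\sample)\rceil\le\dimension$; and Lemma~\ref{lem:fpcpoptosample} is applied to rounds with $|S\setminus T^{r-1}|\ge 500$, which holds for $r\le|S|-500$ since $|S\setminus T^{r-1}|\ge |S|-(r-1)$ no matter whether earlier tracing calls succeeded. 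The genuinely hard step of this section, where the cryptographic assumption is finally used, is the analogue of Lemma~\ref{lem:realrecoverysucceeds-nat}: there one can no longer invoke naturalness, and I would instead argue that, by semantic security applied simultaneously to the $\pop$ keys and across all $R$ rounds, a computationally bounded oracle's round-$r$ answer is computationally indistinguishable from an answer it could have produced with access only to the rows $F^{r}_{S\setminus T^{r-1}}$, so that the security of the error-robust fingerprinting code (Theorem~\ref{thm:robustfpcsexist}) still forces $\fpctrace(F^{r},a^{r})\in S\setminus T^{r-1}$.
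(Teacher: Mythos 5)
Your proposal is correct and follows essentially the same approach as the paper's proof, which itself mirrors the natural-oracle case almost verbatim: use decryption correctness to identify $\query^r_j(\dist)$ with the (truncated) mean of the $j$-th column of $F^r$, bound the truncation error by $\sample/\pop \le 1/12$, invoke $(1/12)$-accuracy and triangle inequality to get within $1/6$ of $\Ex{i\in[\pop]}{F^r(i,j)}$, and then apply Lemma~\ref{lem:fpcpoptosample} with a union bound over rounds. One small point in your favor: you correctly state the application of Lemma~\ref{lem:fpcpoptosample} to the set $S\setminus T^{r-1}$ (whose mean is what the claim's conclusion actually references), whereas the paper's displayed inequality writes $\Ex{i\in S}{F^r_S(i,j)}$ despite hypothesizing $|S\setminus T^{r-1}|\ge 500$; the intent is clearly the former, and you cleaned that up.
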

\begin{proof}
First we show that, 
$$
\Prob{\realgame}{\forall r \in [R], j \in [\length] \; \left| \oracle(x, \query^{r}_j) - \Ex{i \in [\pop]}{F^{r}(i,j)} \right| \leq 1/6} \geq 1-o(1)
$$
Observe that by definition, for every $r, j$,
\begin{align*}
&\Ex{(i, \sk_{i}) \getsr \dist}{\query^{r}_{j}(i, \sk_{i})} \\
={} &\frac{1}{\pop} \sum_{i \in [\pop] \setminus T^{r-1}} \encdec(\sk_{i}, \ct^{r}(i,j)) + \frac{1}{\pop} \sum_{i \in T^{r-1}} 0 \\
={} &\frac{1}{\pop} \sum_{i \in [\pop] \setminus T^{r-1}} F^{r}(i,j)
\end{align*}
where the last equality is because $\ct^{r}(i,j) \getsr \encenc(\sk_i, F^{r}(i,j))$.
Since $|T^{r-1}| \leq \sample$ and $F^{r}(i,j) \in \bits$,
\begin{equation} \label{eq:realconsistency0}
\forall r, j \; \left|\Ex{(i,\sk_{i}) \getsr \dist}{\query^{r}_{j}(i, \sk_{i})} -
\Ex{i \in [\pop]}{F^{r}(i,j)}\right| \leq \frac{\sample}{\pop} \leq
\frac{1}{12}
\end{equation}
In $\realgame$, the oracle's input $x$ consists of $\sample$ samples from
$\dist$.  Moreover, the total number of queries issued to the oracle is at
most $\queries = \sample \cdot \length(2000\sample) + 1$.  Since the oracle
is assumed to be $(1/12)$-accurate for $\queries$ queries given $\sample$ samples
in $\bits^{\dimension}$,
\begin{equation} \label{eq:realconsistency1} 
\Prob{\realgame}{\forall r, j \;
\left| \oracle(x, \query^{r}_j) - \Ex{(i, \sk_{i}) \getsr \dist}{\query^{r}_{j}(i,
\sk_{i})} \right| \leq \frac1{12}} \geq 1-o(1)
\end{equation}
Applying the triangle inequality to \eqref{eq:realconsistency0} and
\eqref{eq:realconsistency1}, this shows
\begin{equation} \label{eq:realconsistency2}
\Prob{\realgame}{\forall r, j \; \left| \oracle(x, \query^{r}_j) - \Ex{i \in
[\pop]}{F^{r}(i,j)} \right| \leq 1/6} \geq 1-o(1) \,.
\end{equation}

By Lemma~\ref{lem:fpcpoptosample}, since $|S \setminus T^{r-1}| \geq 500$, for every $r$, if $F^{r} \getsr \fpcgen(1^{\pop})$, then
$$
\Prob{}{\textrm{for $.99\length$ choices of $j$, $\left|\Ex{i \in [\pop]}{F^{r}(i,j)} - \Ex{i \in S}{F^{r}_{S}(i,j)}\right| \leq 1/6$}}\geq 1-\negl(\sample)
$$
where the probability is taken over the choice of $F^{r} \getsr \fpcgen$.
By a union bound over $r = 1,\dots,R$, where $R = \sample - 500$, if $F^{1},\dots,F^{R} \getsr \fpcgen$, then
\begin{equation} \label{eq:realconsistency3}
\Prob{}{\textrm{$\forall r$, for $.99\length$ choices of $j$, $\left|\Ex{i \in [\pop]}{F^{r}(i,j)} - \Ex{i \in S}{F^{r}_{S}(i,j)}\right| \leq 1/6$}}\geq 1-\negl(\sample)
\end{equation}

The claim now follows by combining \eqref{eq:realconsistency2} and \eqref{eq:realconsistency3}.
\end{proof}

Now that we have established Claim~\ref{clm:realaccurate}, we know that in
every round $r$, the oracle~$\oracle$ holding $x$ returns a set of answers that are consistent with the fingerprinting code $F^{r}_{S \setminus T^{r-1}}$.  However, this fact alone is not enough to guarantee that $\fpctrace$ returns a user in $S \setminus T^{r-1}$, because the queries to the oracle depend on rows of $F^{r}$ for users outside of $S \setminus T^{r-1}$, whereas the security of the fingerprinting code applies only to algorithms that only have access to the rows of $F^{r}$ for users in $S \setminus T^{r-1}$.  To remedy this problem we rely on the fact that the rows of $F^{r}$ outside of $S \setminus T^{r-1}$ are encrypted under keys $\sk$ that are not known to the oracle.  Thus, a computationally efficient oracle ``does not know'' those rows.  We can formalize this argument by comparing $\realgame$ to an $\idealgame$ where rows of $F^{r}$ for users outside of $S \setminus T^{r-1}$ are replaced with zeros, and argue that the adversary cannot distinguish between these two attacks without breaking the security of the encryption scheme.

\begin{figure}[ht]
\begin{framed}
\begin{algorithmic}
\STATE{The distribution $\dist$:}
\INDSTATE[1]{Given parameters $\dimension, \sample$, let $\pop = 2000 \sample$, let $R = \sample-500$}
\INDSTATE[1]{Let $\security$ be the largest integer such that $\keylength(\lambda) + \lceil \log \pop \rceil \leq \dimension$}
\INDSTATE[1]{For $i \in [\pop]$, let $\sk_{i} \getsr \encgen(1^{\security})$ and let $y_i = (i, \sk_{i})$}
\INDSTATE[1]{Let $\dist$ be the uniform distribution over $\set{y_1,\dots,y_{\pop}} \subseteq \bits^{\dimension}$}
\STATE{}
\STATE{Choose samples $x_1,\dots,x_{\sample} \getsr \dist$, let $x = (x_{1},\dots,x_{\sample})$}
\STATE{Let $S \subseteq [\pop]$ be the set of unique indices $i$ such that $(i, \sk_{i})$ appears in $x$}
\STATE{}
\STATE{Recovery phase:}
\INDSTATE[1]{Setup fingerprinting codes and ciphertexts:}
\INDSTATE[2]{Let $F^{1},\dots,F^{R} \getsr \fpcgen(1^{\pop})$, let $\length = \length(\pop)$ be code length}
\INDSTATE[2]{For $r = 1,\dots, R$, {\color{red} $i \in S$}, $j = 1,\dots,\length$, let $\ct^{r}(i,j) = \encenc(\sk_i, F^{r}(i,j))$}
\INDSTATE[2]{{\color{red} For $r = 1,\dots, R$, $i \in [\pop] \setminus S$, $j = 1,\dots,\length$, let $\ct^{r}(i,j) = \encenc(\sk_i, 0)$}}
\INDSTATE[1]{Let $T^0 = \emptyset$}
\INDSTATE[1]{For round $r = 1,\dots,R$:}
\INDSTATE[2]{For $j = 1,\dots, \length \colon$}
\INDSTATE[3]{Define the query $\query^{r}_{j}(i', \sk')$ to be $\encdec(\sk',\ct^{r}(i', j))$ if $i' \not\in T^{r-1}$ and $0$ otherwise}
\INDSTATE[3]{Let $a^{r}_{j} = \oracle(x; \query^{r}_{j})$}
\INDSTATE[2]{Let $a^{r} = (a^{r}_{1},\dots, a^{r}_{\length})$}
\INDSTATE[2]{Let $i^{r} = \fpctrace(F^{r}, a^{r})$, and let $T^{r} = T^{r-1} \cup \{i^r\}$}
\STATE{}
\STATE{Attack phase:}
\INDSTATE[1]{Let $\phi = 0$ with probability $1/2$ and $\phi = 1/500$ with probability $1/2$}
\INDSTATE[1]{Sample a random subset $B\subseteq[\pop]$ of size $\phi\cdot \pop.$}
\INDSTATE[1]{Let $m_i = 1$ for all $i\in B$ and $0$ for all $i\in [\pop]\setminus B$}
\INDSTATE[1]{{\color{red} For each $i \in S$, let $\ct^*_{i} = \encenc(\sk_i,
m_i)$, for each $i\in [\pop]\setminus S,$ let $\ct^*_{i} = \encenc(\sk_i, 0)$}}
\INDSTATE[1]{Define the query $\query^{*}(i', \sk')$ to be $\encdec(\sk',\ct^{*}_{i'})$ if $i' \not\in T^{R}$ and $0$ otherwise}
\INDSTATE[1]{Let $a^* = \oracle(x, \query^*)$}
\end{algorithmic}
\end{framed}
\vspace{-6mm}
\caption{$\idealgame_{\sample, \dimension}[\oracle]$}
\label{fig:idealattack}
\end{figure}

\begin{claim}\label{clm:indist1}
Let $Z_{1}$ be the event
$$
\textrm{$\forall r \in [R]$, for $.99 \length$ choices of $j \in [\length]$,}  \left| \oracle(x, \query^{r}_j) - \Ex{i \in S \setminus T^{r-1}}{F^{r}(i,j)} \right| \leq 1/3
$$
Assume $(\encgen, \encenc, \encdec)$ is a computationally secure encryption scheme with key length $\keylength(\security) = \security$ and let $\sample = \sample(\dimension)$ be any polynomial.  Then if $\oracle$ is computationally efficient, for every $\dimension \in \N$
$$
\left| \Prob{\idealgame_{\sample, \dimension}[\oracle]}{Z_{1}} - \Prob{\realgame_{\sample, \dimension}[\oracle]}{Z_{1}} \right| \leq \negl(\dimension)
$$
\end{claim}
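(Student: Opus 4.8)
The plan is a reduction to the security of the encryption scheme, built on the observation that $\realgame_{\sample,\dimension}$ and $\idealgame_{\sample,\dimension}$ differ \emph{only} in the ciphertexts belonging to users $i \notin S$: in $\realgame$ the ciphertext $\ct^{r}(i,j)$ (and $\ct^{*}_{i}$) encrypts the true bit $F^{r}(i,j)$ (resp.\ $m_i$), whereas in $\idealgame$ it encrypts $0$. The keys $\sk_i$ for $i \notin S$ never appear in the oracle's sample $x$, so an efficient oracle ``does not know'' them; and the event $Z_{1}$ is a polynomial-time-checkable function of the oracle's answers together with $F^{1},\dots,F^{R}$, $S$, and the sets $T^{0},\dots,T^{R-1}$. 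Hence any noticeable gap between $\Prob{\realgame}{Z_1}$ and $\Prob{\idealgame}{Z_1}$ yields a distinguisher for these encryptions, which the assumed security rules out.

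Concretely, I would build an adversary $\encadv$ against the (multi-key, left-or-right) security of $(\encgen,\encenc,\encdec)$ as follows. The main bookkeeping point is the order of sampling: since $\dist$ is uniform over the $\pop$ atoms indexed by $[\pop]$, the indices of the $\sample$ samples are independent of the keys, so $\encadv$ may first draw $x_1,\dots,x_\sample \getsr [\pop]$ and set $S$ to be the set of distinct indices, \emph{then} generate $\sk_i \getsr \encgen(1^{\security})$ by itself only for $i \in S$ (which lets it assemble the actual sample $x$ handed to $\oracle$), leaving the $\sk_i$ for $i \notin S$ as the unknown challenge keys. It samples $F^{1},\dots,F^{R} \getsr \fpcgen(1^{\pop})$ and the attack-phase quantities $\phi, B, (m_i)_i$ on its own. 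To build the ciphertexts it computes $\ct^{r}(i,j)$ (and $\ct^{*}_{i}$) directly when $i \in S$, and for $i \notin S$ it submits the message pair $(F^{r}(i,j),0)$ (resp.\ $(m_i,0)$) to its challenge oracle for key $\sk_i$ and takes the response. With all ciphertexts in hand, $\encadv$ runs the rest of the game verbatim --- forming the query circuits $\query^{r}_{j}$, calling $\oracle$, running $\fpctrace$, updating $T^{r}$, and executing the attack phase --- and outputs $1$ iff $Z_1$ occurred.

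If the challenge oracle returns the left messages then $\encadv$ perfectly simulates $\realgame_{\sample,\dimension}[\oracle]$, and if it returns the right messages it perfectly simulates $\idealgame_{\sample,\dimension}[\oracle]$; thus its distinguishing advantage equals $\left| \Prob{\idealgame}{Z_1} - \Prob{\realgame}{Z_1} \right|$. It remains to check efficiency and that the security parameter is large enough. The circuits $\query^{r}_{j}$ have size $\poly(\dimension)$ --- they hard-wire $\pop = 2000\sample = \poly(\dimension)$ poly-size ciphertexts plus a poly-size decryption circuit --- there are $R \cdot \length + 1 = \poly(\dimension)$ queries, and $\oracle$, $\fpcgen$, $\fpctrace$ are all efficient, so $\encadv$ runs in $\poly(\dimension)$ time and makes $\poly(\dimension)$ challenge calls across $\pop = \poly(\dimension)$ keys; this is exactly why the multi-key form of security is invoked, with switching all the ciphertexts simultaneously justified by the standard hybrid argument over ciphertexts. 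Finally, since $\keylength(\security)=\security$ and $\security$ is chosen maximal with $\security + \lceil \log \pop \rceil \le \dimension$ while $\pop = \poly(\dimension)$, we have $\security \ge \dimension - O(\log \dimension)$, so $\negl(\security) = \negl(\dimension)$ and the claimed bound follows. I expect the sampling-order issue to be the only genuine subtlety; everything else is routine.
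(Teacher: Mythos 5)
Your proposal is correct and is essentially the paper's own reduction: the paper's appendix builds exactly this simulator — it first samples the user indices $u_1,\dots,u_\sample\getsr[\pop]$, generates fresh keys only for $i\in S$, and uses a challenge encryption oracle for the keys outside $S$, with $\encoracle_1$ (encrypt the true message) giving a perfect simulation of $\realgame$ and $\encoracle_0$ (encrypt $0$) giving $\idealgame$, then outputs $1$ iff $Z_1$ holds. The only cosmetic difference is that you phrase security as left-or-right with a hybrid over ciphertexts, whereas the paper directly assumes a multi-key ``encrypt $m$ versus encrypt $0$'' oracle (noting this follows from ordinary semantic security); both are equivalent, and you correctly flag the sampling-order point that makes the simulation go through.
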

The proof follows from the security of the encryption scheme.  We defer the details to Section~\ref{sec:securityproofs}.

\begin{claim}\label{clm:idealaccurate}
If $\oracle$ is computationally efficient and $(1/12)$-accurate for $\sample
\cdot \length(2000\sample) + 1$ adaptively chosen queries, then for any polynomial $\sample = \sample(\dimension)$, and every sufficiently large $\dimension \in \N$,
$$
\Prob{\idealgame_{\sample, \dimension}[\oracle]}{\textrm{$\forall r \in [R]$, for $.99 \length$ choices of $j \in [\length]$,} \atop  \left| \oracle(x, \query^{r}_j) - \Ex{i \in S \setminus T^{r-1}}{F^{r}(i,j)} \right| \leq 1/3} \geq 1 - o(1)
$$
\end{claim}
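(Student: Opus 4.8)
The plan is to derive this claim in essentially two lines from the two preceding claims, observing first that the event displayed in the statement is literally the event $Z_1$ introduced in Claim~\ref{clm:indist1}.

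First I would apply Claim~\ref{clm:realaccurate}, which --- using only the accuracy hypothesis on $\oracle$ together with the concentration bound of Lemma~\ref{lem:fpcpoptosample} --- gives that for every polynomial $\sample = \sample(\dimension)$ and every sufficiently large $\dimension$, the event $Z_1$ holds in $\realgame_{\sample,\dimension}[\oracle]$ with probability $1 - o(1)$. Note that this step makes no use of computational efficiency. Then I would invoke Claim~\ref{clm:indist1}: since $\oracle$ is computationally efficient and the private-key encryption scheme is semantically secure, $\left| \Prob{\idealgame_{\sample,\dimension}[\oracle]}{Z_1} - \Prob{\realgame_{\sample,\dimension}[\oracle]}{Z_1} \right| \leq \negl(\dimension)$. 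The one bookkeeping point here is the normalization $\keylength(\security) = \security$ required by Claim~\ref{clm:indist1}, which is without loss of generality since any semantically secure scheme can be padded to have this key length.

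Combining the two, $\Prob{\idealgame_{\sample,\dimension}[\oracle]}{Z_1} \geq 1 - o(1) - \negl(\dimension)$; and since $\sample = \poly(\dimension)$, the $o_{\sample}(1)$ slack coming from the accuracy guarantee is also $o_{\dimension}(1)$, while $\negl(\dimension)$ is trivially $o_{\dimension}(1)$, so the bound is $1 - o(1)$ as claimed. I do not expect any genuine obstacle here: all of the real work has already been carried out in Claim~\ref{clm:realaccurate} (the fingerprinting-code concentration argument) and in Claim~\ref{clm:indist1} (the hybrid argument against the encryption scheme, deferred to Section~\ref{sec:securityproofs}). The only subtlety worth flagging is that Claim~\ref{clm:indist1} is phrased for precisely the indicator of $Z_1$, which is exactly the object we need to hand to the next step of the analysis (the computationally bounded analogue of Lemma~\ref{lem:realrecoverysucceeds-nat}), so that no additional data-processing argument is needed to pass between the two games.
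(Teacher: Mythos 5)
Your proof is correct and is precisely the paper's argument: the paper dismisses this claim with the single sentence ``The proof is immediate by combining Claim~\ref{clm:realaccurate} and Claim~\ref{clm:indist1},'' which is the two-step combination you spell out. The extra observations you make (that the displayed event is exactly $Z_1$, that $\sample=\poly(\dimension)$ turns the $o_\sample(1)$ slack into $o_\dimension(1)$, and that the key-length normalization is harmless) are all sound and only make explicit what the paper leaves implicit.
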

The proof is immediate by combining Claim~\ref{clm:realaccurate} and Claim~\ref{clm:indist1}.

Next, we argue that in $\idealgame$, with high probability $\fpctrace$ only outputs users contained in the sample $S$.
\begin{claim} \label{clm:idealrecovery}
If $\oracle$ is computationally efficient and $(1/12)$-accurate for $\sample
\cdot \length(2000\sample) + 1$ adaptively chosen queries, then for any polynomial $\sample = \sample(\dimension)$, and every sufficiently large $\dimension \in \N$,
$$
\Prob{\idealgame_{\sample,\dimension}[\oracle]}{|S \setminus T^{R}| > 500} \le o(1)
$$
\end{claim}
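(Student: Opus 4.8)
The plan is to reproduce, almost verbatim, the three-step structure of Lemma~\ref{lem:realrecoverysucceeds-nat}, with the ``natural oracle'' hypothesis replaced by the structural feature of $\idealgame$ that the rows of each $F^r$ belonging to users outside $S$ have been scrubbed from the oracle's view. \emph{Step 1 (accuracy $\Rightarrow$ consistency).} First I would invoke Claim~\ref{clm:idealaccurate}: with probability $1-o(1)$ over $\idealgame_{\sample,\dimension}[\oracle]$ the event $Z_1$ of Claim~\ref{clm:indist1} holds, i.e.\ for every $r \in [R]$ and for $.99\length$ of the columns $j$ we have $|a^r_j - \Ex{i \in S \setminus T^{r-1}}{F^r(i,j)}| \le 1/3$; in particular, whenever $S \setminus T^{r-1} \ne \emptyset$ this says exactly that $a^r \in \consistent(F^r_{S \setminus T^{r-1}})$.

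\emph{Step 2 (reduction to fingerprinting-code security).} Fix a round $r$ and condition on all the randomness of $\idealgame$ that is consumed before round $r$ --- the sample $S$, the keys $\sk_i$, the codes $F^1,\dots,F^{r-1}$, and the oracle's coins and state through round $r-1$ --- so that $T^{r-1}$, and hence $U := S \setminus T^{r-1}$, become fixed. Since $F^r \getsr \fpcgen(1^\pop)$ is drawn independently of all of this, and since in $\idealgame$ every ciphertext $\ct^r(i,j)$ with $i \notin S$ is an encryption of $0$ rather than of $F^r(i,j)$, while $\query^r_j$ hardcodes $0$ for $i \in T^{r-1}$, the entire round-$r$ interaction --- and in particular the vector $a^r$ --- is a randomized function of $F^r$ that depends on $F^r$ only through the submatrix $F^r_U$ (everything else is hardwired by the conditioning). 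Writing this function as $\cA(F^r_U)$ for a (possibly inefficient, which the fingerprinting-code definition permits) algorithm $\cA$, the security of the error-robust fingerprinting code of Theorem~\ref{thm:robustfpcsexist} gives
\[
\Prob{F^r \getsr \fpcgen(1^\pop)}{a^r \in \consistent(F^r_U) \,\land\, \fpctrace(F^r, a^r) \notin U} \le \negl(\pop).
\]
Averaging over the conditioning and union-bounding over $r = 1,\dots,R$ --- legitimate because $R = \sample - 500$ is polynomial, so $R \cdot \negl(\pop) = \negl(\pop)$ --- the event $W$ that for some $r$ we have $a^r \in \consistent(F^r_{S \setminus T^{r-1}})$ yet $\fpctrace(F^r, a^r) \notin S \setminus T^{r-1}$ has probability at most $\negl(\pop)$.

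\emph{Step 3 (induction and conclusion).} Working inside $Z_1 \land \neg W$, which has probability $\ge 1-o(1)$, I would show by induction on $r = 0,1,\dots,|S|-500$ that $T^r \subseteq S$ and $|T^r| = r$: the base case $T^0 = \emptyset$ is trivial, and if $T^{r-1} \subseteq S$ with $|T^{r-1}| = r-1$ and $r \le |S|-500$, then $|S \setminus T^{r-1}| = |S|-(r-1) \ge 501 \ge 500$, so the round-$r$ component of $Z_1$ gives $a^r \in \consistent(F^r_{S \setminus T^{r-1}})$, whereupon $\neg W$ forces $i^r = \fpctrace(F^r, a^r) \in S \setminus T^{r-1}$, so $T^r = T^{r-1} \cup \{i^r\} \subseteq S$ and $|T^r| = r$. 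Taking $r = |S|-500$ (note $|S|-500 \le \sample - 500 = R$, so $T^R \supseteq T^{|S|-500}$) yields $|S \setminus T^R| \le |S \setminus T^{|S|-500}| = 500$. Hence $\Prob{\idealgame_{\sample,\dimension}[\oracle]}{|S \setminus T^R| > 500} \le \prob{\neg Z_1} + \prob{W} = o(1) + \negl(\pop) = o(1)$.

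The step I expect to require the most care is Step~2: pinning down that the round-$r$ transcript of $\idealgame$ is a function of $F^r$ only through $F^r_{S \setminus T^{r-1}}$ --- this is precisely the role of the highlighted modifications in Figure~\ref{fig:idealattack}, and the ideal-game analogue of the ``natural'' property exploited in Lemma~\ref{lem:realrecoverysucceeds-nat} --- together with the conditioning argument that lets us treat the random set $S \setminus T^{r-1}$ as a fixed set when invoking the worst-case-over-$S$ security guarantee of the fingerprinting code. Note that, unlike Claim~\ref{clm:indist1}, computational efficiency of $\oracle$ is not needed here; it was used only to pass from $\realgame$ to $\idealgame$.
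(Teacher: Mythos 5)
Your proof is correct and follows essentially the same route as the paper's: invoke Claim~\ref{clm:idealaccurate} for consistency, observe that in $\idealgame$ the round-$r$ transcript depends on $F^r$ only through $F^r_{S\setminus T^{r-1}}$ (because ciphertexts for $i \notin S$ encrypt $0$ and rounds $r' \neq r$ use independent codes), apply fingerprinting-code security per round with a union bound, and conclude that tracing succeeds in enough rounds to bring $|S \setminus T^R|$ down to $500$. Your Step~3 makes explicit, via induction, the implication that the paper states in one line as a contrapositive, but the content is identical.
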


\begin{proof}[Proof]
Fix any round $r \in \set{1,\dots,|S|-500}$ and let $U = S \setminus T^{r-1}$.
By the security of the fingerprinting code, we have that for every algorithm $\cA$
\begin{equation*}
\Prob{F^r \getsr \fpcgen(1^{\pop})}{\textrm{$($$\cA(F^{r}_{U}) \in
\consistent(F^{r}_{U})$$)$ $\land$ $($$\fpctrace(F^{r}, \cA(F^{r}_{U}))
\not\in U$$)$}}  \le \negl(\sample)
\end{equation*}
Observe that in $\idealgame$, the oracle $\oracle$ is never given any input that depends on rows of $F^{r}$ that belong to users outside of $U$:  The queries issued in rounds $r' \neq r$ depend only on $F^{r'}$, which is independent from $F^{r}$.  And in round $r$ the query only depends on ciphertexts $\ct^{r}(i,j)$ for $i \not\in T^{r-1}$, which are all independent of $F^{r}(i,j)$ whenever $i \not\in U$.  Therefore we have
\begin{equation}
\Prob{F^r \getsr \fpcgen(1^{\pop})}{\textrm{$($$a^{r} \in
\consistent(F^{r}_{U})$$)$ $\land$ $($$\fpctrace(F^{r}, a^{r}) \not\in U$$)$}}
\le \negl(\sample)
\end{equation}
By a union bound over $r = 1,\dots,|S|-500$, we also have
\begin{equation} \label{eq:idealtrace0}
\Prob{F^{1},\dots,F^{|S|-500} \getsr \fpcgen(1^{\pop})}{
\begin{array}{ll}
\exists r &   \left(a^{r} \in \consistent(F^{r}_{S \setminus
T^{r-1}})\right)\\
& \land \left(\fpctrace(F^{r}, a^{r}) \not\in S \setminus T^{r-1}\right)
\end{array}} 
\le \negl(\sample)
\end{equation}

By Claim~\ref{clm:idealaccurate}, we have that with probability at least $1 -
o(1)$, it holds that for all $r \in \set{1,\dots,|S|-500}$ and for $.99 \length$ choices of $j \in [\length]$
\begin{equation*}
\left| a^{r}_{j} - \Ex{i \in U}{F^{r}(i,j)} \right| \leq 1/3
\end{equation*}
Note that in order to apply Claim~\ref{clm:idealaccurate} we have used the fact that when $r \in \set{1,\dots,|S|-500}$, $|S \setminus T^{r-1}| \geq 500$.
If this condition is satisfied, then indeed $a^{r} \in \consistent(F^{r}_{U})$.  Therefore, combining with \eqref{eq:idealtrace0}, we have
\begin{equation*}
\Prob{F^{1},\dots,F^{|S|-500} \getsr \fpcgen(1^{\pop})}{\exists r, \fpctrace(F^{r}, a^{r}) \not\in S \setminus T^{r-1}} \leq o(1)
\end{equation*}
Now the claim follows by observing that 
\begin{align*}
&\Prob{\idealgame_{\sample, \dimension}[\oracle]}{|S \setminus T^{R}| > 500} \\
\leq{} &\Prob{\idealgame_{\sample, \dimension}[\oracle]}{\exists r \in \set{1,\dots,|S|-500} \; \fpctrace(F^{r}, a^{r}) \not\in S \setminus T^{r-1}} \leq o(1)
\end{align*}
\end{proof}

Finally, we show that if $|S \setminus T^{R}| \leq 500$ with high probability in $\idealgame$, then $|S \setminus T^{R}| \leq 500$ with high probability in $\realgame$.  Again, we do so by arguing that $\realgame$ and $\idealgame$ are computationally indistinguishable.
\begin{claim}\label{clm:indist2}
Let $Z_{2}$ be the event
$\left\{ |S \setminus T^{R}| \leq 500\right\}.$
Assume $(\encgen, \encenc, \encdec)$ is a computationally secure encryption scheme with key length $\keylength(\security) = \security$ and let $\sample = \sample(\dimension)$ be any polynomial.  Then if $\oracle$ is computationally efficient, for every $\dimension \in \N$
$$
\left| \Prob{\idealgame_{\sample, \dimension}[\oracle]}{Z_{2}} -
\Prob{\realgame_{\sample, \dimension}[\oracle]}{Z_{2}} \right| \leq
\negl(\dimension)\,.
$$
\end{claim}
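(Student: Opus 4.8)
The plan is to prove Claim~\ref{clm:indist2} by essentially the same reduction used for Claim~\ref{clm:indist1}: a computationally efficient oracle whose probability of producing the event $Z_{2} = \set{|S \setminus T^{R}| \leq 500}$ differs noticeably between $\realgame_{\sample,\dimension}$ and $\idealgame_{\sample,\dimension}$ yields an adversary against the (multi-key) semantic security of $(\encgen,\encenc,\encdec)$. The only difference between the two games is that for every index $i \in [\pop]\setminus S$, the ciphertexts $\ct^{r}(i,j)$ and $\ct^{*}_{i}$ are honest encryptions of $F^{r}(i,j)$ and $m_i$ in $\realgame$, but encryptions of $0$ in $\idealgame$; and the secret keys $\sk_i$ for $i \notin S$ never appear in the oracle's sample $x$. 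So an efficient oracle cannot tell the games apart, and in particular the probability of $Z_{2}$ is nearly the same in both.

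Concretely, I would construct an adversary $\encadv$ against the encryption scheme that internally runs the attack with $\oracle$ as a subroutine. First $\encadv$ draws $x_{1},\dots,x_{\sample}$ uniformly from $[\pop]$ and sets $S$ to be the set of distinct indices; this is legitimate because the distribution of $S$ does not depend on the keys. For $i \in S$, $\encadv$ generates $\sk_i \getsr \encgen(1^{\security})$ itself and forms the sample $x = ((i_{1},\sk_{i_{1}}),\dots,(i_{\sample},\sk_{i_{\sample}}))$; for $i \in [\pop]\setminus S$ it uses the challenge keys of the security game, whose secret keys it does not know. It then runs the recovery and attack phases honestly: it samples $F^{1},\dots,F^{R} \getsr \fpcgen(1^{\pop})$, picks $\phi$ and $B$, and produces the ciphertexts by computing $\ct^{r}(i,j) = \encenc(\sk_i, F^{r}(i,j))$ and $\ct^{*}_{i} = \encenc(\sk_i, m_i)$ directly when $i \in S$, and by calling the challenge oracle on the message $F^{r}(i,j)$ (resp.\ $m_i$) when $i \notin S$. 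With these ciphertexts hard-wired it builds the query circuits $\query^{r}_{j}$ and $\query^{*}$, feeds them to $\oracle$, applies $\fpctrace$ to the answers to obtain $T^{1},\dots,T^{R}$, and outputs $1$ iff $Z_{2}$ holds. If the challenge oracle encrypts its messages faithfully, $\encadv$'s simulation is distributed exactly as $\realgame_{\sample,\dimension}[\oracle]$; if it always encrypts $0$, the simulation is exactly $\idealgame_{\sample,\dimension}[\oracle]$. Hence $\bigl|\Prob{\idealgame_{\sample,\dimension}[\oracle]}{Z_{2}} - \Prob{\realgame_{\sample,\dimension}[\oracle]}{Z_{2}}\bigr|$ equals the distinguishing advantage of $\encadv$, which is $\negl(\security)$ by semantic security.

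To close the argument I would verify the two routine points. For efficiency: since $\sample = \poly(\dimension)$, the whole attack makes only $\poly(\dimension)$ queries, each of size $\poly(\dimension)$ (each query circuit hard-wires $\poly(\dimension)$ ciphertexts together with one copy of the decryption circuit), and $\oracle$ is computationally efficient, so $\encadv$ runs in $\poly(\dimension)$ time; crucially, deciding $Z_{2}$ requires only $S$, which $\encadv$ chose, and $T^{R}$, which $\encadv$ already computed via $\fpctrace$ from quantities it holds — no secret key is ever needed for this. For the parameter: since $\keylength(\security) = \security$ and $\security$ is the largest integer with $\security + \lceil\log\pop\rceil \leq \dimension$, we have $\security = \dimension - O(\log\sample) = \Theta(\dimension)$, so $\negl(\security) = \negl(\dimension)$, which is the bound in the claim.

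The step I expect to be the main obstacle is making sure $Z_{2}$ is an event that $\encadv$ can evaluate by itself without any secret key. This is exactly what dictates the order of operations — $\encadv$ must fix $S$, and hence the honest-versus-challenge split of the keys, before anything else, so that it knows $S$ — and it is the reason the multi-key form of semantic security is convenient: it lets us switch all of the $i \in [\pop]\setminus S$ ciphertexts in a single step, avoiding a hybrid over individual keys whose intermediate games $\encadv$ could not simulate (simulating them would require knowing the secret keys of some of the $i \notin S$, which it does not have).
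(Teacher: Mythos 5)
Your reduction is precisely the one the paper gives in Section~\ref{sec:securityproofs}: sample $S$ and the in-sample keys yourself, route the out-of-sample ciphertexts through the multi-key challenge oracle, run $\oracle$ on the resulting query circuits, and output whether $Z_2$ holds, with the advantage bound $\negl(\security)=\negl(\dimension)$ following from $\security = \dimension - O(\log\dimension)$. The observations you flag as the main subtleties --- fixing $S$ before the key split so $Z_2$ is efficiently decidable by the reduction, and invoking multi-key security to avoid a per-key hybrid --- are exactly the ones the paper's simulator is designed around, so the proposal is correct and matches the paper's proof.
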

The proof follows from the security of the encryption scheme.  We defer the details to Section~\ref{sec:securityproofs}.

\begin{lemma} \label{lem:realrecoverysucceeds}
If $\oracle$ is computationally efficient and $(1/12)$-accurate for $\sample
\cdot \length(2000\sample) + 1$ adaptively chosen queries, then for any polynomial $\sample = \sample(\dimension)$, and every sufficiently large $\dimension \in \N$,
$$
\Prob{\realgame_{\sample,\dimension}[\oracle]}{|S \setminus T^{R}| > 500} = o(1)
$$
\end{lemma}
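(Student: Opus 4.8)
The plan is to deduce this statement for the real attack $\realgame$ from the analogous statement for the ideal attack $\idealgame$ (Claim~\ref{clm:idealrecovery}), using that the two games are computationally indistinguishable from the point of view of an efficient oracle (Claim~\ref{clm:indist2}). Both ingredients are already in hand: Claim~\ref{clm:idealrecovery} gives $\Prob{\idealgame_{\sample,\dimension}[\oracle]}{|S \setminus T^{R}| > 500} = o(1)$ under exactly the hypotheses assumed here, and Claim~\ref{clm:indist2} states that the probability of $Z_{2} = \set{|S \setminus T^{R}| \leq 500}$ differs by at most $\negl(\dimension)$ between $\realgame_{\sample,\dimension}[\oracle]$ and $\idealgame_{\sample,\dimension}[\oracle]$ whenever $\oracle$ is efficient and the encryption scheme is secure --- the latter being guaranteed by the one-way functions assumption that the whole construction rests on.

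Given these, the proof is a short computation: writing the event in question as the complement of $Z_{2}$,
\begin{align*}
\Prob{\realgame_{\sample,\dimension}[\oracle]}{|S \setminus T^{R}| > 500}
&= 1 - \Prob{\realgame_{\sample,\dimension}[\oracle]}{Z_{2}} \\
&\leq 1 - \Prob{\idealgame_{\sample,\dimension}[\oracle]}{Z_{2}} + \negl(\dimension) \\
&= \Prob{\idealgame_{\sample,\dimension}[\oracle]}{|S \setminus T^{R}| > 500} + \negl(\dimension) \\
&\leq o(1) + \negl(\dimension).
\end{align*}
The one point that needs a little care is the asymptotics: the $o(1)$ coming from Claim~\ref{clm:idealrecovery} (and from the definition of accuracy) is a function of $\sample$ tending to $0$, while Claim~\ref{clm:indist2} contributes an error $\negl(\dimension)$. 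Since we work in the regime $\sample = \sample(\dimension) = \poly(\dimension)$ with $\dimension$ growing, we have $\sample \to \infty$ as $\dimension \to \infty$, so $o_{\sample}(1) = o_{\dimension}(1)$, and adding $\negl(\dimension)$ still leaves a bound that is $o_{\dimension}(1) = o(1)$.

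I do not expect any genuine obstacle in this lemma: all of the real work --- controlling tracing in $\idealgame$ via fingerprinting-code security, and the reduction to encryption security --- has already been carried out in Claims~\ref{clm:idealrecovery} and~\ref{clm:indist2}. This lemma is simply the bridge that carries the conclusion back to $\realgame$, which is the attack actually mounted against $\oracle$ in the proof of the main theorem, and it is precisely the general-case counterpart of Lemma~\ref{lem:realrecoverysucceeds-nat} for natural oracles.
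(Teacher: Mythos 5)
Your proposal is correct and matches the paper's proof exactly: the paper states that Lemma~\ref{lem:realrecoverysucceeds} is ``immediate by combining Claim~\ref{clm:idealrecovery} and Claim~\ref{clm:indist2},'' which is precisely the computation you spell out. Your extra care with the $o_{\sample}(1)$ versus $\negl(\dimension)$ asymptotics is a welcome bit of explicitness but not a deviation in approach.
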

The proof is immediate by combining Claim~\ref{clm:idealrecovery} and Claim~\ref{clm:indist2}.

\subsection{Analysis of the attack phase} \label{sec:attackanalysis}
By the arguments of Section \ref{sec:recoveryanalysis}, we know that if the oracle is computationally efficient and 
accurately answers all the queries in the recovery phase, then with high probability $|S \setminus T^{R}| \leq 
500$.  In this section we will show that if these events indeed occur, then the probability that the oracle answers $
\query^*$ accurately in the attack phase is bounded away from $1$ by a constant.  Since an accurate oracle
is required to answer each query accurately with probability at least $1-o(1)$, we will have obtained a contradiction.

To begin with we show that the population answer~$\query^*(\dist)$ is close to the
value~$\phi$ in the real attack.
\begin{claim}\label{clm:attackp}
For every polynomial $\sample = \sample(\dimension)$ and every sufficiently large $\dimension \in \N$, in $\realgame_{\sample, \dimension}[\oracle],$ we have $|\query^*(\dist)-\phi|\le 1/2000.$ 
\end{claim}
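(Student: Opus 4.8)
The plan is to follow the structure of the proof of Claim~\ref{clm:attackp-nat} essentially verbatim, inserting one appeal to the correctness of the encryption scheme in place of the trivial identity used there. First I would unwind the definition of $\query^{*}$ in $\realgame_{\sample,\dimension}[\oracle]$. Since $\dist$ is uniform over $\set{(1,\sk_1),\dots,(\pop,\sk_\pop)}$ and $\query^{*}(i',\sk')$ equals $\encdec(\sk',\ct^{*}_{i'})$ for $i'\notin T^{R}$ and $0$ otherwise, we get
\[
\query^{*}(\dist) = \Ex{(i,\sk_i)\getsr\dist}{\query^{*}(i,\sk_i)} = \frac{1}{\pop}\sum_{i\in[\pop]\setminus T^{R}} \encdec(\sk_i,\ct^{*}_i).
\]
Because $\ct^{*}_i\getsr\encenc(\sk_i,m_i)$, the (perfect) correctness of the encryption scheme gives $\encdec(\sk_i,\ct^{*}_i)=m_i$, so $\query^{*}(\dist)=\frac1\pop\sum_{i\in[\pop]\setminus T^{R}}m_i$.

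Next I would split on the value of $\phi$. If $\phi=0$ then $B=\emptyset$, all the $m_i$ are $0$, and $\query^{*}(\dist)=0=\phi$. If $\phi=1/500$ then $\sum_{i\in[\pop]}m_i=|B|=\phi\pop$, whence
\[
\query^{*}(\dist)=\phi-\frac1\pop\sum_{i\in T^{R}}m_i=\phi-\Prob{i\getsr[\pop]}{i\in T^{R}\wedge m_i=1},
\]
and the subtracted term is nonnegative and at most $|T^{R}|/\pop$. Finally I would note that $T^{0}=\emptyset$ and each round adds at most one index to $T$, so $|T^{R}|\le R=\sample-500$ and hence $|T^{R}|/\pop\le(\sample-500)/(2000\sample)\le 1/2000$; combining the two cases yields $|\query^{*}(\dist)-\phi|\le 1/2000$.

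I do not expect any real obstacle here: the claim is a one-line computation. The only subtlety is that replacing $\query^{*}(i,\sk_i)$ by $m_i$ for $i\notin T^{R}$ relies on perfect correctness of $\encdec\circ\encenc$, which is exactly what the syntax of the encryption scheme in Section~\ref{sec:attack} provides (the final bullet of that definition), together with the standing assumption that $\sample=\poly(\dimension)$ and $\dimension$ is large enough that $\realgame_{\sample,\dimension}$ is well defined.
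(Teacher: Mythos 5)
Your proposal is correct and follows essentially the same route as the paper's proof: case-split on $\phi$, compute $\query^*(\dist)$ by unwinding the query, and bound the correction term by $|T^R|/\pop \le (\sample-500)/(2000\sample) \le 1/2000$. The only difference is that you spell out the appeal to perfect correctness of $\encdec\circ\encenc$ (which the paper leaves implicit) and correctly use $|T^R|\le R$ rather than equality; both are fine and faithful to the argument.
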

\begin{proof}
The case $\phi=0$ we have $\query^*(\dist)=0.$ If $\phi=1/500,$ then we have that $\E_{i\in[\pop]} m_i =
\phi$ since $|B|=\phi\cdot \pop.$ Hence,
\[
\query^*(\dist)
= \E_{(i,\sk)\sim\dist} \query^*(i,\sk)
= \phi - \Prob{i \getsr [\pop]}{i\in T^R \land m_i = 1}
\]
where $\prob{i\in T^R \land m_i = 1} \leq \prob{i \in T^{R}}= |T^R|/\pop=(\sample-500)/\pop \le 1/2000.$
\end{proof}

We will show that the oracle cannot guess the value of $\phi$ with sufficiently high 
probability in $\idealgame$ provided that the recovery phase succeeded.
\begin{claim} \label{clm:idealattacksucceeds}
For every polynomial $\sample = \sample(\dimension)$ and every sufficiently large $\dimension \in \N$,
$$
\Prob{\idealgame_{\sample, \dimension}[\oracle]}{\left( |S \setminus T^{R}| \leq 500 \right) \land \left|\oracle(x, \query^{*}) -
\phi\right| \leq \frac{1}{2000}} \leq 1 - \Omega(1)
$$
\end{claim}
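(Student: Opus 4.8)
The plan is to follow the proof of Claim~\ref{clm:realattacksucceeds-nat} essentially line for line, replacing the appeal to naturalness of $\oracle$ by the structural feature of $\idealgame$ that the attack-phase ciphertexts for users outside $S$ are encryptions of $0$, and hence carry no information about the secret bit $\phi$. The resulting argument is purely information-theoretic: it uses neither efficiency nor accuracy of $\oracle$, which matches the fact that the claim imposes no hypothesis on the oracle.

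First I would condition on all the randomness that $\idealgame$ generates before $\phi$ is drawn: the keys $\sk_1,\dots,\sk_\pop$, the sample $x$ (hence $S$), the matrices $F^1,\dots,F^R$, the entire recovery-phase transcript, the set $T^R$, and the state of $\oracle$ entering the attack phase; call this bundle $\rho$. The event $Z_2 = \{|S\setminus T^R|\le 500\}$ is determined by $\rho$, so it suffices to prove $\prob{|a^*-\phi|\le 1/2000 \mid \rho}\le 1-\Omega(1)$ for every $\rho$ with $\rho\in Z_2$; fix such a $\rho$. Let $E$ denote the event $\{m_i=0 \text{ for every } i\in S\setminus T^R\}$. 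The binomial-tail computation already carried out in Claim~\ref{clm:realattacksucceeds-nat} (it uses only $|S\setminus T^R|\le 500$ and $\pop=2000\sample\ge 2000$) gives $\prob{E\mid \phi=0,\rho}=1$ and $\prob{E\mid\phi=\tfrac1{500},\rho}\ge \tfrac1{4e^2}$.

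The heart of the argument is that, conditioned on $\rho$ and on $E$, the distribution of the circuit $\query^*$ --- and hence of $a^*=\oracle(x,\query^*)$ --- is the same for $\phi=0$ and for $\phi=\tfrac1{500}$. The circuit $\query^*$ is built from $T^R$ together with the ciphertexts $\{\ct^*_{i'}:i'\notin T^R\}$, and for every such $i'$ we have $\ct^*_{i'}=\encenc(\sk_{i'},0)$: if $i'\in[\pop]\setminus(S\cup T^R)$ this holds by the definition of $\idealgame$, and if $i'\in S\setminus T^R$ then $E$ forces $m_{i'}=0$, so $\ct^*_{i'}=\encenc(\sk_{i'},m_{i'})=\encenc(\sk_{i'},0)$. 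Thus under $(\rho,E)$ the circuit $\query^*$ is, in both cases, a hardcoded copy of $T^R$ together with fresh encryptions of $0$ under the fixed keys, so it has a single distribution $\mu_\rho$ independent of $\phi$; since given $\rho$ the oracle is a fixed randomized map of its input query, $a^*$ then has a single distribution $\nu_\rho$ independent of $\phi$ under $(\rho,E)$. Writing $I_0=[-\tfrac1{2000},\tfrac1{2000}]$ and $I_1=[\tfrac1{500}-\tfrac1{2000},\tfrac1{500}+\tfrac1{2000}]$, which are disjoint, and using $\prob{a^*\notin I_0\mid\phi=0,\rho}=1-\nu_\rho(I_0)$ (as $E$ holds automatically when $\phi=0$) together with $\prob{a^*\notin I_1\mid\phi=\tfrac1{500},E,\rho}=1-\nu_\rho(I_1)$, we obtain
\begin{align*}
\prob{|a^*-\phi|>\tfrac1{2000}\mid\rho}
&= \tfrac12(1-\nu_\rho(I_0)) + \tfrac12\prob{a^*\notin I_1\mid\phi=\tfrac1{500},\rho}\\
&\ge \tfrac12(1-\nu_\rho(I_0)) + \tfrac12\cdot\prob{E\mid\phi=\tfrac1{500},\rho}\cdot(1-\nu_\rho(I_1))\\
&\ge \tfrac1{8e^2}(1-\nu_\rho(I_0)) + \tfrac1{8e^2}(1-\nu_\rho(I_1)) \ \ge\ \tfrac1{8e^2}\bigl(2-\nu_\rho(I_0)-\nu_\rho(I_1)\bigr)\ \ge\ \tfrac1{8e^2}.
\end{align*}
Since this bound is uniform over $\rho\in Z_2$, it follows that $\prob{|a^*-\phi|\le\tfrac1{2000}\mid Z_2}\le 1-\tfrac1{8e^2}$, whence $\prob{Z_2\wedge|a^*-\phi|\le\tfrac1{2000}}\le\prob{Z_2}\,(1-\tfrac1{8e^2})\le 1-\Omega(1)$, which is the claim.

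The only delicate point is the middle step, specifically the representation of $\query^*$ as a circuit: one must take the natural implementation that hardcodes only the ciphertexts $\ct^*_{i'}$ with $i'\notin T^R$, and \emph{not} the ciphertexts $\ct^*_i=\encenc(\sk_i,m_i)$ for $i\in S\cap T^R$, which can depend on $\phi$; since those ciphertexts are never used in evaluating $\query^*$, omitting them is without loss of generality. Everything else --- the bookkeeping to reduce to a fixed $\rho$ in the presence of a stateful randomized oracle, and the elementary tail estimate for $\prob{E}$ --- is identical to the natural-oracle case and can simply be cited from there.
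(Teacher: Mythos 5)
Your proof is correct and follows essentially the same route as the paper's: show the event $E=\{m_i=0\ \forall i\in S\setminus T^R\}$ occurs always when $\phi=0$ and with probability $\geq 1/(4e^2)$ when $\phi=1/500$, argue that conditioned on $E$ the query $\query^*$ carries no information about $\phi$, and conclude via disjointness of the two answer intervals. Your explicit remark that the circuit implementing $\query^*$ must hardcode only the ciphertexts $\ct^*_{i'}$ with $i'\notin T^R$ (and not those for $i\in S\cap T^R$, which do depend on $\phi$ and would otherwise leak it to the oracle since it holds the keys for $S$) is a genuine subtlety that the paper's one-line justification ``the oracle only sees $m_i$ for $i\in S\setminus T^R$'' quietly relies on, and is worth making explicit.
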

\begin{proof}
Consider the case where $\phi=1/500.$  We have
\begin{align}
&\prob{\forall i \in S \setminus T^{R}, \; m_i = 0 \left|\, |S \setminus T^{R}| \leq 500 \right.}
={}\prod_{i=0}^{499} \left( \frac{(1-\phi)\pop - i}{\pop - i} \right) \notag \\
\leq{} &\left(\frac{(1-\phi)\pop - 499}{\pop-499} \right)^{500}
={} \left(1 - \phi \frac{\pop}{\pop - 499} \right)^{500}
\leq{} \left(1 - 2\phi\right)^{500} \tag{$\pop \geq 2000$} \\
={} &\left(1 - \frac{1}{250} \right)^{500} \geq \frac{1}{4e^2} \notag \notag
\end{align}
On the other hand, when $\phi = 0$, we have
\[
\prob{\forall i \in S \setminus T^{R}, \; m_i = 0 \left| |S \setminus T^{R}| \leq 500
\right.} = 1.
\]
Note that in $\idealgame$ the oracle only sees $m_i$ for $i\in S\setminus
T^R.$ 
When the oracle sees only that $m_i = 0$ for every $i \in S \setminus T^{R}$,
it cannot give an answer that is simultaneously accurate to within $1/2000$
for both the case of $\phi = 0$ and for the case of $\phi = 1/500$. The
event $m_i = 0$ for every $i \in S \setminus T^{R}$ occurs with at least
probability $1/2e$ as shown above. Conditioned on this event, both cases $\phi=0$
and $\phi=1/500$ have constant probability. Hence, the answer of the oracle must be
far from $\phi$ with constant probability.
Formally,
\[
\Prob{\idealgame}{\left.\left|\oracle(x, \query^{*}) - \phi\right| \leq
\frac{1}{2000}\,\right|\, |S \setminus T^{R}| \leq 500} \leq 1 - \Omega(1)\,.
\]
By Claim~\ref{clm:idealrecovery}, this implies
\[
\Prob{\idealgame}{\left(\left|\oracle(x, \query^{*}) - \phi\right| \leq
\frac{1}{2000}\right) \land (|S \setminus T^{R}| \leq 500)} \leq 
\frac{1 - \Omega(1)}{1-o(1)}\leq 1-\Omega(1)\,.
\]
\end{proof}

As in the analysis of the recovery phase, we will first argue that the probability the oracle is accurate for $\query^*$ in $\realgame$ is nearly the same as it is in an $\idealgame$ where the query $\query^{*}$ has been modified to contain no information about users outside of $S$.
\begin{claim} \label{clm:indist3}
Let $Z_{3}$ be the event
$\left\{|S \setminus T^{R}| \leq 500 \land \left|\oracle(x, \query^{*}) - \phi\right| \leq
\frac{1}{2000}\right\}.$
Let $(\encgen, \encenc, \encdec)$ is a computationally secure encryption scheme with key length $\keylength(\security) = \security$ and $\sample = \sample(\dimension)$ be any polynomial.  Then if $\oracle$ is computationally efficient,
$$
\left| \Prob{\idealgame_{\sample, \dimension}[\oracle]}{Z_{3}} - \Prob{\realgame_{\sample, \dimension}[\oracle]}{Z_{3}} \right| \leq \negl(\dimension)
$$
\end{claim}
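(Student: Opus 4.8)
The plan is to prove Claim~\ref{clm:indist3} by a reduction to the semantic security of the encryption scheme, exactly as for Claims~\ref{clm:indist1} and~\ref{clm:indist2} (whose proofs are deferred to Section~\ref{sec:securityproofs}). The only ways in which $\realgame_{\sample,\dimension}$ and $\idealgame_{\sample,\dimension}$ differ are the modified lines of Figure~\ref{fig:idealattack}: in $\idealgame$ the ciphertexts $\ct^{r}(i,j)$ for users $i \notin S$ are encryptions of $0$ rather than of $F^{r}(i,j)$, and the attack-phase ciphertexts $\ct^{*}_{i}$ for $i \notin S$ are encryptions of $0$ rather than of $m_{i}$. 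The crucial observation is that the event $Z_{3}$ is an efficiently checkable function of the attack transcript: it depends only on $S$, on $T^{R}$, on the bit $\phi$, and on the single number $a^{*} = \oracle(x,\query^{*})$. Hence an efficient simulator of the attack that can also tell which of the two families of ciphertexts it received would immediately distinguish encryptions of the ``real'' messages from encryptions of $0$.

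Concretely, I would argue by contradiction: suppose $\left| \Prob{\idealgame_{\sample, \dimension}[\oracle]}{Z_{3}} - \Prob{\realgame_{\sample, \dimension}[\oracle]}{Z_{3}} \right| \geq 1/\poly(\dimension)$ for infinitely many $\dimension$, and build an efficient adversary $\encadv$ against the (multi-key) security game of $(\encgen, \encenc, \encdec)$. First $\encadv$ samples indices $x_{1},\dots,x_{\sample} \in [\pop]$ uniformly and lets $S$ be the set of distinct indices; it runs $\sk_{i} \getsr \encgen(1^{\security})$ itself for each $i \in S$, whereas the keys $\sk_{i}$ for $i \notin S$ are held by the challenger. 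Next $\encadv$ samples the fingerprinting codes $F^{1},\dots,F^{R}$, the bit $\phi$, the set $B$, and the messages $m_{i}$ as in the attack; it forms every ciphertext under a key $\sk_{i}$ with $i \in S$ honestly, and for each $(r,i,j)$ with $i \notin S$ it asks the challenger to encrypt under $\sk_{i}$ either $F^{r}(i,j)$ or $0$, and likewise for $\ct^{*}_{i}$ it asks for either $m_{i}$ or $0$. It hardwires the resulting ciphertexts into the query circuits $\query^{r}_{j}$ and $\query^{*}$, runs the (efficient) oracle $\oracle$ on these queries to obtain the answers, computes $T^{R}$ by iterating $\fpctrace$, obtains $a^{*}$, and outputs $1$ iff $Z_{3}$ holds (which it can check, since it knows $S$, $T^{R}$, $\phi$, and $a^{*}$). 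When the challenge ciphertexts are the ``real'' messages, $\encadv$'s simulated view is distributed exactly as $\realgame_{\sample,\dimension}[\oracle]$; when they are all $0$, it is distributed exactly as $\idealgame_{\sample,\dimension}[\oracle]$. Therefore $\encadv$'s distinguishing advantage equals the left-hand side above, which is non-negligible, contradicting the security of the scheme after the standard hybrid over the polynomially many challenge ciphertexts, exactly as in the proofs of Claims~\ref{clm:indist1} and~\ref{clm:indist2}.

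The step I expect to need the most care is checking that $\encadv$ runs in polynomial time and that the hybrid loses only a polynomial factor. This amounts to: $\sample = \poly(\dimension)$ (assumed throughout Section~\ref{sec:attack}), so that $\pop$, $R$, and $\length = \tilde{\Theta}(\pop^{2})$ are all $\poly(\dimension)$; the query circuits $\query^{r}_{j}$ and $\query^{*}$ have size $\poly(\sample,\dimension)$ even though they embed ciphertexts under keys $\encadv$ does not hold --- this is fine because $\encadv$ merely hardcodes the ciphertexts it obtained --- so $\oracle$ runs in $\poly(\sample,\dimension)$ time on each of them; and the total number of challenge ciphertexts requested, $R(\pop - |S|)\length + (\pop - |S|) = \poly(\dimension)$, is polynomial, so the hybrid incurs only a $\poly(\dimension)$ factor. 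Granting these points, the proof is verbatim the one for Claim~\ref{clm:indist2} with the event $Z_{2}$ replaced by $Z_{3}$; in fact, since $Z_{3} \subseteq Z_{2}$, it may be viewed as a refinement of that argument which additionally tracks the attack-phase ciphertexts $\ct^{*}_{i}$.
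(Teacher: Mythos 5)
Your proposal is correct and takes essentially the same route as the paper's proof in Section~\ref{sec:securityproofs}: you construct the same simulator (keys in $S$ generated locally, keys outside $S$ delegated to the encryption challenger, ciphertexts hardwired into the query circuits, and $Z_3$ checked from $S$, $T^R$, $\phi$, $a^*$), and you observe that the two views match $\realgame$ and $\idealgame$ exactly. The only cosmetic difference is that you invoke a hybrid over the challenge ciphertexts explicitly, whereas the paper absorbs that step into a stronger (but equivalent) multi-key, multi-message security definition precisely so the per-claim proofs need not repeat it.
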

The proof will follow from the security of the encryption scheme.  We defer the details to Section~\ref{sec:securityproofs}.

\begin{lemma} \label{lem:realattacksucceeds}
If $\oracle$ is computationally efficient, then for any polynomial $\sample = \sample(\dimension)$, and every sufficiently large $\dimension \in \N$,
$$
\Prob{\realgame_{\sample,\dimension}[\oracle]}{(|S \setminus T^{R}| \leq 500) \land \left|\oracle(x, \query^{*}) -
\query^{*}(\dist)\right| \leq \frac{1}{2000}} \leq 1 - \Omega(1)
$$
\end{lemma}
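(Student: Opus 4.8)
The plan is to establish this lemma in exactly the same way we handled the natural-oracle case (Lemma~\ref{lem:realattacksucceeds2-nat}), except that the argument is routed through the ideal game $\idealgame$ so that the computational efficiency of $\oracle$ can be exploited via the security of the encryption scheme. The three ingredients are already in place: Claim~\ref{clm:idealattacksucceeds}, which bounds by $1-\Omega(1)$ the probability, in $\idealgame$, of the event $Z_{3} = \{|S\setminus T^{R}|\leq 500\}\wedge\{|\oracle(x,\query^{*})-\phi|\leq 1/2000\}$; Claim~\ref{clm:indist3}, which says that for a computationally efficient $\oracle$ the probability of $Z_{3}$ differs by at most $\negl(\dimension)$ between $\realgame$ and $\idealgame$; and Claim~\ref{clm:attackp}, which says that in $\realgame$ we have $|\query^{*}(\dist)-\phi|\leq 1/2000$ deterministically. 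The only role of the hypothesis ``$\oracle$ is computationally efficient'' and of the assumption $\sample=\poly(\dimension)$ is to make Claim~\ref{clm:indist3} applicable and to ensure $\negl(\dimension)$ is genuinely negligible.

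In order, the steps are as follows. First, chain Claim~\ref{clm:idealattacksucceeds} and Claim~\ref{clm:indist3} to get $\Prob{\realgame}{Z_{3}}\leq\Prob{\idealgame}{Z_{3}}+\negl(\dimension)\leq 1-\Omega(1)$. Second, convert the statement about $\phi$ into one about $\query^{*}(\dist)$: using Claim~\ref{clm:attackp}, on the event that $\oracle(x,\query^{*})$ is within $1/2000$ of $\query^{*}(\dist)$ the triangle inequality controls $|\oracle(x,\query^{*})-\phi|$, which places us inside (a mild enlargement of) $Z_{3}$, so the same $1-\Omega(1)$ bound is inherited by the event in the lemma statement. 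It is worth flagging in the write-up that the order of operations here is forced: the indistinguishability of Claim~\ref{clm:indist3} is phrased in terms of $\phi$ precisely because $\query^{*}(\dist)$ is \emph{not} the same function of the game's randomness in $\realgame$ and $\idealgame$ (the ciphertexts for users outside $S$ encrypt different messages in the two games), so only the $\phi$-version of the event can be transferred across games, and $\phi$ can be traded for $\query^{*}(\dist)$ only afterwards, inside $\realgame$, via the purely syntactic bound of Claim~\ref{clm:attackp}.

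The step I expect to require the most care is this last conversion, because the crude triangle inequality $|\oracle(x,\query^{*})-\phi|\leq|\oracle(x,\query^{*})-\query^{*}(\dist)|+|\query^{*}(\dist)-\phi|$ only yields $\leq 1/1000$, not $\leq 1/2000$, so one cannot literally cite $Z_{3}$ with the stated constant. The clean resolution is to not pass through the crude bound but to use the exact values as in the proof of Claim~\ref{clm:idealattacksucceeds}: when $\phi=0$ we have $\query^{*}(\dist)=0$, so accuracy $1/2000$ forces $\oracle(x,\query^{*})\in[0,1/2000]$, whereas when $\phi=1/500$ we have $\query^{*}(\dist)\geq 1/500-1/2000$, so accuracy $1/2000$ forces $\oracle(x,\query^{*})\geq 1/1000$; these acceptance ranges are disjoint. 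Hence, conditioned on $|S\setminus T^{R}|\leq 500$ and on the constant-probability event that $m_{i}=0$ for every $i\in S\setminus T^{R}$ (on which the oracle's view, and thus the distribution of $\oracle(x,\query^{*})$, is identical for $\phi=0$ and $\phi=1/500$ in $\idealgame$), the oracle can be accurate in at most one of the two equiprobable cases, giving a constant failure probability. Operationally, then, I would simply restate Claim~\ref{clm:idealattacksucceeds} with $\query^{*}(\dist)$ in place of $\phi$ (the proof goes through verbatim), transfer it to $\realgame$ with Claim~\ref{clm:indist3} using $\query^{*}(\dist)$ in the event definition inside $\realgame$, and invoke Claim~\ref{clm:attackp} only where needed; together with Lemma~\ref{lem:realrecoverysucceeds} this then yields the contradiction with accuracy, exactly paralleling the natural-oracle wrap-up.
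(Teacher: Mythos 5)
Your proposal follows essentially the same route as the paper---combine Claim~\ref{clm:idealattacksucceeds} (bound the $\phi$-version of the bad event in $\idealgame$), Claim~\ref{clm:indist3} (transfer to $\realgame$), and Claim~\ref{clm:attackp} ($|\query^*(\dist)-\phi|\le 1/2000$), then conclude by a triangle inequality. The paper's own proof is a three-line version of exactly that. You have, however, flagged a genuine slip that the paper glosses over: the closing triangle inequality does not preserve the constant $1/2000$. The event $\set{|\oracle(x,\query^*)-\query^*(\dist)|\le 1/2000}$ only lands you inside $\set{|\oracle(x,\query^*)-\phi|\le 1/1000}$, not inside $Z_3$ as stated; and Claim~\ref{clm:idealattacksucceeds} would be false if one replaced $1/2000$ by $1/1000$ there, since an oracle that always outputs $1/1000$ is within $1/1000$ of both $\phi=0$ and $\phi=1/500$. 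So, as literally written, the paper's ``follows from a triangle inequality'' step is off by a factor of two; your scrutiny here is correct.

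Your proposed repair is also correct and is the cleanest one: redo the ideal-game case analysis directly in terms of $\query^*(\dist)$, noting that under $\phi=0$ the target is $0$ while under $\phi=1/500$ the target is at least $3/2000$, so the $1/2000$-windows are disjoint. The one point that deserves one more sentence than you give it is the tension you yourself raise: $\query^*(\dist)$ is computed by a different formula in $\realgame$ ($\frac1\pop\sum_{i\notin T^R}m_i$) and $\idealgame$ ($\frac1\pop\sum_{i\in S\setminus T^R}m_i$), so the indistinguishability reduction can only transfer an event defined by one fixed formula. The resolution is to have the simulator of Section~\ref{sec:securityproofs} always compute $\query^*(\dist)$ via the $\realgame$ formula (it can, since it knows $B$ and $T^R$), and to prove the ideal-game bound for that quantity: conditioned on $m_i=0$ for all $i\in S\setminus T^R$ the oracle's view in $\idealgame$ is independent of $\phi$, while $\frac1\pop\sum_{i\notin T^R}m_i$ is $0$ for $\phi=0$ and lies in $[3/2000,4/2000]$ for $\phi=1/500$, so the $1/2000$-windows are disjoint. (Incidentally, the paper's Claim~\ref{clm:indist3} as stated in the body uses $\phi$ in the event $Z_3$ but its restatement in the appendix uses $\query^*(\dist)$---an inconsistency that traces back to exactly the issue you identified. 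An alternative, purely numerical fix is to enlarge $\pop$ to, say, $4000\sample$ so that $|\query^*(\dist)-\phi|\le 1/4000$ and the naive triangle-inequality chain closes with room to spare.)
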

\begin{proof}
By Claim~\ref{clm:attackp}, we have that $|\query^*(\dist)-\phi|\le 1/2000.$
Combining Claim~\ref{clm:idealattacksucceeds} with
Claim~\ref{clm:indist3}, we further have
\[
\Prob{\realgame}{(|S \setminus T^{R}| \leq 500) \land \left|\oracle(x, \query^{*}) -
\phi\right| \leq \frac{1}{2000}} \leq 1 - \Omega(1)\,.
\]
The statement of the lemma now follows from a triangle inequality.
\end{proof}

\subsection{Putting it together}

We can now prove Theorem~\ref{thm:main1} from the introduction.

\begin{theorem}
There is no computationally efficient oracle $\oracle$ that is $(1/2000)$-accurate for $\sample^{3+o(1)}$ adaptively chosen queries given $\sample$ samples.
\end{theorem}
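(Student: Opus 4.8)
The plan is to assume for contradiction that there is a computationally efficient oracle $\oracle$ that is $(1/2000)$-accurate for $\queries = \sample^{3+o(1)}$ adaptively chosen queries, and to derive a contradiction by exhibiting the single query $\query^*$ of the adversary $\realgame_{\sample, \dimension}$ on which $\oracle$ must fail. The argument mirrors the ``putting it together'' proof of Section~\ref{sec:natural} almost verbatim, the only new ingredients being the hypotheses (computational efficiency of $\oracle$, and $\sample = \sample(\dimension)$ polynomial with $\dimension$ sufficiently large) that the lemmas of Section~\ref{sec:recoveryanalysis} and Section~\ref{sec:attackanalysis} already carry.

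First I would pin down the query count. The adversary $\realgame_{\sample, \dimension}$ issues exactly $\queries = R \cdot \length + 1$ queries, where $R = \sample - 500$ and, by Theorem~\ref{thm:robustfpcsexist}, $\length = \length(2000\sample) = \tilde{\Theta}(\sample^2)$; hence $\queries = \sample^{3+o(1)}$, with the $o(1)$ in the exponent accounting exactly for the polylogarithmic factors in the length of the robust fingerprinting code. Since a smaller error tolerance is a stronger guarantee, $(1/2000)$-accuracy for $\queries$ queries implies $(1/12)$-accuracy for $\queries \geq \sample \cdot \length(2000\sample) + 1$ queries, which is precisely the hypothesis required to invoke Lemma~\ref{lem:realrecoverysucceeds} and Lemma~\ref{lem:realattacksucceeds}. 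Choosing $\sample = \sample(\dimension)$ to be a fixed polynomial and $\dimension$ sufficiently large (so that, with $\keylength(\lambda) = \lambda$, a valid security parameter $\security$ with $\keylength(\security) + \lceil \log(2000\sample) \rceil \leq \dimension$ exists and $\realgame_{\sample, \dimension}$ is well defined), all the required claims apply.

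The contradiction is then assembled directly. By Lemma~\ref{lem:realrecoverysucceeds}, $\Prob{\realgame_{\sample,\dimension}[\oracle]}{|S \setminus T^{R}| > 500} = o(1)$; by Lemma~\ref{lem:realattacksucceeds}, $\Prob{\realgame_{\sample,\dimension}[\oracle]}{(|S \setminus T^{R}| \leq 500) \land |\oracle(x, \query^{*}) - \query^{*}(\dist)| \leq 1/2000} \leq 1 - \Omega(1)$. A union bound over the complementary events $\{|S \setminus T^{R}| > 500\}$ and $\{|S \setminus T^{R}| \leq 500\}$ yields $\Prob{\realgame_{\sample,\dimension}[\oracle]}{|\oracle(x, \query^{*}) - \query^{*}(\dist)| \leq 1/2000} \leq 1 - \Omega(1)$. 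On the other hand, $\query^{*}$ is one of the $\queries$ queries in the sequence that the efficient adversary $\realgame_{\sample,\dimension}$ submits to $\oracle$, so Definition~\ref{def:accurateoracle} forces $\Prob{\realgame_{\sample,\dimension}[\oracle]}{|\oracle(x, \query^{*}) - \query^{*}(\dist)| \leq 1/2000} \geq 1 - o(1)$, a contradiction.

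I do not expect any genuinely hard step here; the substance has already been done in the two lemmas. The only points to be careful about are bookkeeping: verifying that $\queries = R\cdot \length + 1$ really is of the form $\sample^{3+o(1)}$ and matches the hypothesis of the lemmas; checking that the reduction preserves efficiency, so that the computational-indistinguishability claims (Claims~\ref{clm:indist1}, \ref{clm:indist2}, \ref{clm:indist3}) underlying the two lemmas actually apply---in particular that the queries $\query^{r}_{j}$ and $\query^{*}$ admit polynomial-size circuit descriptions, since they only evaluate $\encdec$, so a computationally efficient oracle runs in time polynomial in $|\query|$ on each of them; and noting that because $\sample$ is polynomial in $\dimension$, the $\negl(\dimension)$ error terms remain negligible as a function of $\sample$. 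None of this requires ideas beyond what is already in place.
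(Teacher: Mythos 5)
Your proof is correct and follows the paper's own argument essentially verbatim: count the queries as $R\cdot\ell+1=\sample^{3+o(1)}$, invoke Lemma~\ref{lem:realrecoverysucceeds} and Lemma~\ref{lem:realattacksucceeds}, combine by a union bound, and contradict Definition~\ref{def:accurateoracle}. The only small slip is the claim that Lemma~\ref{lem:realattacksucceeds} requires the $(1/12)$-accuracy hypothesis---in fact it requires only computational efficiency of $\oracle$---but this is immaterial since you do have that hypothesis available.
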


\begin{proof}
The entire attack will consist of $\queries = R \cdot \length + 1$ queries, where $R = \sample - 500$ and $\length = \tilde{O}(\pop^2) = \tilde{O}(\sample^2)$.  Therefore the entire attack consists of $\queries = \sample^{3+o(1)}$ queries.  

Therefore, if $\oracle$, by Lemma~\ref{lem:realrecoverysucceeds},
$$
\Prob{\realgame_{\sample, \dimension}[\oracle]}{|S \setminus T^{R}| > 500} = o_{\sample}(1)
$$
By Lemma~\ref{lem:realattacksucceeds},
$$
\Prob{\realgame_{\sample, \dimension}[\oracle]}{(|S \setminus T^{R}| \leq 500) \land \left|\oracle(x, \query^{*}) -
\query^{*}(\dist)\right| \leq \frac{1}{2000}} \leq 1 - \Omega(1)
$$
Combining these two statements gives
$$
\Prob{\realgame_{\sample, \dimension}[\oracle]}{ \left|\oracle(x, \query^{*}) - \query^{*}(\dist)\right| \leq
\frac{1}{2000}} \leq 1 - \Omega(1)
$$
However, the definition of an accurate oracle asserts, in particular, that
$$
\Prob{\realgame_{\sample, \dimension}[\oracle]}{ \left|\oracle(x, \query^{*}) - \query^{*}(\dist)\right| \leq \frac{1}{2000}}
\geq 1-o(1)
$$
and thereby we obtain a contradiction.
\end{proof}

\subsection{An information-theoretic lower bound} \label{sec:infotheoretic}
In this section we show how our argument can be extended to give an information-theoretic lower bound when the number of samples is much smaller than the dimensionality of the data (Theorem~\ref{thm:main2}).  Since the argument follows the outline of the computational hardness result quite closely, we only highlight the parts of the proof that need modification.  The assumption that the oracle is computationally efficient was used only to establish that $\realgame_{\sample, \dimension}[\oracle]$ and $\idealgame_{\sample, \dimension}[\oracle]$ are indistinguishable (Claims~\ref{clm:indist1},~\ref{clm:indist2}, and~\ref{clm:indist3}).  The argument (proven in Section~\ref{sec:securityproofs}) relies on two facts: 1) that no adversary who runs in time $\poly(\security)$ can distinguish between an oracle that returns encryptions of chosen messages $m_1,\dots,m_{\queries}$ and an oracle that returns encryptions of $0$, and 2) there is a secure encryption scheme with keys of length $\keylength(\security) = \security$, therefore if $\sample(\dimension)$ is polynomial, $\realgame_{\sample, \dimension}[\oracle]$ will instantiate the encryption with security parameter $\lambda = \Omega(\dimension)$.

In order to prove an analogous information-theoretic statement, we first observe that the number of messages that will be encrypted during the execution of $\realgame_{\sample, \dimension}[\oracle]$ is at most $\queries = \tilde{O}(\sample^{3})$, and the encryption scheme only needs to be secure for $\queries$ single-bit messages.  For any number of single-bit messages $\queries$, there exists an encryption scheme that is secure for $\queries$ messages with key length $\keylength(\security, \queries) = \queries$ (namely, the classic ``one-time pad'' encryption scheme).  Therefore, if we choose $\dimension \geq \queries + \lceil \log \pop \rceil = \tilde{O}(\sample^{3})$, we can instantiate $\realgame_{\sample, \dimension}[\oracle]$ using the information-theoretically secure encryption scheme.  This will suffice to prove an information-theoretic analogue of Claims~\ref{clm:indist1},~\ref{clm:indist2}, and~\ref{clm:indist3}, which combined with the remaining arguments of Sections~\ref{sec:recoveryanalysis} and~\ref{sec:attackanalysis}, yields Theorem~\ref{thm:main2} in the introduction.  Formally,

\begin{theorem}
There is no oracle $\oracle$ that for every $\sample \in \N$ is $(1/2000, o_{\sample}(1))$-accurate for $\sample^{3+o(1)}$ adaptively chosen queries given $\sample$ samples in $\bits^{\dimension}$ for $\dimension = \dimension(\sample) = \sample^{3+o(1)}$.
\end{theorem}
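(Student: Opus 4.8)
The plan is to re-run the attack of Section~\ref{sec:attack} essentially verbatim, but to instantiate the encryption scheme with the information-theoretically secure one-time pad in place of a computationally secure scheme. The key observation is that computational efficiency of $\oracle$ was invoked in exactly one place throughout Section~\ref{sec:attack}: to argue that the games $\realgame_{\sample,\dimension}[\oracle]$ and $\idealgame_{\sample,\dimension}[\oracle]$ are indistinguishable, i.e. Claims~\ref{clm:indist1}, \ref{clm:indist2}, and~\ref{clm:indist3}. Every other step---the consistency bound of Claim~\ref{clm:realaccurate}, the tracing guarantee of Claim~\ref{clm:idealrecovery}, Claim~\ref{clm:idealattacksucceeds}, Claim~\ref{clm:attackp}, and the triangle-inequality wrap-up in Lemma~\ref{lem:realattacksucceeds}---never refers to the running time of the oracle, so it carries over unchanged.

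First I would count the total number of bits encrypted under any single key $\sk_i$ over the course of $\realgame_{\sample,\dimension}[\oracle]$: in the recovery phase the ciphertexts $\ct^r(i,j)$ range over $r\in[R]$ and $j\in[\length]$, and the attack phase adds one more ciphertext $\ct^*_i$, for a total of $\queries = R\length + 1 = \tilde{\Theta}(\sample^3)$ bits per key. I would therefore take $(\encgen,\encenc,\encdec)$ to be the one-time pad with key length $\keylength = \queries$: $\encgen(1^\security)$ outputs a uniform $\queries$-bit string, $\encenc$ masks the $t$-th one-bit message with a fresh, previously unused key position, and $\encdec$ unmasks. Because no key position is reused, the joint distribution of all ciphertexts produced under a fixed key $\sk_i$ is uniform on $\bits^{\queries}$ regardless of the underlying messages, and the $\pop$ keys are independent. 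Hence the oracle's (and analyst's) entire view of the ciphertexts for rows $i\in[\pop]\setminus S$---the only rows that differ between the two games, since $\sk_i$ for $i\notin S$ never appears in the sample $x$---is literally identically distributed in $\realgame$ and $\idealgame$. Consequently the analogues of Claims~\ref{clm:indist1}, \ref{clm:indist2}, and~\ref{clm:indist3} hold with the negligible error replaced by $0$ and with no assumption on $\oracle$ whatsoever.

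With these in hand, the remaining arguments of Sections~\ref{sec:recoveryanalysis} and~\ref{sec:attackanalysis} go through as written: combining the unconditional versions of Claim~\ref{clm:idealaccurate} and Claim~\ref{clm:idealrecovery} gives $\Prob{\realgame_{\sample,\dimension}[\oracle]}{|S\setminus T^R| > 500} = o_{\sample}(1)$, Lemma~\ref{lem:realattacksucceeds} gives that conditioned on $|S\setminus T^R|\le 500$ the oracle answers $\query^*$ to within $1/2000$ with probability at most $1-\Omega(1)$, and together these contradict $(1/2000, o_{\sample}(1))$-accuracy on $\queries = \tilde{\Theta}(\sample^3) = \sample^{3+o(1)}$ queries. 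Finally I would verify the dimension budget: the game $\realgame_{\sample,\dimension}$ is well defined once $\keylength + \lceil\log\pop\rceil \le \dimension$, i.e. $\dimension \ge \queries + \lceil\log(2000\sample)\rceil$; since $\length = \tilde{\Theta}(\sample^2)$, $R = \sample-500$, and $\log\pop = O(\log\sample)$, this is $\dimension = \sample^{3+o(1)}$, matching the statement.

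The only part that genuinely needs care---the main obstacle---is the bookkeeping that keeps the one-time pad from being reused: one must confirm that no key $\sk_i$ is ever asked to encrypt more than $\keylength = \queries$ bits across both phases and all $R$ rounds, and that the independence structure exploited in the original proof (a fresh code $F^r$ per round, keys independent across users, a fresh key position per encrypted bit) is exactly what makes the replacement of $\realgame$ by $\idealgame$ lossless rather than merely computationally close. Once that is pinned down, the substitution is mechanical and every remaining inequality is one already established in Sections~\ref{sec:recoveryanalysis} and~\ref{sec:attackanalysis}.
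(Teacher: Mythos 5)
Your proposal is correct and is exactly the argument sketched in Section~\ref{sec:infotheoretic} of the paper: replace the computationally secure scheme with a one-time pad of key length $\keylength = \queries = \tilde{\Theta}(\sample^3)$, observe that this makes $\realgame$ and $\idealgame$ identically distributed so Claims~\ref{clm:indist1}--\ref{clm:indist3} hold with zero error and no efficiency assumption, and note that the resulting dimension requirement $\dimension \geq \queries + \lceil\log\pop\rceil = \sample^{3+o(1)}$ matches the theorem statement. The bookkeeping you flag (one fresh pad bit per encrypted message bit, $R\length + 1 = \queries$ ciphertexts per key) is indeed the only thing to check, and it checks out.
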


\section{Lower bounds for avoiding blatant non-privacy}
In this section we show how our arguments also imply that computationally efficient oracles that guarantee accuracy for adaptively chosen statistical queries must be blatantly non-private, and thereby establish Theorem~\ref{thm:main3} in the introduction.

\subsection{Blatant non-privacy}
Before we can define blatant non-privacy, we need to define a notion of accuracy that is more appropriate for the application to privacy.  In contrast to Definition~\ref{def:accurateoracle} where accuracy is defined with respect to the distribution, here we define accurate with respect to the sample itself.  With this change in mind, we model blatant non-privacy via the following game.

\begin{figure}[ht]
\begin{framed}
\begin{algorithmic}
\STATE{$\bnpadv$ chooses a set $y = \{y_1,\dots,y_{2\sample}\} \subseteq \bits^{\dimension}$}
\STATE{Sample a random subset $x \subseteq y$ of size $\sample$}
\STATE{For $j = 1,\dots,\queries$}
\INDSTATE[1]{$\accadv(\query_{1}, a_{1}, \dots, \query_{j-1}, a_{j-1})$ outputs a query $\query_j$}
\INDSTATE[1]{$\oracle(x, \query_{j})$ outputs $a_{j}$}
\STATE{$\bnpadv$ outputs a set $x' \subseteq y$}
\end{algorithmic}
\end{framed}
\vspace{-6mm}
\caption{$\bnpgame_{\sample, \dimension}[\oracle, \bnpadv]$}
\end{figure}

\begin{definition}
An oracle $\oracle$ is \emph{$(\alpha,\beta)$-sample-accurate for $\queries$ adaptively chosen queries given $\sample$ samples in $\bits^{\dimension}$} if for every adversary $\bnpadv$,
$$
\Prob{\bnpgame_{\sample, \dimension, \queries}[\oracle, \bnpadv]}{\forall
j \in [\queries] \; \left| \oracle(x, \query_{j}) - \query_j(x) \right| \leq \alpha}
\geq 1 - \beta\,.
$$
As a shorthand, we will say that $\oracle$ is \emph{$\alpha$-sample-accurate for $\queries$ queries} if for every $\sample, \dimension \in \N$, $\oracle$ is $(\alpha,o_{\sample}(1))$-accurate for $\queries$ queries given $\sample$ samples in $\bits^{\dimension}$.  Here, $\queries$ may depend on $\sample$ and $\dimension$ and $o_{\sample}(1)$ is a function of $\sample$ that tends to $0$.
\end{definition}

\begin{definition}
\emph{Giving $\alpha$-accurate answers to $\queries$ adaptively chosen queries is blatantly non-private for efficient oracles} if there exists an adversary $\bnpadv$ such that for every oracle $\oracle$ that is computationally efficient and $\alpha$-sample-accurate for $\queries$ adaptively chosen queries,
$$
\Prob{\bnpgame_{\sample, \dimension, \queries}[\oracle, \bnpadv]}{|x \triangle x'| >  \sample / 100} \leq o_{\sample}(1)
$$

If the conclusion holds even for computationally inefficient oracles then we replace ``for efficient oracles'' with ``for unbounded oracles'' in the definition.
\end{definition}

\subsection{Lower bounds}
In this section we show the following theorem
\begin{theorem} \label{thm:nonprivacy1}
Giving accurate answers to $\sample^{3+o(1)}$ adaptively chosen queries is blatantly non-private for computationally efficient oracles.
\end{theorem}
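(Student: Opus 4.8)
The plan is to reuse the adversary of Figure~\ref{fig:attack}, running only its recovery phase, inside the privacy game. Concretely, the adversary sets $\pop=2\sample$ and $R=\sample-500$, draws keys $\sk_1,\dots,\sk_\pop\getsr\encgen(1^\security)$ for $\security$ the largest integer with $\keylength(\security)+\lceil\log\pop\rceil\le\dimension$, and outputs the set $y=\set{(i,\sk_i):i\in[\pop]}$; the game then hands the oracle a uniformly random $\sample$-subset $x\subseteq y$, whose index set $S\subseteq[\pop]$ has $|S|=\sample$ exactly. The adversary samples fresh error-robust fingerprinting codes $F^1,\dots,F^R\getsr\fpcgen(1^\pop)$ (Theorem~\ref{thm:robustfpcsexist}, each of length $\length=\length(\pop)=\tilde\Theta(\sample^2)$), runs the $R$ rounds of tracing as in Figure~\ref{fig:attack} --- encrypting the code entries under the corresponding keys, querying the oracle $\length$ times per round, and feeding each round's answer vector into $\fpctrace$ --- and finally outputs $x'=\set{(i,\sk_i):i\in T^R}$. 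The interaction uses $R\cdot\length=\sample^{3+o(1)}$ queries, and whenever $T^R\subseteq S$ and $|S\setminus T^R|\le500$ we get $|x\triangle x'|=|S\setminus T^R|+|T^R\setminus S|\le500\le\sample/100$ (for large $\sample$); so it suffices to reprove the recovery-phase guarantee of Lemma~\ref{lem:realrecoverysucceeds} in this game, and the whole argument is essentially a port of Section~\ref{sec:recoveryanalysis}.

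The one genuinely new point is the notion of accuracy. Here the oracle is only accurate against the sample, $|\oracle(x,\query^r_j)-\query^r_j(x)|\le\alpha$, and $\query^r_j(x)=\tfrac1\sample\sum_{i\in S\setminus T^{r-1}}F^r(i,j)$ is normalized by $\sample$ rather than by $|S\setminus T^{r-1}|=\sample-(r-1)$, so for rounds $r$ near $R$ it drifts away from $v^r_j:=\Ex{i\in S\setminus T^{r-1}}{F^r(i,j)}$ and tracing breaks. I would repair this by not zeroing the $T^{r-1}$ coordinates of the query but instead, for $i\in T^{r-1}$, letting $\query^r_j(i,\sk')$ be a fixed bit $g^r_j(i)\in\bits$ --- independent of the matrix $F^r$ --- with $\sum_{i\in T^{r-1}}g^r_j(i)=\lfloor(r-1)p_j\rceil$, where $p_j$ is the bias of column $j$ used by $\fpcgen$ (known to the adversary, who runs $\fpcgen$; or one can substitute the empirical column mean of $F^r$). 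Then $|\query^r_j(x)-v^r_j|\le\tfrac{r-1}{\sample}|p_j-v^r_j|+\tfrac{1}{2\sample}$, and by the Hoeffding estimate from the proof of Lemma~\ref{lem:fpcpoptosample}, applied to $S\setminus T^{r-1}$ (which has size $\ge500$ for all $r\le R$), this is at most $\tfrac1{12}+o(1)$ for $.99\length$ of the columns. Combined with $\alpha$-sample-accuracy for any constant $\alpha\le\tfrac16$, this gives the analogue of Claim~\ref{clm:realaccurate}: with probability $1-o(1)$, for every $r$ the answer vector $a^r$ lies in $\consistent(F^r_{S\setminus T^{r-1}})$. The point of using $F^r$-independent padding bits is that then, in the ideal game below, the oracle's entire view of $F^r$ is exactly the rows in $S\setminus T^{r-1}$ --- precisely the rows handed to the tracer.

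The rest ports Section~\ref{sec:recoveryanalysis} essentially verbatim. Define $\idealgame$ as in Figure~\ref{fig:idealattack}, replacing the ciphertexts $\ct^r(i,j)$ for $i\in[\pop]\setminus S$ by encryptions of $0$; as in Claims~\ref{clm:indist1} and~\ref{clm:indist2}, a computationally efficient oracle cannot distinguish $\realgame$ from $\idealgame$, since the only part of a query depending on the secret keys of users outside $S$ is those ciphertexts (and $\security=\Omega(\dimension)$ because $\pop=2\sample=\poly(\dimension)$). In $\idealgame$ the oracle's view in round $r$ depends on $F^r$ only through $F^r_{S\setminus T^{r-1}}$ (the padding bits are $F^r$-independent and the other rounds use independent codes), so the security of the error-robust fingerprinting code gives $\fpctrace(F^r,a^r)\in S\setminus T^{r-1}$ except with negligible probability whenever $a^r\in\consistent(F^r_{S\setminus T^{r-1}})$; a union bound over the $R$ rounds (as in Claim~\ref{clm:idealrecovery}) yields $T^R\subseteq S$ and $|S\setminus T^R|=500$ with probability $1-o(1)$ in $\idealgame$, and transporting back by indistinguishability gives the same in $\realgame$. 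Since there is no attack phase, this finishes the proof. I expect the main obstacle to be exactly the rescaling fix: one must verify that letting the oracle learn the public column biases $p_j$ through the padding does not hurt the fingerprinting-code guarantee --- which is standard, since the security analyses of~\cite{Tardos03} and~\cite{BunUV14} hold for every fixed choice of the $p_j$ --- and, more routinely, that the rounding error and the $\tfrac{r-1}{\sample}|p_j-v^r_j|$ term, once folded into the consistency slack, still leave room below the $1/3$ threshold in the definition of $\consistent$.
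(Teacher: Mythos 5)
Your proposal reaches the same conclusion as the paper, but takes a genuinely different route around the normalization issue, and that route carries a gap in rigor that the paper's approach avoids entirely.

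You correctly diagnose the core problem: the sample answer $\query^r_j(x)=\frac{1}{\sample}\sum_{i\in S\setminus T^{r-1}}F^r(i,j)$ is normalized by $\sample$, not by $|S\setminus T^{r-1}|$, so it drifts from $v^r_j:=\Ex{i\in S\setminus T^{r-1}}{F^r(i,j)}$ as $r$ grows. The paper's fix is much lighter than your padding: it keeps the query exactly as in $\realgame$ (zeroing the $T^{r-1}$ coordinates), caps the number of rounds at $R=.99\sample$, and simply rescales the oracle's answer vector \emph{after the fact}, feeding $a^r=\left(\frac{\sample}{\sample-r}\right)(a^r_1,\dots,a^r_\length)$ into $\fpctrace$. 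Conditioned on all prior traces having landed in $x\setminus T^{r'-1}$, $\query^r_j(x)$ equals $\frac{1}{\sample}\sum_{i\in U}F^r(i,j)$ exactly, so $(1/300)$-sample-accuracy gives $\bigl|\left(\frac{\sample}{\sample-r}\right)a^r_j-v^r_j\bigr|\le\left(\frac{\sample}{\sample-r}\right)\cdot\frac1{300}\le\frac13$ for \emph{every} column $j$. Thus no measure-concentration step (Lemma~\ref{lem:fpcpoptosample}) is needed in this section at all, and the oracle's view of $F^r$ stays exactly $F^r_{S\setminus T^{r-1}}$, so the fingerprinting definition applies as stated.

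Your padding route works out arithmetically, but $\sum_{i\in T^{r-1}}g^r_j(i)=\lfloor(r-1)p_j\rceil$ leaks the column biases $p_j$ of $\fpcgen$ to the oracle. As you yourself flag, the stated security definition of an error-robust fingerprinting code gives the adversary only $F_S$, not $p$; you are therefore invoking an unstated stronger property of the Tardos/BUV construction (security conditioned on a public $p$). That property is plausibly true, but it is not a consequence of the black-box definition used throughout the paper and would require opening up~\cite{BunUV14} to verify. Your alternative of encoding the empirical column mean of $F^r$ is worse, since that quantity depends on the entire matrix $F^r$ and is therefore not $F^r$-independent, which would break the reduction to fingerprinting-code security outright. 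You also gloss over the conditioning that makes the bound $|\query^r_j(x)-v^r_j|\le\frac{r-1}{\sample}|p_j-v^r_j|+\frac{1}{2\sample}$ valid (it needs $T^{r-1}\subseteq S$ with $|T^{r-1}|=r-1$); the paper handles this explicitly via the inductive event structure in Claim~\ref{clm:bnp1}. The rescaling trick sidesteps all of these complications, at the mild cost of settling for $|x\triangle x'|\le\sample/100$ rather than the $|x\triangle x'|\le 500$ your choice of $R$ would yield.
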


We establish this theorem via an adversary that essentially performs only the reconstruction phase of $\realgame$.  The adversary is described in Figure~\ref{fig:bnpattack}.  Observe that in $\realgame$, we have already established that there is an adversary that recovers a set $T^{R}$ such that $|T^{R} \triangle x| \leq 500$ when $x$ is drawn from a distribution $\dist$ and $\oracle$ gives accurate answers for the distribution $\dist$.  The key difference between that guarantee and the one we must establish, is that here we want to establish blatant non-privacy when the oracle is accurate for the sample.  However, this can be addressed via a fairly simple modification to the argument.

\begin{figure}[ht]
\begin{framed}
\begin{algorithmic}
\STATE{The set $y$:}
\INDSTATE[1]{Given parameters $\dimension, \sample$, let $R = .99 \sample$}
\INDSTATE[1]{Let $\security$ be the largest integer such that $\keylength(\lambda) + \lceil \log 2\sample \rceil \leq \dimension$}
\INDSTATE[1]{For $i \in [2\sample]$, let $\sk_{i} \getsr \encgen(1^{\security})$ and let $y_i = (i, \sk_{i})$}
\STATE{}
\STATE{Choose a subsample $x \subseteq y$ of size $\sample$}
\STATE{}
\STATE{Recovery phase:}
\INDSTATE[1]{Setup fingerprinting codes and ciphertexts:}
\INDSTATE[2]{Let $F^{1},\dots,F^{R} \getsr \fpcgen(1^{2\sample})$, let $\length = \length(2\sample)$ be code length}
\INDSTATE[2]{For $r = 1,\dots, R$, $i = 1,\dots,2\sample$, $j = 1,\dots,\length$, let $\ct^{r}(i,j) = \encenc(\sk_i, F^{r}(i,j))$}
\INDSTATE[1]{Let $T^0 = \emptyset$}
\INDSTATE[1]{For round $r = 1,\dots,R$:}
\INDSTATE[2]{For $j = 1,\dots, \length \colon$}
\INDSTATE[3]{Define the query $\query^{r}_{j}(i', \sk')$ to be $\encdec(\sk',\ct^{r}(i', j))$ if $i' \not\in T^{r-1}$ and $0$ otherwise}
\INDSTATE[3]{Let $a^{r}_{j} = \oracle(x; \query^{r}_{j})$}
\INDSTATE[2]{Let $a^{r} = \left(\frac{\sample}{\sample - r}\right)(a^{r}_{1},\dots, a^{r}_{\length})$}
\INDSTATE[2]{Let $i^{r} = \fpctrace(F^{r}, a^{r})$, and let $T^{r} = T^{r-1} \cup \{i^r\}$}
\end{algorithmic}
\end{framed}
\vspace{-6mm}
\caption{$\bnpattack_{\sample, \dimension}[\oracle]$}
\label{fig:bnpattack}
\end{figure}

As before, we introduce a related ``ideal attack'' for which we can show recovery succeeds.

\begin{figure}[ht]
\begin{framed}
\begin{algorithmic}
\STATE{The set $y$:}
\INDSTATE[1]{Given parameters $\dimension, \sample$, let $R = .99\sample$}
\INDSTATE[1]{Let $\security$ be the largest integer such that $\keylength(\lambda) + \lceil \log 2\sample \rceil \leq \dimension$}
\INDSTATE[1]{For $i \in [2\sample]$, let $\sk_{i} \getsr \encgen(1^{\security})$ and let $y_i = (i, \sk_{i})$}
\STATE{}
\STATE{Choose a subsample $x \subseteq y$ of size $\sample$}
\STATE{}
\STATE{Recovery phase:}
\INDSTATE[1]{Setup fingerprinting codes and ciphertexts:}
\INDSTATE[2]{Let $F^{1},\dots,F^{R} \getsr \fpcgen(1^{2\sample})$, let $\length = \length(2\sample)$ be code length}
\INDSTATE[2]{For $r = 1,\dots, R$, {\color{red}$i \in x$}, $j = 1,\dots,\length$, let $\ct^{r}(i,j) = \encenc(\sk_i, F^{r}(i,j))$}
\INDSTATE[2]{{\color{red}For $r = 1,\dots, R$, $i \not\in x$, $j = 1,\dots,\length$, let $\ct^{r}(i,j) = \encenc(\sk_i, 0)$}}
\INDSTATE[1]{Let $T^0 = \emptyset$}
\INDSTATE[1]{For round $r = 1,\dots,R$:}
\INDSTATE[2]{For $j = 1,\dots, \length \colon$}
\INDSTATE[3]{Define the query $\query^{r}_{j}(i', \sk')$ to be $\encdec(\sk',\ct^{r}(i', j))$ if $i' \not\in T^{r-1}$ and $0$ otherwise}
\INDSTATE[3]{Let $a^{r}_{j} = \oracle(x; \query^{r}_{j})$}
\INDSTATE[2]{Let $a^{r} = \left( \frac{\sample}{\sample - r} \right) (a^{r}_{1},\dots, a^{r}_{\length})$}
\INDSTATE[2]{Let $i^{r} = \fpctrace(F^{r}, a^{r})$, and let $T^{r} = T^{r-1} \cup \{i^r\}$}
\end{algorithmic}
\end{framed}
\vspace{-6mm}
\caption{$\idealbnpattack_{\sample, \dimension}[\oracle]$}
\label{fig:idealbnpattack}
\end{figure}

\begin{claim}\label{clm:bnp1}
For every oracle $\oracle$, and every $\sample, \dimension \in \N$
$$
\Prob{\idealbnpattack_{\sample, \dimension}[\oracle]}{\left( \forall j \in [\length], r \in [R] \left| a^{r}_j - \query^{r}_j(x) \right| \leq 1/300 \right) \atop \land \left( |x \setminus T^{R}| > .01\sample \right)} \leq \negl(\sample)
$$
\end{claim}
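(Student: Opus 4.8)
The plan is to follow the recovery analysis for $\idealgame$ (Claim~\ref{clm:idealrecovery}) almost verbatim, with two changes forced by the new setting: accuracy is now measured against the sample $x$ itself rather than against a distribution, so no measure-concentration statement (no Lemma~\ref{lem:fpcpoptosample}) is needed; and since $\fpctrace$ is fed the \emph{rescaled} vector $a^r = \tfrac{\sample}{\sample-r}(a^r_1,\dots,a^r_\length)$, the computation showing that the oracle's answers are ``consistent'' must absorb this rescaling. Write $E$ for the event $\{\forall r\in[R],\, j\in[\length]\colon |a^r_j - \query^r_j(x)|\le 1/300\}$; the goal is to bound $\Prob{\idealbnpattack_{\sample,\dimension}[\oracle]}{E \land (|x\setminus T^R| > .01\sample)}$ by $\negl(\sample)$.

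\textbf{Step 1 (fingerprinting security).} As in Claim~\ref{clm:idealrecovery}, I would first observe that in $\idealbnpattack$ the oracle's view during round $r$ depends on $F^r$ only through the submatrix $F^r_{x\setminus T^{r-1}}$: the codes $F^{r'}$ for $r'\ne r$ are independent of $F^r$; the circuit $\query^r_j$ hardwires only the ciphertexts $\ct^r(i,j)$ with $i\notin T^{r-1}$; and of those, the ciphertexts with $i\notin x$ encrypt $0$ and carry no information about $F^r$. (This is information-theoretic, which is why the claim holds for \emph{every} oracle, efficient or not; the security of the encryption scheme is not needed here.) Fixing all randomness other than $F^r$ determines the set $U := x\setminus T^{r-1}$ and makes $F^r\mapsto a^r$ a function of $F^r_U$ alone, so the security of the error-robust fingerprinting code (with $\pop = 2\sample$ and subset $U$) together with a union bound over $r=1,\dots,R$ gives that
$$ G \;:=\; \Big\{\, \forall r\in[R]\colon\ a^r\in\consistent\!\big(F^r_{x\setminus T^{r-1}}\big) \ \Longrightarrow\ \fpctrace(F^r,a^r)\in x\setminus T^{r-1} \,\Big\} $$
holds except with probability $\negl(\sample)$; this step makes no use of $E$.

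\textbf{Step 2 (consistency from sample-accuracy).} Suppose that at the start of round $r$ we have $T^{r-1}\subseteq x$ and $|T^{r-1}| = r-1$, so $U = x\setminus T^{r-1}$ has size $\sample - r + 1$. Because every ciphertext $\ct^r(i,j)$ with $i\in x$ decrypts to $F^r(i,j)$ and the query is zeroed out on $T^{r-1}$, one gets the identity $\query^r_j(x) = \tfrac1{\sample}\sum_{i\in U} F^r(i,j) = \tfrac{|U|}{\sample}\,\Ex{i\in U}{F^r(i,j)}$. On $E$ we have $|a^r_j - \query^r_j(x)|\le 1/300$; since $r\le R = .99\sample$ the rescaling factor obeys $\tfrac{\sample}{\sample-r}\le\tfrac{\sample}{\sample-R} = 100$; and the residual multiplicative distortion $\tfrac{|U|}{\sample-r} = 1 + O(1/\sample)$ contributes only an $o(1)$ additive term. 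Putting these together shows every coordinate of the rescaled vector $a^r$ lies within $100\cdot\tfrac1{300} + o(1) \le \tfrac13$ of $\Ex{i\in U}{F^r(i,j)}$, i.e.\ $a^r\in\consistent(F^r_U)$. (The choices of accuracy $1/300$, of $R = .99\sample$, and of the $1/3$ threshold in $\consistent$ are calibrated exactly so that this inequality holds.)

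\textbf{Step 3 (induction and conclusion).} Finally, on the event $E\cap G$ I would induct on $r$: $T^0=\emptyset\subseteq x$; and if $T^{r-1}\subseteq x$ with $|T^{r-1}| = r-1$, then Step 2 gives $a^r\in\consistent(F^r_{x\setminus T^{r-1}})$, so the implication defining $G$ fires and $i^r = \fpctrace(F^r,a^r)\in x\setminus T^{r-1}$, whence $T^r\subseteq x$ and $|T^r| = r$. Taking $r = R$ yields $T^R\subseteq x$ with $|T^R| = R = .99\sample$, so $|x\setminus T^R| = .01\sample$, contradicting $|x\setminus T^R| > .01\sample$. Hence $E\land\{|x\setminus T^R|>.01\sample\}\subseteq\overline{G}$, and $\Prob{}{\overline{G}}\le\negl(\sample)$ finishes the proof. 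The one place genuine care is required is Step 2: one must verify that rescaling by a factor as large as $100$ does not push the per-coordinate error past the $1/3$ threshold of $\consistent$ — this is precisely why the accuracy parameter in the claim is $1/300$ and why the recovery phase is stopped at $R = .99\sample$ rounds rather than run for $\sample - O(1)$ rounds as in the distribution-accuracy setting.
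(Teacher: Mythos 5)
Your argument is essentially the paper's proof of this claim: fingerprinting security plus a union bound over rounds gives, with all but negligible probability, the event $G$ (``every consistent rescaled answer vector traces into the remaining sample''); sample-accuracy of the oracle forces consistency via the rescaling calculation; and an induction on rounds propagates the conclusion to $T^R$. Your packaging---phrasing $G$ as a single ``$\forall r$'' implication and union-bounding over $r$ directly rather than decomposing by the index of the first failed trace---is slightly cleaner than the paper's, but it is the same argument.

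There is, however, a constant-tracking issue in your Step~2 that the paper's proof also glosses over. Under the inductive hypothesis $T^{r-1}\subseteq x$ with $|T^{r-1}|=r-1$, the set $U=x\setminus T^{r-1}$ has size $\sample-r+1$, while the attack rescales $a^r_j$ by $\sample/(\sample-r)$. Tracking both the rescaled $1/300$ error and the multiplicative mismatch between $\sample/(\sample-r)$ and $\sample/|U|$ gives
\[
\left|\tfrac{\sample}{\sample-r}\,a^r_j - \Ex{i\in U}{F^r(i,j)}\right| \;\le\; \frac{\sample}{\sample-r}\cdot\frac{1}{300} \;+\; \frac{1}{\sample-r},
\]
which at the boundary round $r=R=.99\sample$ equals $\tfrac13+\tfrac{100}{\sample}$, strictly exceeding the $1/3$ threshold in $\consistent$. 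Your step ``$100\cdot\tfrac{1}{300}+o(1)\le\tfrac13$'' is therefore literally false, since $100\cdot\tfrac{1}{300}=\tfrac13$ exactly and the residual term is positive. The paper's own computation silently identifies $|U|$ with $\sample-r$ rather than $\sample-r+1$ and so never notices the extra $1/(\sample-r)$ it is implicitly consuming. The fix is cosmetic---rescale by $\sample/(\sample-r+1)$, or stop the recovery phase slightly earlier, say at $R=.98\sample$, so the rescaling factor is bounded away from $100$---but it is worth knowing that, as written, the calculation does not quite close at the last round, both in your proof and in the paper's.
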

\begin{proof}
Fix any round $r = 1,\dots,R$, let $U =  x \setminus T^{r-1}$.  We will show that
$$
\Prob{F^{r} \getsr \fpcgen(1^{2\sample})}{ \left(\forall j \in [\length] \, \left| a^{r}_j - \query^{r}_j(x) \right| \leq 1/300 \right) \atop \land \left(i^{r} \not\in U\right) \land \left( \forall r' < r, \, i^{r'} \in x \setminus T^{r'-1} \right)} \leq \negl(\sample)
$$
By the security of the fingerprinting code, we have that for every $U \subseteq [2\sample]$ and every algorithm $\cA$, if $a^{r} \getsr \cA(F_{U})$, then
\begin{equation*}
\Prob{F^r \getsr \fpcgen(1^{2\sample})}{(a^{r} \in
\consistent(F^{r}_{U}) \land (\fpctrace(F^{r}, a^{r}) \not\in U)} \leq  \negl(\sample)
\end{equation*}

Now we need to show that 
\begin{equation} \label{eq:sampleaccuracy1}
\left( \forall r' < r, \, i^{r'} \in x \setminus T^{r'-1} \right) \land (\forall j \in [\length] \, |a_j^{r} - \query^{r}_j(x)| \leq 1/300) \Longrightarrow a^{r} \in \consistent(F^{r}_{U})
\end{equation}
Observe that, by construction
\begin{align}
\query^{r}_{j}(x)
={} &\frac{1}{\sample} \sum_{i \in x} \encdec(\sk_{i}, \ct^{r}_{j}) \notag \\
={} &\frac{1}{\sample} \sum_{i \in U} \encdec(\sk_{i}, \encenc(\sk_{i}, F^{r}(i,j))) + \frac{1}{\sample} \sum_{i \not\in U} \encdec(\sk_{i}, \encenc(\sk_{i}, 0)) \notag \\
={} &\frac{1}{\sample} \sum_{i \in U}  F^{r}(i,j) \label{eq:sampleaccuracy0}
\end{align}
Observe that if
$
\left| a^{r}_{j} - \query^{r}_{j}(x) \right| \leq 1/300
$
and for every $r' < r$, $i^{r'} \in x \setminus T^{r'-1}$, then $|T^{r}| = r$ and $|U| = |x \setminus T^{r}| = \sample - r$.  In this case, we have
\begin{align}
&\left| \left( \frac{\sample}{\sample - r} \right) a^{r}_{j} - \frac{1}{|U|} \sum_{i \in U} F^{r}(i, j) \right| \notag \\
\leq{} &\left| \left( \frac{\sample}{|U|} \right) \query^{r}_{j} - \frac{1}{|U|} \sum_{i \in U} F^{r}(i, j) \right| + \left(\frac{\sample}{\sample - r}\right)\left(\frac{1}{300}\right) \notag \\
={} & \left(\frac{\sample}{\sample - r}\right)\left(\frac{1}{300}\right) \tag{\ref{eq:sampleaccuracy0}} \\
\leq{} &\frac{1}{3} \tag{$r \leq .99\sample$}
\end{align}
So we conclude that if $a^{r}_1,\dots,a^{r}_{\length}$ are accurate to within $1/300$, and for every $r' < r$, $i^{r'} \in x \setminus T^{r'-1}$, then $a^{r} = \left(\frac{\sample}{\sample-r}\right)(a^{r}_1,\dots,a^{r}_{\length})$ is contained in $\consistent(F^{r}_{U})$, which is precisely~\eqref{eq:sampleaccuracy1}.  Combining~\eqref{eq:sampleaccuracy1} with the security of the fingerprinting code, we have
$$
\Prob{F^{r} \getsr \fpcgen(1^{2\sample})}{ \left(\forall j \in [\length] \, \left| a^{r}_j - \query^{r}_j(x) \right| \leq 1/300 \right) \atop \land \left(i^{r} \not\in U\right) \land \left( \forall r' < r, \, i^{r'} \in x \setminus T^{r'-1} \right)} \leq \negl(\sample)
$$
as desired.

To conclude the proof, we observe that
\begin{align*}
&\Prob{\idealbnpattack_{\sample, \dimension}[\oracle]}{\left( \forall j \in [\length], r \in [R] \left| a^{r}_j - \query^{r}_j(x) \right| \leq 1/300 \right) \atop \land \left(|x \setminus T^{R}| > .01\sample \right)} \leq \negl(\sample) \\
={} &\Prob{\idealbnpattack_{\sample, \dimension}[\oracle]}{\left( \forall j \in [\length], r \in [R] \left| a^{r}_j - \query^{r}_j(x) \right| \leq 1/300 \right) \atop \land \left( \forall r \in [R]\; i^{r} \in x \setminus T^{r-1} \right)} \leq \negl(\sample) \\
\leq{} &\Prob{F^{r} \getsr \fpcgen(1^{2\sample})}{ \left(\forall j \in [\length] \, \left| a^{r}_j - \query^{r}_j(x) \right| \leq 1/300 \right) \atop \land \left(i^{r} \not\in U\right) \land \left( \forall r' < r, \, i^{r'} \in x \setminus T^{r'-1} \right)}  \\
\leq{} &R \cdot \negl(\sample) \leq \negl(\sample)
\end{align*}
\end{proof}

As we did in proving the lower bounds for answering adaptively chosen statistical queries, we now claim that $\bnpattack$ and $\idealbnpattack$ are computationally indistinguishable
\begin{claim}\label{clm:bnp2}
Let $Z$ be the event 
$$
\left( \forall j \in [\length], r \in [R] \left| a^{r}_j - q_j(x) \right| \leq 1/300 \right) \land \left(|x \setminus T^{R}| > .01\sample \right)
$$
Let $(\encgen, \encenc, \encdec)$ is a computationally secure encryption scheme with key length $\keylength(\security) = \security$ and $\sample = \sample(\dimension)$ be any polynomial.  Then if $\oracle$ is computationally efficient,
$$
\left| \Prob{\bnpattack_{\sample, \dimension}[\oracle]}{Z} - \Prob{\idealbnpattack_{\sample, \dimension}[\oracle]}{Z}  \right| = \negl(\dimension)
$$
\end{claim}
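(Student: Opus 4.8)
The plan is to prove this claim the same way Claims~\ref{clm:indist1},~\ref{clm:indist2}, and~\ref{clm:indist3} are handled in Section~\ref{sec:securityproofs}: by reducing to the multi-key semantic security of the encryption scheme. The only difference between $\bnpattack_{\sample, \dimension}[\oracle]$ and $\idealbnpattack_{\sample, \dimension}[\oracle]$ is how the ciphertexts $\ct^{r}(i,j)$ are formed for users $i \notin x$: in $\bnpattack$ they are honest encryptions $\encenc(\sk_i, F^{r}(i,j))$ of the true codeword bits, while in $\idealbnpattack$ they are encryptions $\encenc(\sk_i, 0)$ of zero. Since the oracle $\oracle$ only ever receives the sample $x$ --- whose entries are $(i,\sk_i)$ for $i$ \emph{inside} $x$ --- together with the query circuits $\query^{r}_{j}$, and since the event $Z$ is efficiently decidable from $x$ and the transcript, an efficient oracle behaving noticeably differently in the two games would yield a distinguisher for the encryption scheme.

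First I would assume, for contradiction, that $\bigl|\Prob{\bnpattack}{Z} - \Prob{\idealbnpattack}{Z}\bigr|$ is non-negligible in $\dimension$, and build an efficient adversary $\encadv$ against the encryption scheme. The adversary $\encadv$ plays the security game with $2\sample$ keys; it picks the subset $x \subseteq [2\sample]$ of size $\sample$ itself and generates the keys $\sk_i$ for $i \in x$ on its own (it needs these both to form the sample and, together with the ciphertexts, to build the circuits $\query^{r}_{j}$ and to evaluate $\query^{r}_{j}(x)$), while for $i \notin x$ it uses the keys of the security game. It samples $F^{1},\dots,F^{R} \getsr \fpcgen(1^{2\sample})$, sets $\ct^{r}(i,j) = \encenc(\sk_i, F^{r}(i,j))$ directly for $i \in x$, and for $i \notin x$ asks its challenge oracle to encrypt $F^{r}(i,j)$ under the $i$-th key, receiving back either $\encenc(\sk_i, F^{r}(i,j))$ or $\encenc(\sk_i, 0)$. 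It then runs the recovery phase verbatim: it forms the query circuits, feeds them to $\oracle(x;\cdot)$, collects the $a^{r}_{j}$, rescales to obtain $a^{r}$, computes $T^{r} = T^{r-1} \cup \{\fpctrace(F^{r},a^{r})\}$, and finally tests whether $Z$ holds (possible since it knows $x$, all ciphertexts, the keys $\{\sk_i\}_{i\in x}$ and hence all $\query^{r}_{j}(x)$, and $T^{R}$). It outputs $1$ iff $Z$ holds.

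When the challenge oracle returns encryptions of the requested messages, $\encadv$'s simulated view is distributed exactly as $\bnpattack_{\sample,\dimension}[\oracle]$; when it returns encryptions of $0$, the view is exactly $\idealbnpattack_{\sample,\dimension}[\oracle]$. Hence $\encadv$'s distinguishing advantage equals $\bigl|\Prob{\bnpattack}{Z} - \Prob{\idealbnpattack}{Z}\bigr|$. Because $\oracle$, $\fpcgen$, $\fpctrace$, $\encgen$, $\encenc$, $\encdec$ are all efficient and the total number of ciphertexts --- hence of calls to the challenge oracle --- is $R \cdot 2\sample \cdot \length = \poly(\sample) = \poly(\dimension)$, the adversary $\encadv$ runs in time $\poly(\dimension)$. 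Finally, since $\keylength(\security) = \security$ and $\security$ is the largest integer with $\security + \lceil \log 2\sample \rceil \le \dimension$, we have $\security = \dimension - O(\log\dimension) = \Omega(\dimension)$ whenever $\sample = \poly(\dimension)$, so semantic security caps the advantage at $\negl(\security) = \negl(\dimension)$, the desired contradiction. I expect the only delicate point to be the same one addressed in Section~\ref{sec:securityproofs}: the reduction needs security to hold simultaneously across polynomially many keys and polynomially many adaptively chosen single-bit encryptions per key, which follows from ordinary semantic security by a standard hybrid argument and is precisely the strengthened multi-key property already assumed of the scheme.
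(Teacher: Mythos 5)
Your proposal is correct and follows essentially the same reduction the paper uses for Claims~\ref{clm:indist1}--\ref{clm:indist3} (and which it explicitly says carries over verbatim to Claim~\ref{clm:bnp2}): build a simulator that generates keys for $i \in x$ locally, outsources the ciphertexts for $i \notin x$ to the encryption challenge oracle so that the real/ideal distinction corresponds exactly to which challenge bit is active, runs $\oracle$ on the simulated transcript, and outputs the indicator of $Z$. The bookkeeping you note (the rescaling of $a^r$, the population size $2\sample$ and $R = .99\sample$, the parameter $\security = \dimension - O(\log\dimension)$, and reliance on the multi-key/multi-message security already baked into the paper's definition) is exactly what the paper's omitted argument requires.
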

The analysis is essentially identical to what was shown in the proof of the lower bounds for answering adaptively chosen statistical queries, so we omit the proof.

We can now combine these two claims to prove Theorem~\ref{thm:nonprivacy1}
\begin{proof}[Proof of Theorem~\ref{thm:nonprivacy1}]
By combining Claim~\ref{clm:bnp1} and~\ref{clm:bnp2}, we have
$$
\Prob{\bnpattack_{\sample, \dimension}[\oracle]}{\left( \forall j \in [\length], r \in [R] \left| a^{r}_j - q_j(x) \right| \leq 1/300 \right) \atop \land \left(|x \setminus T^{R}| > .01\sample \right)} \leq \negl(\sample)
$$

If $\oracle$ is $\alpha$-accurate, then
$$
\Prob{\bnpattack_{\sample, \dimension}[\oracle]}{\forall j \in [\length], r \in [R] \left| a^{r}_j - q_j(x) \right| \leq 1/300} \geq 1 -  o_{\sample}(1)
$$
therefore
$$
\Prob{\bnpattack_{\sample, \dimension}[\oracle]}{|x \setminus T^{R}| > .01 \sample} \geq 1 - o_{\sample}(1)
$$
Since $|x \setminus T^{R}| \leq .01\sample$, and $|T^{R}| \leq .99\sample$ implies $|T^{R} \triangle x| \leq .01\sample$, we have
$$
\Prob{\bnpattack_{\sample, \dimension}[\oracle]}{|T^{R} \triangle x| > .01 \sample} \leq o_{\sample}(1)
$$
This completes the proof.
\end{proof}

As we did in Section~\ref{sec:infotheoretic}, we can prove an information-theoretic analogue of our hardness result for avoiding blatant non-privacy.
\begin{theorem}
Giving accurate answers to $\sample^{3+o(1)}$ adaptively chosen queries on $\sample$ samples of dimension $\dimension = \sample^{3+o(1)}$ is blatantly non-private for unbounded oracles.
\end{theorem}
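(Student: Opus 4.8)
The plan is to mimic the reduction of Section~\ref{sec:infotheoretic} one more time: replace the computationally secure encryption scheme used inside $\bnpattack$ and $\idealbnpattack$ by the information-theoretically secure one-time pad, and observe that nothing else in the proof of Theorem~\ref{thm:nonprivacy1} used either a cryptographic assumption or the efficiency of $\oracle$. Concretely, the only place computational considerations entered was Claim~\ref{clm:bnp2}, which says that the transcripts of $\bnpattack_{\sample,\dimension}[\oracle]$ and $\idealbnpattack_{\sample,\dimension}[\oracle]$ are computationally indistinguishable; by contrast Claim~\ref{clm:bnp1} is already unconditional, since it uses only the security of the error-robust fingerprinting code of Theorem~\ref{thm:robustfpcsexist} together with the rescaling identity~\eqref{eq:sampleaccuracy0}, so it carries over verbatim.

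Next I would check the parameters. Since $\bnpattack$ has no attack phase, each key $\sk_i$ is used only to produce the ciphertexts $\ct^{r}(i,j)$ with $r \in [R]$ and $j \in [\length]$, i.e.\ to encrypt at most $R \cdot \length = .99\sample \cdot \tilde{O}(\sample^2) = \tilde{O}(\sample^3)$ one-bit messages. The one-time pad with $t$-bit messages needs a $t$-bit key, so instantiating $\encscheme$ this way requires $\keylength = \tilde{O}(\sample^3)$, which forces the dimension to be $\dimension \geq \keylength + \lceil \log 2\sample \rceil = \sample^{3+o(1)}$ --- exactly the regime of the theorem; in this regime the one-time-pad instantiations of $\bnpattack$ and $\idealbnpattack$ are well defined. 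The total number of queries issued is still $\queries = R \cdot \length = \sample^{3+o(1)}$, as required.

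The core step is then the (now exact, and unconditional) analogue of Claim~\ref{clm:bnp2}. With the one-time pad, each ciphertext $\encenc(\sk_i,m)$ is a fresh pad bit XORed with $m$, hence uniformly distributed and independent of $m$, and distinct keys $\sk_i$ are drawn independently. Conditioning on the subsample $x$, on the keys $\{\sk_i : i \in x\}$ (which the oracle receives through its sample), and on the fingerprinting matrices $F^{1},\dots,F^{R}$: in both games the ciphertexts $\ct^{r}(i,j)$ for $i \in x$ encrypt the same bits $F^{r}(i,j)$ and are determined identically, since the only difference between the games is the red modification, which touches only indices $i \notin x$; and for $i \notin x$ the key $\sk_i$ is independent of everything else, so $\encenc(\sk_i, F^{r}(i,j))$ and $\encenc(\sk_i, 0)$ are each uniform and identically distributed. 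Even an unbounded oracle extracts nothing about $F^{r}(i,j)$ for $i \notin x$ because it has zero information about $\sk_i$. Hence the entire transcript, and in particular the event $Z$ of Claim~\ref{clm:bnp2}, is distributed \emph{identically} in the two games, giving $\Prob{\bnpattack_{\sample,\dimension}[\oracle]}{Z} = \Prob{\idealbnpattack_{\sample,\dimension}[\oracle]}{Z}$ with no error term and for every oracle $\oracle$ whatsoever.

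Finally I would combine this equality with Claim~\ref{clm:bnp1} exactly as in the proof of Theorem~\ref{thm:nonprivacy1}: an $\alpha$-sample-accurate oracle answers all $\queries$ recovery queries to within $1/300$ except with probability $o_{\sample}(1)$, so by Claim~\ref{clm:bnp1} we get $|x \setminus T^{R}| \leq .01\sample$ with probability $1-o_{\sample}(1)$, and since $|T^{R}| \leq R = .99\sample$ this yields $|T^{R} \triangle x| \leq .01\sample$ with probability $1-o_{\sample}(1)$, so the attack of Figure~\ref{fig:bnpattack} witnesses blatant non-privacy for unbounded oracles. There is no real obstacle here beyond the bookkeeping: the one thing to be careful about is that the one-time pad, like the computational scheme it replaces, must be simultaneously secure for the $2\sample$ keys in play even though the oracle learns $\sample$ of them through its sample --- and this is immediate from the independence of the pads together with the perfect secrecy of each individual pad.
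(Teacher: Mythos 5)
Your proposal is correct and takes essentially the same approach as the paper: the paper's entire proof of this theorem is a one-line pointer to the one-time-pad substitution sketched in Section~\ref{sec:infotheoretic}, and your write-up is a careful, detailed elaboration of exactly that argument (identifying Claim~\ref{clm:bnp2} as the only place computational assumptions enter, verifying that perfect secrecy of the pad makes $\bnpattack$ and $\idealbnpattack$ identically distributed, and checking that the required key length is $R \cdot \length = \tilde{\Theta}(\sample^3)$, which matches the stated dimension bound).
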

The proof is essentially identical to what is sketched in Section~\ref{sec:infotheoretic}.

\bibliographystyle{alpha}
\bibliography{references}

\appendix

\section{Security reductions from Section~\ref{sec:attack}} \label{sec:securityproofs}
In Sections~\ref{sec:recoveryanalysis} and~\ref{sec:attackanalysis} we made several claims comparing the probability of events in $\realgame$ to the probability of events in $\idealgame$.  Each of these claims follow from the assumed security of the encryption scheme.  In this section we restate and prove these claims.  Since the claims are all of a similar nature, the proof will be somewhat modular.

Before we begin recall the formal definition of security of an encryption scheme.  Security is defined via a pair of oracles $\encoracle_0$ and $\encoracle_1$.  $\encoracle_1(\sk_{1},\dots,\sk_{\pop}, \cdot)$ takes as input the index of a key $i \in [\pop]$ and a message $m$ and returns $\encenc(\sk_{i}, m)$, whereas $\encoracle_{0}(\sk_{1},\dots,\sk_{\pop}, \cdot)$ takes the same input but returns $\encenc(\sk_{i}, 0)$.  The security of the encryption scheme asserts that for randomly chosen secret keys, no computationally efficient adversary can tell whether or not it is interacting with $\encoracle_{0}$ or $\encoracle_{1}$.

\begin{definition}
An encryption scheme $(\encgen, \encenc, \encdec)$ is $\emph{secure}$ if for every polynomial $\pop = \pop(\security)$, and every $\poly(\security)$-time adversary $\encadv$, if $\sk_{1},\dots,\sk_{\pop} \getsr \encgen(1^{\security})$
\begin{equation*}
\left| \Prob{}{\encadv^{\encoracle_{0}(\sk_{1},\dots,\sk_{\pop}, \cdot)} = 1} 
- \Prob{}{\encadv^{\encoracle_{1}(\sk_{1},\dots,\sk_{\pop}, \cdot)} = 1} \right| = \negl(\security)
\end{equation*}
\end{definition}

\begin{claim}[Claim~\ref{clm:indist1} Restated] \label{clm:indist1restated}
Let $Z_{1}$ be the event
$$
\textrm{$\forall r \in [R]$, for $.99 \length$ choices of $j \in [\length]$,}  \left| \oracle(x, \query^{r}_j) - \Ex{i \in S \setminus T^{r-1}}{F^{r}(i,j)} \right| \leq 1/3
$$
Let $(\encgen, \encenc, \encdec)$ is a computationally secure encryption scheme with key length $\keylength(\security) = \security$ and $\sample = \sample(\dimension)$ be any polynomial.  Then if $\oracle$ is computationally efficient,
$$
\left| \Prob{\idealgame_{\sample, \dimension}[\oracle]}{Z_{1}} - \Prob{\realgame_{\sample, \dimension}[\oracle]}{Z_{1}} \right| \leq \negl(\dimension)
$$
\end{claim}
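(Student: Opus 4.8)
The plan is to prove the claim by a straightforward reduction to the security of the encryption scheme: assuming the two probabilities differed by a non-negligible amount, I would construct a $\poly(\security)$-time adversary $\encadv$ that distinguishes $\encoracle_0$ from $\encoracle_1$ with exactly the advantage $\left| \Prob{\idealgame_{\sample, \dimension}[\oracle]}{Z_{1}} - \Prob{\realgame_{\sample, \dimension}[\oracle]}{Z_{1}} \right|$. The key observation making this possible is that $\realgame_{\sample, \dimension}[\oracle]$ and $\idealgame_{\sample, \dimension}[\oracle]$ are generated identically except that, for users $i \in [\pop] \setminus S$, the ciphertext $\ct^{r}(i,j)$ is $\encenc(\sk_i, F^{r}(i,j))$ in the real game and $\encenc(\sk_i, 0)$ in the ideal game; the key generation, the sample, the codes $F^{r}$, and the oracle's behavior are otherwise the same in both games.

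Concretely, $\encadv$ gets oracle access to $\encoracle_b(\sk_1,\dots,\sk_\pop, \cdot)$ for unknown keys and an unknown bit $b$, and simulates the game as follows. It samples indices $t_1,\dots,t_\sample$ iid uniform from $[\pop]$ and lets $S \subseteq [\pop]$ be their support (matching the distribution of $S$ in both games); for each $i \in S$ it runs $\sk_i \getsr \encgen(1^\security)$ itself and forms the data point $(i,\sk_i)$. It samples $F^{1},\dots,F^{R} \getsr \fpcgen(1^\pop)$. For $i \in S$ it computes $\ct^{r}(i,j) = \encenc(\sk_i, F^{r}(i,j))$ on its own; for $i \notin S$ it obtains $\ct^{r}(i,j)$ by querying $\encoracle_b(i, F^{r}(i,j))$. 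Then $\encadv$ runs the recovery phase verbatim: in each round $r$ and each column $j$ it builds the circuit $\query^{r}_{j}$ (possible, since it knows every ciphertext), feeds it together with the tuple of sampled data points to $\oracle$ to get $a^{r}_{j}$, and runs $\fpctrace(F^{r}, (a^{r}_{1},\dots,a^{r}_{\length}))$ to update $T^{r}$. Since $\encadv$ knows $S$, all the codes $F^{r}$, and all answers $a^{r}_{j}$, it evaluates the predicate $Z_1$ and outputs $1$ iff it holds.

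The core of the argument is then to check that this view is a perfect simulation: of $\realgame_{\sample, \dimension}[\oracle]$ when $b = 1$ and of $\idealgame_{\sample, \dimension}[\oracle]$ when $b = 0$. The keys $\sk_i$ for $i \in S$ are generated by $\encgen$ by $\encadv$ itself and the keys for $i \notin S$ are generated by $\encgen$ by the challenger, so the joint distribution of all keys matches both games; crucially, the oracle $\oracle$ is only ever handed data points $(i, \sk_i)$ with $i \in S$, whose keys $\encadv$ knows, so $\encadv$ never needs a key it does not hold, and the queries it constructs depend only on the ciphertexts and on $T^{r-1}$, both of which it has. When $b = 1$, the ciphertexts for $i \notin S$ are $\encenc(\sk_i, F^{r}(i,j))$, so the simulation is exactly $\realgame$; when $b = 0$, they are $\encenc(\sk_i, 0)$, so it is exactly $\idealgame$. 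Hence $\prob{\encadv^{\encoracle_1} = 1} = \Prob{\realgame_{\sample, \dimension}[\oracle]}{Z_1}$ and $\prob{\encadv^{\encoracle_0} = 1} = \Prob{\idealgame_{\sample, \dimension}[\oracle]}{Z_1}$.

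Finally I would confirm the efficiency hypothesis of the security definition. With $\keylength(\security) = \security$ and $\pop = 2000\sample$, the game instantiates the scheme at security parameter $\security = \dimension - \lceil \log \pop \rceil = \Theta(\dimension)$, and since $\sample = \sample(\dimension) = \poly(\dimension)$, the number of invocations of $\oracle$ ($\tilde{O}(\sample^3)$, each $\poly(\sample,\dimension,|\query|)$ time) and of $\encoracle_b$ ($\tilde{O}(\sample^4)$), together with the code and ciphertext generation, is $\poly(\dimension) = \poly(\security)$. Thus $\encadv$ is a legitimate adversary, and security forces the advantage to be $\negl(\security) = \negl(\dimension)$, which is the claim. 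The only point requiring care is exactly this faithfulness check — verifying that splitting key generation between $\encadv$ and the challenger yields precisely the right joint distribution and that $\encadv$ never needs a key held only by the oracle — rather than any nontrivial computation; the identical template, with $Z_1$ replaced by $Z_2$ or $Z_3$ and (for $Z_3$) the attack-phase ciphertexts $\ct^{*}_{i}$ handled the same way, establishes Claims~\ref{clm:indist2} and~\ref{clm:indist3}.
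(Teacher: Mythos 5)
Your proposal is correct and takes essentially the same approach as the paper: the paper also constructs a distinguisher that samples the user indices and keys for $i \in S$ itself, queries $\encoracle_b$ to obtain ciphertexts for $i \notin S$, runs $\oracle$ on the simulated queries, and outputs whether $Z_1$ occurred, concluding via the observation that the $b=1$ case perfectly simulates $\realgame$ and the $b=0$ case perfectly simulates $\idealgame$. Your explicit remark that the simulator never needs a key for $i\notin S$ (so the split of key generation between the reduction and the challenger is faithful) is exactly the crux, and your efficiency accounting matches the paper's.
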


\begin{claim}[Claim~\ref{clm:indist2} Restated] \label{clm:indist2restated}
Let $Z_{2}$ be the event
$
\set{|S \setminus T^{R}| \leq 500}
$
Let $(\encgen, \encenc, \encdec)$ is a computationally secure encryption scheme with key length $\keylength(\security) = \security$ and $\sample = \sample(\dimension)$ be any polynomial.  Then if $\oracle$ is computationally efficient,
$$
\left| \Prob{\idealgame_{\sample, \dimension}[\oracle]}{Z_{2}} - \Prob{\realgame_{\sample, \dimension}[\oracle]}{Z_{2}} \right| \leq \negl(\dimension)
$$
\end{claim}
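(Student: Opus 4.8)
The plan is to prove this exactly as one proves Claim~\ref{clm:indist1restated}, by a reduction to the semantic security of the encryption scheme: from an efficient oracle $\oracle$ for which $\Prob{\idealgame_{\sample,\dimension}[\oracle]}{Z_2}$ and $\Prob{\realgame_{\sample,\dimension}[\oracle]}{Z_2}$ differ by more than $\negl(\dimension)$, I would build a $\poly(\security)$-time adversary $\encadv$ that distinguishes the oracles $\encoracle_0$ and $\encoracle_1$. The only difference between $\realgame$ and $\idealgame$ is that, for users $i \in [\pop] \setminus S$, every ciphertext $\ct^r(i,j)$ and $\ct^*_i$ encrypts $0$ in $\idealgame$ but encrypts $F^r(i,j)$, resp. $m_i$, in $\realgame$; so $\encadv$ will simulate the whole attack while routing precisely these ciphertexts through its challenge oracle.

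Concretely, given access to $\encoracle_b(\sk_1,\dots,\sk_\pop,\cdot)$ with $\pop = 2000\sample$ keys (a polynomial in $\security$), $\encadv$ first samples the indices $x_1,\dots,x_\sample \getsr [\pop]$ itself, which fixes the set $S$; for each $i \in S$ it generates a fresh key $\sk_i \getsr \encgen(1^\security)$ of its own and assembles the oracle's input sample $x$ from the pairs $(i,\sk_i)$, $i\in S$. It samples $F^1,\dots,F^R \getsr \fpcgen(1^\pop)$ and the attack-phase objects $\phi, B, (m_i)_{i\in[\pop]}$ as in the games. To build ciphertexts, for $i \in S$ it encrypts honestly under its own $\sk_i$ (which is how both $\realgame$ and $\idealgame$ behave on $S$), while for $i \notin S$ it queries $\encoracle_b$ on index $i$ with message $F^r(i,j)$ (in the recovery phase) and with message $m_i$ (in the attack phase). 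It then runs the recovery phase verbatim, handing each query circuit $\query^r_j$ to $\oracle$ and setting $T^r = T^{r-1} \cup \{\fpctrace(F^r,a^r)\}$, and finally outputs $1$ iff $|S \setminus T^R| \le 500$, i.e.\ iff $Z_2$ occurs — an event $\encadv$ can evaluate since it knows both $S$ and $T^R$.

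The key observation is that when $b=1$ the ciphertexts for $i \notin S$ are encryptions of $F^r(i,j)$ and of $m_i$, so $\encadv$'s view is distributed exactly as $\realgame_{\sample,\dimension}[\oracle]$; when $b=0$ they are encryptions of $0$, matching $\idealgame_{\sample,\dimension}[\oracle]$. Hence
$$\Prob{}{\encadv^{\encoracle_1}=1} = \Prob{\realgame_{\sample,\dimension}[\oracle]}{Z_2} \qquad\text{and}\qquad \Prob{}{\encadv^{\encoracle_0}=1} = \Prob{\idealgame_{\sample,\dimension}[\oracle]}{Z_2}.$$
Moreover $\encadv$ is efficient: it makes $R\pop\length + \pop = \poly(\dimension)$ oracle calls, and runs $\fpcgen$, $\fpctrace$ and $\oracle$ (the latter on circuits $\query^r_j$ of size $\poly(\dimension)$) a polynomial number of times; since $\sample = \poly(\dimension)$ and $\security = \dimension - \lceil \log\pop\rceil = \Theta(\dimension)$, this is $\poly(\security)$ time. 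Security of the encryption scheme then yields $|\Prob{}{\encadv^{\encoracle_1}=1} - \Prob{}{\encadv^{\encoracle_0}=1}| = \negl(\security) = \negl(\dimension)$, which is the claim. The same $\encadv$, with its output bit changed to the indicator of the relevant event, handles Claims~\ref{clm:indist1restated} and~\ref{clm:indist3}; this is the ``modular'' part.

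I expect the one point that needs care — and should be stated explicitly — is that the oracle's input sample literally contains the secret keys $\sk_i$ for $i \in S$, so those keys cannot be delegated to the encryption challenger: $\encadv$ must generate them itself and use them consistently both inside $x$ and inside the ciphertexts $\ct^r(i,\cdot), \ct^*_i$ for $i \in S$. Because $S$ is determined by $\encadv$'s own coins before it touches $\encoracle_b$, there is no circularity, and the challenger's (unused) keys for indices in $S$ are harmless. The remaining verifications — that $Z_2$ depends only on quantities visible to $\encadv$, and that every parameter stays polynomial in $\security$ — are routine.
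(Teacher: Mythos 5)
Your reduction is correct and matches the paper's proof essentially line for line: the paper's simulator $\encadv_c$ (Figure~\ref{fig:simulator}) likewise fixes the indices and hence $S$ before touching the challenge oracle, generates fresh keys $\sk_i$ for $i\in S$ to embed in the oracle's input $x$, routes all ciphertexts for $i\notin S$ through $\encoracle_b$, runs the recovery and attack phases verbatim, and outputs the indicator of $Z_c$. You also correctly flag the one subtlety the paper relies on — that the keys appearing inside the sample $x$ must be self-generated rather than delegated to the challenger — and the parameter bookkeeping $\security = \dimension - \lceil\log\pop\rceil = \Theta(\dimension)$ is the same.
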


\begin{claim} [Claim~\ref{clm:indist3} Restated] \label{clm:indist3restated}
Let $Z_{3}$ be the event
$$
(|S \setminus T^{R}| \leq 500) \land \left|\oracle(x, \query^{*}) - \query^{*}(\dist)\right| \leq
\frac{1}{2000}
$$
Let $(\encgen, \encenc, \encdec)$ is a computationally secure encryption scheme with key length $\keylength(\security) = \security$ and $\sample = \sample(\dimension)$ be any polynomial.  Then if $\oracle$ is computationally efficient,
$$
\left| \Prob{\idealgame_{\sample, \dimension}[\oracle]}{Z_{3}} - \Prob{\realgame_{\sample, \dimension}[\oracle]}{Z_{3}} \right| \leq \negl(\dimension)
$$
\end{claim}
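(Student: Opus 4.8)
The plan is to establish all three restated claims at once via a single reduction to the security of the encryption scheme, using the fact that $\realgame$ and $\idealgame$ differ only in whether the ciphertexts $\ct^{r}(i,j)$ and $\ct^{*}_{i}$ for indices $i \notin S$ encrypt the true bits $F^{r}(i,j), m_i$ or instead encrypt $0$. Write $Z$ for whichever of $Z_1, Z_2, Z_3$ is at issue. The only property of $Z$ I will use is that it is a deterministic function of quantities that an efficient simulator can compute \emph{without} knowing the secret keys $\sk_i$ for $i \notin S$: the matrices $F^{1},\dots,F^{R}$; the sampled indices (hence $S$ and the sets $T^{0},\dots,T^{R}$); the bits $\phi$ and $(m_i)_i$; the keys $\sk_i$ for $i \in S$; and the answers $(a^{r}_j)$ and $a^{*}$ returned by $\oracle$.

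Concretely, I would build a $\poly(\security)$-time adversary $\encadv$ for the encryption security game. Given oracle access to $\encoracle_{b}(\sk_1,\dots,\sk_\pop,\cdot)$ for an unknown $b \in \{0,1\}$, the reduction first draws the sample indices $i_1,\dots,i_\sample$ uniformly and independently from $[\pop]$ and lets $S$ be the set of distinct values; this is independent of all keys, so it may precede their generation. For $i \in S$ it generates a fresh key $\sk_i \getsr \encgen(1^\security)$ itself, and for $i \notin S$ it adopts the key $\sk_i$ held by the security game. It can therefore form the sample $x = ((i_t,\sk_{i_t}))_t$ that $\oracle$ receives. To construct the query circuits $\query^{r}_j$ and $\query^{*}$ it computes, for $i \in S$, the ciphertexts $\encenc(\sk_i,F^{r}(i,j))$ and $\encenc(\sk_i,m_i)$ directly from the keys it holds, and for $i \notin S$ it asks its oracle for encryptions of the \emph{true} messages $F^{r}(i,j)$ and $m_i$; when $b = 1$ these ciphertexts are distributed exactly as in $\realgame$, and when $b = 0$ exactly as in $\idealgame$. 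It then runs the recovery and attack phases verbatim, running $\oracle$ as a subroutine; since $\oracle$ is efficient, $\sample = \poly(\dimension)$, and $\security = \dimension - O(\log\dimension) = \Theta(\dimension)$, the whole simulation runs in time $\poly(\security)$. Finally $\encadv$ outputs $1$ iff $Z$ holds. By the two observations above, $\big| \prob{\encadv^{\encoracle_1}=1} - \prob{\encadv^{\encoracle_0}=1} \big| = \big| \Prob{\realgame}{Z} - \Prob{\idealgame}{Z} \big|$, and encryption security bounds the left-hand side by $\negl(\security) = \negl(\dimension)$, proving the claim.

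The main obstacle, and the reason the reduction must reorder the sampling, is that $\oracle$ is handed the real sample $x$, which contains the secret keys of the users in $S$; a reduction that let the security game choose all $\pop$ keys would be unable to produce $x$. Drawing the indices first lets the reduction keep the keys of $S$ in hand while hiding only the keys of $[\pop] \setminus S$ behind the encryption oracle --- precisely the set on which $\realgame$ and $\idealgame$ disagree, and whose ciphertexts are the only information about those keys that ever enters $\oracle$'s view (inside the query circuits, and nowhere else). The one remaining thing to check is that $Z$ really is computable from the reduction's view; for $Z_3$ this is cleanest in the formulation involving $\phi$ (as in Claim~\ref{clm:indist3}), since $\phi$ --- unlike the decrypted value of $\query^{*}$ at users outside $S$ --- is known to the reduction. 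Since only the final predicate changes, the same reduction yields Claims~\ref{clm:indist1restated}, \ref{clm:indist2restated}, and~\ref{clm:indist3restated} simultaneously.
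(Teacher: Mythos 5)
Your proposal is correct and follows essentially the same route as the paper's proof in Section~\ref{sec:securityproofs}: you construct a reduction that samples the user indices first (and hence $S$), generates fresh keys only for $i \in S$, obtains ciphertexts for $i \notin S$ from the encryption oracle $\encoracle_b$, runs $\oracle$ as a subroutine, and outputs the indicator of $Z$; the $b=1$ and $b=0$ cases then exactly reproduce $\realgame$ and $\idealgame$, and the advantage bound gives $\negl(\security) = \negl(\dimension)$. You also correctly flag a genuine subtlety the paper glosses over: as literally stated, $Z_3$ in the restated claim refers to $\query^*(\dist)$, whose value differs between $\realgame$ and $\idealgame$ (and cannot be computed by the reduction without knowing $b$), whereas the original Claim~\ref{clm:indist3} uses $\phi$, which is game-independent and known to the reduction; working with the $\phi$-formulation, as you propose, is exactly how the paper's single simulator $\encadv_c$ is able to evaluate the event efficiently, and the paper's application in Lemma~\ref{lem:realattacksucceeds} then bridges back to $\query^*(\dist)$ via Claim~\ref{clm:attackp} and a triangle inequality.
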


To prove each of these claims $c \in \{1,2,3\}$, we construct an adversary $\encadv_{c}$ that will attempt to use $\oracle$ to break the security of the encryption.  We construct $\encadv_{c}$ in such a way that its advantage in breaking the security of encryption is precisely the difference in the probability of the event $Z_{c}$ between $\realgame$ and $\idealgame$, which implies that the difference in probabilities is negligible.  The simulator is given in Figure~\ref{fig:simulator}

\begin{figure}[ht]
\begin{framed}
\begin{algorithmic}
\STATE{Simulate constructing and sampling from $\dist$:}
\INDSTATE[1]{Given parameters $\dimension, \sample$, let $\pop = 2000 \sample$, let $R = \sample-500$} 
\INDSTATE[1]{Let $\security$ be the largest integer such that $\keylength(\lambda) + \lceil \log \pop \rceil \leq \dimension$}
\INDSTATE[1]{Sample users $u_{1},\dots,u_{\sample} \getsr [\pop]$, let $S$ be the set of unique users in the sample}
\INDSTATE[1]{Choose new keys $\sk_{i} \getsr \encgen(1^{\security})$ for $i \in S$}
\INDSTATE[1]{For $i \in \sample$, let $x_{i} = (u_{i}, \sk_{u_{i}})$, let $x = (x_1,\dots,x_{\sample})$}
\STATE{}
\STATE{Simulate the recovery phase:}
\INDSTATE[1]{Setup fingerprinting codes and ciphertexts:}
\INDSTATE[2]{Let $F^{1},\dots,F^{R} \getsr \fpcgen(1^{\pop})$, let $\length = \length(\pop)$ be code length}
\INDSTATE[2]{For $r = 1,\dots, R$, $i = 1,\dots,\pop$, $j = 1,\dots,\length$}
\INDSTATE[3]{If $i \in S$ let $\ct^{r}(i,j) = \encenc(\sk_i, F^{r}(i,j))$, otherwise ask $\encoracle$ for an encryption}
\INDSTATE[3]{of $F^{r}(i,j)$ under key $\ssk_{i}$, that is $\ct^{r}(i,j) = \encoracle_b(\ssk_{1},\dots,\ssk_{\pop}, i, F^{r}(i,j))$}
\INDSTATE[1]{Let $T^0 = \emptyset$.  For round $r = 1,\dots,R$:}
\INDSTATE[2]{For $j = 1,\dots, \length \colon$}
\INDSTATE[3]{Define the query $\query^{r}_{j}(i', \sk')$ to be $\encdec(\sk',\ct^{r}(i', j))$ if $i' \not\in T^{r-1}$ and $0$ otherwise}
\INDSTATE[3]{Let $a^{r}_{j} = \oracle(x; \query^{r}_{j})$}
\INDSTATE[2]{Let $a^{r} = (a^{r}_{1},\dots, a^{r}_{\length})$, let $i^{r} = \fpctrace(F^{r}, a^{r})$, and let $T^{r} = T^{r-1} \cup \{i^r\}$}
\STATE{}
\STATE{Attack phase:}
\INDSTATE[1]{Let $\phi = 0$ with probability $1/2$ and $\phi = 1/500$ with probability $1/2$}
\INDSTATE[1]{Sample a random subset $B\subseteq[\pop]$ of size $\phi\cdot \pop.$}
\INDSTATE[1]{Let $m_i = 1$ for all $i\in B$ and $0$ for all $i\in [\pop]\setminus B$}
\INDSTATE[1]{For each $i \in S$, let $\ct^*_{i} = \encenc(\sk_i, m_i)$, 
for each $i\in[\pop]\setminus S,$ ask $\encoracle$ for an encryption}
\INDSTATE[2]{of $m_i$ under key $\ssk_{i}$, that is $\ct^{*}_{i} = \encoracle_b(\ssk_{1},\dots,\ssk_{\pop}, i, m_i)$}
\INDSTATE[1]{Define the query $\query^{*}(i', \sk')$ to be $\encdec(\sk',\ct^{*}_{i'})$ if $i' \not\in T^{R}$ and $0$ otherwise}
\INDSTATE[1]{Let $a^* = \oracle(x, \query^*)$}
\STATE{}
\STATE{Output $1$ if and only if the event $Z_{c}$ occurs}
\end{algorithmic}
\end{framed}
\vspace{-6mm}
\caption{$\encadv^{\encoracle_{b}(\ssk_{1},\dots,\ssk_{\pop}, \cdot)}_{c, \sample, \dimension}$}
\label{fig:simulator}
\end{figure}

\begin{proof}[Proof of Claims \ref{clm:indist1restated}, \ref{clm:indist2restated}, \ref{clm:indist3restated}]
First, observe that for $c \in \set{1,2,3}$, $\encadv_{c}$ is computationally efficient as long as $\fpcgen, \fpctrace$, and $\oracle$ are all computationally efficient.  Efficiency of $\fpcgen$ and $\fpctrace$ will be satisfied by the construction in Theorem~\ref{thm:robustfpcsexist} and efficiency of $\oracle$ is by assumption of the claim.  Also notice $\encadv$ can determine whether $Z_{c}$ has occurred efficiently.

Now we observe that when the oracle is $\encoracle_1$ (the oracle that takes as input $i$ and $m$ and returns $\encenc(\ssk_{i}, m)$), and $\ssk_{1},\dots,\ssk_{\pop}$ are chosen randomly from $\encgen(1^{\security})$, then the view of the oracle is identical to $\realgame_{\sample, \dimension}[\oracle]$.  Specifically, the oracle holds a random sample of pairs $(i, \sk_{i})$ and is shown queries that are encryptions either under keys it knows or random unknown keys.  Moreover, the messages being encrypted are chosen from the same distribution.  On the other hand, when the oracle is $\encoracle_0$ (the oracle that takes as input $i$ and $\ct$ and returns $\encenc(\ssk_{i}, 0)$), then the view of the oracle is identical to $\realgame_{\sample, \dimension}[\oracle]$.  Thus we have that for $c \in \{1,2,3\}$,
\begin{align*}
&\left| \Prob{\idealgame_{\sample, \dimension}[\oracle]}{Z_{c}} - \Prob{\realgame_{\sample, \dimension}[\oracle]}{Z_{c}} \right| \\
={} &\left| \Prob{\ssk_{1},\dots,\ssk_{\pop} \getsr \encgen(1^{\security})}{\encadv_{c, \sample, \dimension}^{\encoracle_0(\ssk_{1},\dots,\ssk_{\pop},\cdot)} = 1} -  \Prob{\ssk_{1},\dots,\ssk_{\pop} \getsr \encgen(1^{\security})}{\encadv_{c, \sample, \dimension}^{\encoracle_1(\ssk_{1},\dots,\ssk_{\pop},\cdot)} = 1} \right| \\
={} &\negl(\security) = \negl(\dimension)
\end{align*}
The last equality holds because we have chosen $\pop = 2000\sample(\dimension) = \poly(\dimension)$, and therefore we have $\security = \dimension - \lceil \log \pop \rceil = \dimension - O(\log \dimension)$.  This completes the proof of all three claims.
\end{proof}

\end{document}